
\documentclass[12pt]{article}

\usepackage{natbib}
\usepackage{microtype}
\usepackage{graphicx}
\usepackage{subfigure}
\usepackage{booktabs} 
\usepackage{fullpage}


\usepackage[breaklinks,colorlinks,
            linkcolor=red,
            anchorcolor=blue,
            citecolor=blue
            ]{hyperref}




\usepackage{amsmath}
\usepackage{amssymb}
\usepackage{mathtools}
\usepackage{amsthm}

\usepackage[capitalize,noabbrev]{cleveref}

\theoremstyle{plain}
\newtheorem{theorem}{Theorem}[section]
\newtheorem{proposition}[theorem]{Proposition}
\newtheorem{lemma}[theorem]{Lemma}
\newtheorem{corollary}[theorem]{Corollary}
\theoremstyle{definition}
\newtheorem{definition}[theorem]{Definition}
\newtheorem{assumption}[theorem]{Assumption}
\theoremstyle{remark}
\newtheorem{remark}[theorem]{Remark}

\usepackage[textsize=tiny]{todonotes}

\usepackage{relsize}
\usepackage[utf8]{inputenc} 
\usepackage[T1]{fontenc}    
\usepackage{url}            
\usepackage{booktabs}       
\usepackage{amsfonts}       
\usepackage{nicefrac}       
\usepackage{microtype}      

\usepackage{smile}

\usepackage{bbm}
\usepackage{xcolor}
\usepackage{tikz}
\usetikzlibrary{calc}
\tikzset{stretch/.initial=1}
\usetikzlibrary{patterns}
\usetikzlibrary{arrows}

\newcommand{\brt}{\text{br}}
\newcommand{\normlambda}[1]{\left\lVert #1 \right\rVert_{\Lambda_h^{-1}}}
\newcommand{\abso}[1]{\left\lvert #1\right\rvert}
\def \T {\mathsf{T}}
\def\##1\#{\begin{align}#1\end{align}}
\def\$#1\${\begin{align*}#1\end{align*}}
\def \bellmanhat{\widehat{\mathbb{B}}_h }
\def \bellman {\mathbb{B}_h}
\def \expect {\mathbb{E}}
\def \pihat {\widehat{\pi}}

\def\valuefunhat{\widehat{V}}
\newcommand{\qfunhat}{\widehat{Q}}
\newcommand{\statep}{(s,a,b)}
\newcommand{\stateph}{(s_h,a_h,b_h)}
\newcommand{\statepht}{(s_h^{\tau},a_h^{\tau},b_h^{\tau})}
\def\expectgamei#1{\mathbb{E}_{\mathcal{D}\sim\mathcal{M_{\text{1}}}}\left[#1\right]}
\def\expectgameii#1{\mathbb{E}_{\mathcal{D}\sim\mathcal{M_{\text{2}}}}\left[#1\right]}
\newcommand{\normof}[1]{\left\lVert #1 \right\rVert }
\newcommand{\normop}[1]{\left\lVert #1 \right\rVert_{\text{op}} }
\newcommand{\bpc}{\overline{\pi},\overline{\nu}}

\def\algname{Pessimistic Minimax Value Iteration }

\def \d {\text{d}}
\def \mcs{\mathcal{S}}
\def \mcm{\mathcal{M}}
\def \mbp{\mathbb{P}}
\def \mcd{\mathcal{D}}
\def \nuhat{\widehat{\nu}}
\def \ri{\mathrm{RU}}
\newcommand\drawloop[4][]%
   {\draw[draw=red!75!black, shorten <=0pt, shorten >=0pt,#1]
      ($(#2)!\pgfkeysvalueof{/tikz/stretch}!(#2.#3)$)
      let \p1=($(#2.center)!\pgfkeysvalueof{/tikz/stretch}!(#2.north)-(#2)$),
          \n1= {veclen(\x1,\y1)*sin(0.5*(#4-#3))/sin(0.5*(180-#4+#3))}
      in arc [start angle={#3-90}, end angle={#4+90}, radius=\n1]%
   }

\title{\LARGE Pessimistic Minimax Value Iteration: Provably Efficient Equilibrium Learning from Offline Datasets}
\author{Han Zhong\thanks{The first three authors contributed equally.}~\thanks{Peking University. Email: \texttt{hanzhong@stu.pku.edu.cn}} \qquad Wei Xiong\thanks{The Hong Kong University of Science and Technology. Email: \texttt{wxiongae@connect.ust.hk}}  \qquad Jiyuan Tan\thanks{Fudan University. Email: \texttt{jiyuantan19@gmail.com}}  \qquad Liwei Wang\thanks{Peking University. Email: \texttt{wanglw@cis.pku.edu.cn}} \\ Tong Zhang\thanks{The Hong Kong University of Science and Technology. Email: \texttt{tongzhang@tongzhang-ml.org}} \qquad Zhaoran Wang\thanks{Northwestern University. Email: \texttt{zhaoranwang@gmail.com}} \qquad  Zhuoran Yang\thanks{Yale University. Email: \texttt{zhuoran.yang@yale.edu}}}
\date{}
\begin{document}
\maketitle




\begin{abstract}
  We study episodic two-player zero-sum Markov games (MGs) in the offline setting, where the goal is to find an approximate Nash equilibrium (NE) policy pair based on a dataset collected a priori. When the dataset does not have uniform coverage over all policy pairs, finding an approximate NE involves challenges in three aspects: (i) distributional shift between the behavior policy and the optimal policy, (ii) function approximation to handle large state space, and (iii)  minimax optimization for equilibrium solving. 
  We propose a pessimism-based algorithm, dubbed as pessimistic minimax value iteration (PMVI), which overcomes the distributional shift by constructing pessimistic estimates of the value functions for both players and outputs a policy pair by solving NEs based on the two value functions. Furthermore, we establish a data-dependent upper bound on the suboptimality which recovers a sublinear rate without the assumption on uniform coverage of the dataset. 
We also prove an information-theoretical lower bound, which  suggests that the data-dependent term in the upper bound is intrinsic.
  Our theoretical results also highlight a notion of ``relative uncertainty'', which characterizes the necessary and sufficient condition for achieving sample efficiency in offline MGs.  
  To the best of our knowledge, we provide the first nearly minimax optimal result for offline MGs with function approximation. 
\end{abstract}

\section{Introduction}
Reinforcement learning (RL) has recently achieved tremendous empirical success, including Go \citep{silver2016mastering, silver2017mastering}, Poker \citep{brown2019superhuman}, robotic control \citep{kober2013reinforcement}, and Dota \citep{berner2019dota}, many of which involve multiple agents. 
 RL system with multiple agents acting in a common environment is referred to as multi-agent RL (MARL) where each agent aims to maximize its 
 own long-term return by interacting with the environment and other agents \citep{zhang2021multi}. 
 Two key components of these successes are \textit{function approximation} and \textit{efficient simulators}. For modern RL applications with large state spaces, function approximations such as neural networks are used to approximate the value functions or the policies and contributes to the generalization across different state-action pairs. Meanwhile, an efficient simulator serves as the environment which allows the agent to collect millions to billions of trajectories for the training process. 

However, for various scenarios, e.g., healthcare \citep{pan2017agile} and auto-driving \citep{wang2018supervised} where either collecting data is costly and risky, or online exploration is not possible \citep{fu2020d4rl}, it is far more challenging to apply (MA)RL methods in a trial-and-error fashion. To tackle these issues, offline RL aims to learn a good policy from a pre-collected dataset without further interacting with the environment. Recently, there has been impressive progress in the theoretical understanding about single-agent offline RL \citep{jin2020pessimism,rashidinejad2021bridging,zanette2021provable,xie2021bellman,yin2021towards, uehara2021pessimistic}, indicating that pessimism is critical for designing provably efficient offline algorithms. More importantly, these works demonstrate that the necessary and sufficient condition for achieving sample efficiency in offline MDP is the single policy (optimal policy) coverage. 
That is, it suffices for 
the offline dataset to have  good coverage over the trajectories induced by the optimal policy. 

In offline MARL for zero-sum Markov games, agents are not only facing the challenges of unknown environments, function approximation, and the distributional shift between the behavior policy and the optimal policy, but also challenged by the sophisticated minimax optimization for equilibrium solving. Due to these challenges, theoretical understandings of offline MARL remains elusive. In particular, the following questions remain open: 


\begin{center}
    (i) Can we design sample-efficient equilibrium learning algorithms in offline MARL? \\ (ii) What is the necessary and sufficient condition for achieving sample efficiency in offline MARL? 
\end{center}
To this end, focusing on the two-player zero-sum and  finite-horizon Markov Game (MG) with linear function approximation, we provide positive answers to the above two questions. 
Our contribution is threefold: 


\begin{itemize}
    \item For the two-player zero-sum MG with linear function approximation, we propose a computationally efficient algorithm, dubbed as pessimistic minimax value iteration (PMVI), which features the pessimism mechanism. 
    \item We introduce a new notion of ``relative uncertainty'', which depends on the offline dataset and $(\pi^*, \nu) \cup (\pi, \nu^*)$, where $(\pi^*, \nu^*)$ is an NE and $(\pi, \nu)$ are arbitrary policies. Furthermore, we prove that the suboptimality of PMVI can be bounded by \emph{relative uncertainty} up to multiplicative factors involving the dimension and horizon, which further implies that ``low relative uncertainty'' is the \emph{sufficient} condition for NE finding in the offline linear MGs setting. Meanwhile, by constructing a counterexample, we prove that, unlike the single-agent MDP where the single policy (optimal policy) coverage is enough, it is impossible to learn an approximate NE by the dataset only with the single policy pair (NE) coverage property.
    \item We also investigate the necessary condition for NE finding in the offline linear MGs setting. We demonstrate that the \emph{low relative uncertainty} is exactly the \emph{necessary} condition by showing that the relative uncertainty is the information-theoretic lower bound. This lower bound also indicates that PMVI achieves minimax optimality up to multiplicative factors involving the dimension and horizon. 
\end{itemize}
In summary, we propose the first \emph{computationally efficient} algorithm for offline linear MGs which is \emph{minimax optimal} up to multiplicative factors involving the dimension and horizon. More importantly, we figure out that \emph{low relative uncertainty} is the \emph{necessary and sufficient} condition for achieving sample efficiency in offline linear MGs setup. 

\subsection{Related Work}

There is a rich literature on MG \citep{shapley1953stochastic} and RL. Due to space constraint, we focus on reviewing the theoretical works on two-player zero-sum MG and offline RL.

\vspace{4pt}
\noindent{\textbf{Two-player zero-sum Markov game.}}  
There has been an impressive progress for online two-player zero-sum MGs, including the tabular MG \citep{bai2020provable, xie2020learning, bai2020near, liu2021sharp}, and MGs with linear function approximation \citep{xie2020learning, chen2021almost}. Beyond these two settings, \citet{jin2021power} and \citet{huang2021towards} consider the two-player zero-sum MG with general function approximation and the proposed algorithms can further solve MGs with kernel function approximation, MGs with rich observations, and kernel feature selection. For offline sampling oracle,   \citet{abe2020off} considers  offline policy evaluation under the strong uniform concentration assumption. 

\vspace{4pt}
\noindent{\textbf{Offline RL.}} The study of the offline RL (also known as batch RL), has a long history. In the single-agent setting, the prior works typically require a strong dataset coverage assumption \citep{precup2000eligibility, antos2008learning, levine2020offline}, which is impractical in general, particularly for the modern RL problems with large state spaces. Recently,    \citet{jin2020pessimism} takes  a   step towards identifying the minimal dataset assumption that empower provably efficient offline learning. 
In particular, it shows that pessimism principle allows efficient offline learning under a much weaker assumption which only requires a sufficient coverage over the optimal policy. After \citet{jin2020pessimism}, a line of work \citep{rashidinejad2021bridging, yin2021towards, uehara2021representation, zanette2021provable, xie2021bellman, uehara2021pessimistic} leverages the principle of pessimism to design offline RL algorithms, both in the tabular case and in the case with general function approximation.  These methods are not only more robust to the violation of dataset coverage assumption, but also provide non-trivial theoretical understandings of the offline learning, which are of independent interests. Despite the rich literature on single-agent offline RL, the extension to the MARL is still challenging. 

To the best of our knowledge, the current work on sample-efficient equilibrium finding in offline MARL is only \citet{zhong2021can} and \citet{cui2022offline}. In particular,  \citet{zhong2021can} studies the general-sum MGs with leader-follower structure and aims to find the Stackelberg-Nash equilibrium, but we focus on finding the NE in two-player zero-sum MGs with symmetric players. 
Our work is most closely related to the concurrent work \citet{cui2022offline}, which we discuss in detail below.


\vspace{4pt}
\noindent{\textbf{Comparison with \citet{cui2022offline}.}}
Up to now, the concurrent work \citet{cui2022offline} seems to provide  the only analysis on tabular two-player zero-sum MG in the offline setting.
We comment the similarities and differences between two works as follows.

In terms of algorithms, both PMVI (Algorithm~\ref{alg1}) in this paper and algorithms proposed in \citet{cui2022offline} are pessimism-type algorithms and computationally efficient. Since tabular MG is a special case of linear MG, our algorithm can naturally be applied to the tabular setting and achieve sample efficiency under the same coverage assumption. 

In terms of theoretical results, our work can be compared to \citet{cui2022offline} in the following aspects.     First,  both this work and \citet{cui2022offline} figure out the necessary and sufficient condition for achieving sample efficiency in (linear) MGs. Specifically, we introduce a new notion of \emph{relative uncertainty} and prove that the \emph{low relative uncertainty} is the necessary and sufficient condition for achieving sample efficiency in (linear) MGs.  \citet{cui2022offline} proposes a similar notion called \emph{unilateral concentration} and obtains similar results.    Second,  by constructing slightly different hard instances, both this work and \citet{cui2022offline} show that the single policy (NE) coverage assumption is not enough for NE identification in MGs.
  Third,  this work and \citet{cui2022offline} achieve near-optimal results in the linear setting and tabular setting, respectively. 
Finally,  the information-theoretic lower bound in \citet{cui2022offline} can be implied by that  for single-agent MDP. 
In contrast, our information-theoretic lower bound is construction-based and is a non-trivial extension from single-agent MDP. 


\section{Preliminaries}
In this section, we formally formulate our problem, and introduce preliminary concepts used in our paper. 

\subsection{Two-Player Zero-Sum Markov Game}
We consider a two-player zero-sum, finite-horizon MG where one agent (referred to as the max-player) aims to maximize the total reward while the other agent (referred to as the min-player) aims to minimize it. The game is defined as a tuple $\mathcal{M}\left(H, \mathcal{S}, \mathcal{A}_1, \mathcal{A}_2, r, \mathbb{P} \right)$ where $H$ is the number of steps in each episode, $\mathcal{S}$ is the state space, $\mathcal{A}_1, \mathcal{A}_2$ are the action spaces of the two players, respectively, $\mathbb{P} = \{\mathbb{P}_h\}_{h=1}^H$ is the transition kernel where $\PP_h(\cdot|s,a,b)$ is the distribution of the next state given the state-action pair $(s,a,b)$ at step $h$, $r = \{r_h(s,a,b)\}_{h=1}^H$ is the reward function\footnote{For ease of presentation, we consider deterministic reward. Our results immediately generalize to the stochastic reward function case.}, where $r_h(s,a,b) \in [0,1]$ is the reward given the state-action pair $(s,a,b)$ at step $h$. We assume that for each episode, the game starts with a fixed initial state $x \in \cS$ and it can be straightforwardly generalized to the case where the initial state is sampled from some fixed but unknown distribution.

\vspace{4pt}
\noindent{\textbf{Policy and Value functions.}} Let $\Delta(\cX)$ be the probability simplex over the set $\cX$. A Markov policy of the max-player is a sequence of functions $\pi = \{\pi_h:\cS \to \Delta(\cA_1)\}$ where $\pi_h(s)$ is the distribution of actions taken by the max-player given the current state $s$ at step $h$. Similarly, we can define the Markov policy of the min-player by $\nu = \{\nu_h: \cS \rightarrow \Delta(\cA_2)\}$. Given a policy pair $(\pi, \nu)$, the value function $V_h^{\pi, \nu}: \cS \to \RR$ and the Q-value function $Q_h^{\pi,\nu}: \cS \times \cA_1 \times \cA_2 \to \RR$ at step $h$ are defined by
$$
\begin{aligned}
V_{h}^{\pi, \nu}(s_h):=&\mathbb{E}_{\pi, \nu}\left[\sum_{h'=h}^{H} r_{h'}(s_{h'},a_{h'},b_{h'}) \middle| s_h\right],\\
Q_{h}^{\pi, \nu}(s_h, a_h, b_h):=&\mathbb{E}_{\pi, \nu}\left[\sum_{h'=h}^{H} r_{h'}(s_{h'}, a_{h'}, b_{h'}) \middle| s_{h}, a_h, b_h\right],
\end{aligned}
$$
where the expectation is taken over the randomness of the environment and the policy pair. We define the Bellman operator $\bellman$ for any function $V: \cS \to \RR$ as 
\begin{align}\label{bellman operator}
        \bellman V(s,a,b)  =  \expect\left[r_h(s, a, b) +V(s_{h+1})\middle|(s_h,a_h,b_h)=(s,a,b)\right].  
\end{align}
It is not difficult to verify that the value function and Q-value function satisfy the following Bellman equation:
\begin{equation}
    Q_h^{\pi,\nu}(s,a,b) = (\bellman V_{h+1}^{\pi,\nu})(s,a, b). \label{eq:bellmam_eq} 
\end{equation}

        \subsection{Linear Markov Game}
        We consider a family of MGs whose reward functions and transition kernels possess a linear structure.
        \begin{assumption}[Linear MGs  \citep{xie2020learning}] \label{assumption:linear:MG}
        For each $(s,a,b) \in \cS \times \cA_1 \times \cA_2$, and $h \in [H]$, we have
        \begin{equation}
        \begin{aligned}
        r_{h}(x, a, b)=\phi(x, a, b)^{\top} \theta_{h}, \qquad
        \mathbb{P}_{h}(\cdot \mid x, a, b) =\phi(x, a, b)^{\top} \mu_{h}(\cdot),
        \end{aligned}
        \end{equation}
        where $\phi:\cS \times \cA_1 \times \cA_2 \to \RR^d$ is a known feature map, $\theta_h \in \RR^d$ is an unknown vector, $\mu_h = (\mu_h^{(i)})_{i \in [d]}$ is a vector of $d$ unknown signed measure over $\cS$. We further assume that $\norm{\phi(\cdot,\cdot,\cdot)} \leq 1$, $\norm{\theta_h} \leq \sqrt{d}$, and $\norm{\mu_h(\cS)} \leq \sqrt{d}$ for all $h \in [H]$ where $\norm{\cdot}$ is the $\ell_2$-norm of vector.
        \end{assumption}
        With this assumption, we have the following result.
        \begin{lemma}[Linearity of Value Function] \label{lemma:linear:value:function}
        Under Assumption~\ref{assumption:linear:MG}, for any policy pair $(\pi, \nu)$ and any $(x, a, b, h) \in \cS \times \cA_1 \times \cA_2 \times [H]$, we have 
        \$
        Q^{\pi, \nu}(x, a, b) = \langle \phi(x, a, b), w_h^{\pi, \nu} \rangle,
        \$
        where $w_h^{\pi, \nu} = \theta_h + \int_\cS V_{h+1}^{\pi, \nu}(x') d\mu_h(x')$.
        \end{lemma}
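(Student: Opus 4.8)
The plan is to prove Lemma~\ref{lemma:linear:value:function} directly from the Bellman equation \eqref{eq:bellmam_eq} together with the linear structure in Assumption~\ref{assumption:linear:MG}. Starting from $Q_h^{\pi,\nu}(x,a,b) = (\bellman V_{h+1}^{\pi,\nu})(x,a,b)$ and the definition of the Bellman operator in \eqref{bellman operator}, I would expand the right-hand side as
\$
Q_h^{\pi,\nu}(x,a,b) = r_h(x,a,b) + \int_{\cS} V_{h+1}^{\pi,\nu}(x')\, \PP_h(\d x' \mid x,a,b).
\$
Then I substitute the two linear representations $r_h(x,a,b) = \phi(x,a,b)^\top \theta_h$ and $\PP_h(\cdot \mid x,a,b) = \phi(x,a,b)^\top \mu_h(\cdot)$, and pull the inner product $\phi(x,a,b)^\top(\cdot)$ out of the integral (which is legitimate since $\mu_h$ is a vector of finite signed measures and $V_{h+1}^{\pi,\nu}$ is bounded by $H$, so the integral is well-defined componentwise). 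This yields
\$
Q_h^{\pi,\nu}(x,a,b) = \phi(x,a,b)^\top\Big(\theta_h + \int_{\cS} V_{h+1}^{\pi,\nu}(x')\, \d\mu_h(x')\Big) = \langle \phi(x,a,b), w_h^{\pi,\nu}\rangle,
\$
with $w_h^{\pi,\nu} := \theta_h + \int_{\cS} V_{h+1}^{\pi,\nu}(x')\, \d\mu_h(x')$, exactly as claimed.

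The only genuine subtlety is justifying that $w_h^{\pi,\nu}$ is a well-defined element of $\RR^d$, i.e. that the componentwise integrals $\int_{\cS} V_{h+1}^{\pi,\nu}(x')\, \d\mu_h^{(i)}(x')$ are finite. This follows because $0 \le V_{h+1}^{\pi,\nu}(\cdot) \le H$ (the per-step rewards lie in $[0,1]$ and there are at most $H$ steps remaining) and each $\mu_h^{(i)}$ is a finite signed measure, as guaranteed by the bound $\|\mu_h(\cS)\| \le \sqrt{d}$ in Assumption~\ref{assumption:linear:MG}. I expect this boundedness check to be the main — and only — obstacle, and it is routine.

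One small point worth making explicit in the write-up: since the statement is not inductive (the definition of $w_h^{\pi,\nu}$ refers to $V_{h+1}^{\pi,\nu}$ directly, not to $w_{h+1}^{\pi,\nu}$), no induction on $h$ is needed; the identity holds pointwise for each fixed $h \in [H]$ using the Bellman equation at that step alone, with the convention $V_{H+1}^{\pi,\nu} \equiv 0$ handling the base case $h = H$. I would therefore present the argument as a one-line substitution preceded by the boundedness remark.
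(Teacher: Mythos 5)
Your proof is correct and follows exactly the same route as the paper's (one-line) proof: apply the Bellman equation \eqref{eq:bellmam_eq} and substitute the linear forms of $r_h$ and $\PP_h$ from Assumption~\ref{assumption:linear:MG}. The extra remarks on boundedness of $V_{h+1}^{\pi,\nu}$ and the fact that no induction is needed are sensible elaborations of the same argument.
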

        \begin{proof}
        The result is implied by Bellman equation in \eqref{eq:bellmam_eq} and the linearity of $r_h$ and $\PP_h$ in Assumption \ref{assumption:linear:MG}. 
        \end{proof}
        
        \subsection{Nash Equilibrium and Performance Metrics}
        If we fix some max-player's policy $\pi$, then the MG degenerates to an MDP for the min-player. By the theory of single-agent RL, we know that there exists a policy $\brt(\pi)$, referred to as the best response policy of the min-player, satisfying $V_h^{\pi,\brt(\pi)}(s) = \inf_\nu V_h^{\pi, \nu}(s)$ for all $s$ and $h$. Similarly, we define the best response policy $\brt(\nu)$ for the min-player's policy $\nu$. To simplify the notation, we define
        $$
        \begin{aligned}
        V_h^{\pi, *} &= V_h^{\pi, \brt(\pi)}, \text{ and } V_h^{*, \nu} &= V_h^{\brt(\nu),\nu}.
        \end{aligned}
        $$
         It is known that there exists a Nash equilibrium (NE) policy $(\pi^*, \nu^*)$ such that $\pi^*$ and $\nu^*$ are the best response policy to each other \citep{filar2012competitive} and we denote the value of them as $V^*_h = V_h^{\pi^*,\nu^*}$. Although multiple NE policies may exist, for zero-sum MGs, the value function is unique. 
         
         The NE policy is further known to be the solution to the following minimax equation:
         \begin{equation}
        \sup _{\pi} \inf _{\nu} V_{h}^{\pi, \nu}(s)=V_{h}^{\pi^{\star}, \nu^{\star}}(s)=\inf _{\nu} \sup _{\pi} V_{h}^{\pi, \nu}(s),  \quad \forall (s, h).       
        \end{equation}
        We also have the following weak duality property for any policy pair $(\pi, \nu)$ in MG:
        \begin{equation}
            \label{eqn:weak_dual}
            V_h^{\pi,*}(s) \leq V^*_h(s) \leq V_h^{*,\nu}(s), \quad \forall (s,h).
        \end{equation}
        Accordingly, we measure a policy pair $(\pi, \nu)$ by the duality gap:
        \begin{equation}
            \sub((\pi, \nu), x) = V_1^{*, \nu}(x) - V_1^{\pi, *}(x).
        \end{equation}
        The goal of learning is to find an $\epsilon$-approximate NE $(\hat{\pi}, \hat{\nu})$ such that $\sub((\hat{\pi}, \hat{\nu}), x) \leq \epsilon$.

      \subsection{Offline Data Collecting Process}
      We introduce the notion of compliance of dataset.
      \begin{definition}[Compliance of Dataset]
        Given an MG $\mathcal{M}$ and a dataset $\mathcal{D} = \{(s_h^{\tau},a_h^{\tau}, b_h^{\tau})\}_{\tau, h=1}^{K, H}$, we say the dataset $\mathcal{D}$ is compliant with the MG $\mathcal{M}$ if 
        \begin{align}\label{def:compliance}
          &\mathbb{P}_{\mathcal{D}}\left(r_h^{\tau} = r, s_{h+1}^{\tau} = s|\{(s_h^{i},a_h^{i}, b_h^{i})\}_{i = 1}^\tau, \{(r_h^{i},s_{h+1}^{i})\}_{i=1}^{\tau-1} \right) \notag\\
           &\qquad = \mathbb{P}_h\left(r_h = r, s_{h+1} = s| s_h = s_h^{\tau},a_h=a_h^{\tau}, b_h=b_h^\tau\right)
        \end{align} 
        for all $ h \in [H], s \in \mathcal{S}$ where $\mathbb{P}$ in the right-hand side of \eqref{def:compliance} is taken with respect to the underlying MG $\mathcal{M}$.
      \end{definition}
       We make the following assumption through this paper.
      \begin{assumption}[Date Collection]\label{assump:compliant}
        The dataset $\mathcal{D}$ is compliant with the underlying MG $\mathcal{M}$.
      \end{assumption}
     Intuitively, the compliance ensures (i) $\cD$ possesses the Markov property, and (ii) conditioned on $(s_h^\tau,a_h^\tau, b_h^\tau)$, $(r_h^\tau, s_{h+1}^\tau)$ is generated by the reward function and the transition kernel of the underlying MG. 
     
     As discussed in \citet{jin2020pessimism}, as a special case, this assumption holds if the dataset $\cD$ is collected by a fixed behavior policy. More generally, the experimenter can sequentially improve her policy by any online MARL algorithm as the assumption allows $(a_h^\tau, b_h^\tau)$ to be interdependent across the trajectories. In an extreme case, the actions can even be chosen in an adversarial manner.


     \subsection{Additional Notations} 
    For any real number $x$ and positive integer $h$, we define the regulation operation as $\mathrm{\Pi}_{h}(x) = \min\{h,\max\{x,0\}\}.$ Given a semi-definite matrix $\Lambda$, the matrix norm for any vector $v$ is denoted as $\| v \|_{\Lambda} = \sqrt{v^\top \Lambda v}$. The Frobenius norm of a matrix $A$ is given by  $\norm{A}_{F} = \sqrt{\tr(AA^\top)}$. 
    We denote $\lambda_{\text{min}}(A)$ as the smallest eigenvalue of the matrix $A$. We also use the shorthand notations $\phi_h = \phi\stateph$, $\phi_h^{\tau} = \phi\statepht$, and $r_h^\tau = r_h(s_h^\tau, a_h^\tau, b_h^\tau)$.

\section{\algname} \label{sec:algorithm}

In this section, we introduce our algorithm, namely, \algname (PMVI), whose peudocode is given in Algorithm~\ref{alg1}.

\begin{algorithm}[H]
	\caption{Pessimistic Minimax Value Iteration}
	\begin{algorithmic}[1] \label{alg1}
		\STATE Input: Dataset $\cD = \{x_h^\tau, a_h^\tau, b_h^\tau, r_h^\tau\}_{(\tau, h) \in [K] \times [H]}$.
    	\STATE Initialize $\underline{V}_{H+1}(\cdot) = \overline{V}_{H+1}(\cdot) = 0$.
    	\FOR{step $h = H, H-1, \cdots, 1$}
    	\STATE $\Lambda_h \leftarrow \sum_{\tau = 1}^K \phi_h^\tau (\phi_h^\tau)^\top + I$. \label{line:def:lambda}
		\STATE $\underline{w}_h \leftarrow \Lambda_h^{-1}(\sum_{\tau = 1}^K\phi_h^\tau (r_h^\tau + \underline{V}_{h+1}(x_{h+1}^\tau)))$.
		\STATE $\overline{w}_h \leftarrow \Lambda_h^{-1}(\sum_{\tau = 1}^K\phi_h^\tau(r_h^\tau + \overline{V}_{h+1}(x_{h+1}^\tau)))$.
		\STATE $\Gamma_h(\cdot, \cdot, \cdot) \leftarrow \beta \cdot \sqrt{\phi(\cdot, \cdot, \cdot)^\top(\Lambda_h)^{-1}\phi(\cdot, \cdot, \cdot)}$.
		\STATE $\underline{Q}_h(\cdot, \cdot, \cdot) \leftarrow \Pi_{H - h + 1}\{\phi(\cdot, \cdot, \cdot)^\top \underline{w}_h - \Gamma_h(\cdot, \cdot, \cdot)\}$.
    	\STATE ${\overline{Q}}_h(\cdot, \cdot, \cdot)\leftarrow \Pi_{H - h + 1}\{ \phi(\cdot, \cdot, \cdot)^\top \overline{w}_h + \Gamma_h(\cdot, \cdot, \cdot)\}$.
        \STATE Let $(\hat{\pi}_h(\cdot \mid \cdot), {\nu}'_h(\cdot \mid \cdot))$ be the NE of the matrix game with payoff matrix $\underline{Q}_h(\cdot, \cdot, \cdot)$.
    	\STATE Let $({\pi}'_h(\cdot \mid \cdot), \hat{\nu}_h(\cdot \mid \cdot))$ be the NE of the matrix game with payoff matrix $\overline{Q}_h(\cdot, \cdot, \cdot)$.
		\STATE $\underline{V}_h(\cdot) \leftarrow \EE_{a \sim \hat{\pi}_h(\cdot \mid \cdot) , b \sim {\nu}'_h(\cdot \mid \cdot)}\underline{Q}_h(\cdot, a, b)$.
		\STATE $\overline{V}_h(\cdot) \leftarrow \EE_{a \sim \pi'_h(\cdot \mid \cdot), b \sim \hat{\nu}_h(\cdot\mid\cdot) }\overline{Q}_h(\cdot, a, b)$.
    	\ENDFOR
    	\STATE Output: $(\hat{\pi} = \{\hat{\pi}_h\}_{h=1}^H, \hat{\nu} = \{\hat{\nu}_h\}_{h=1}^H)$.
	\end{algorithmic}
\end{algorithm}

At a high level, PMVI constructs pessimistic estimations of the value functions for both players and outputs a policy pair  based on these two estimated value functions. 

Our learning process is done through backward induction with respect to the timestep $h$. We set $\underline{V}_{H+1}(\cdot) = \overline{V}_{H+1}(\cdot) = 0$, where $\underline{V}_{H + 1}$ and $\overline{V}_{H + 1}$ are estimated value functions for max-player and min-player, respectively. Suppose we have obtained the estimated value functions $(\underline{V}_{h+1}, \overline{V}_{h+1})$ at $(h+1)$-th step, together with the linearity of value functions (Lemma~\ref{lemma:linear:value:function}), we can use the regularized least-squares regression to obtain the linear coefficients $(\underline{w}_h, \overline{w}_h)$ for the estimated Q-functions:
\$
&\underline{w}_h \leftarrow \argmin_{w} \sum_{\tau = 1}^K [r_h^\tau + \underline{V}_{h+1}(x_{h+1}^\tau) - (\phi_h^\tau)^\top w]^2 + \|w\|_2^2, \\
&\overline{w}_h \leftarrow \argmin_{w} \sum_{\tau = 1}^K [r_h^\tau + \overline{V}_{h+1}(x_{h+1}^\tau) - (\phi_h^\tau)^\top w]^2 + \|w\|_2^2,
\$
{\noindent{where}} $\phi_h^\tau$ is the shorthand of $\phi(s_h^\tau, a_h^\tau, b_h^\tau)$.
Solving this problem gives the closed-form solutions:
\begin{equation}
\begin{aligned} \label{eq:def:w}
&\underline{w}_h \leftarrow \Lambda_h^{-1}(\sum_{\tau = 1}^K\phi_h^\tau (r_h^\tau + \underline{V}_{h+1}(x_{h+1}^\tau))), \\
&\overline{w}_h \leftarrow \Lambda_h^{-1}(\sum_{\tau = 1}^K\phi_h^\tau(r_h^\tau + \overline{V}_{h+1}(x_{h+1}^\tau))), \\
& \text{where } \Lambda_h \leftarrow \sum_{\tau = 1}^K \phi_h^\tau (\phi_h^\tau)^\top + I.
\end{aligned}
\end{equation}

Unlike the online setting where optimistic estimations are essential for encouraging exploration \citep{jin2020provably,xie2020learning}, we need to adopt more robust estimation due to the distributional shift in the offline setting. Inspired by recent work \citep{jin2020pessimism,rashidinejad2021bridging,yin2021towards,uehara2021pessimistic,zanette2021provable}, which shows that \emph{pessimism} plays a key role in the offline setting, we also use the pessimistic estimations for both players. In detail, we estimate Q-functions by subtracting/adding a bonus term:
\begin{equation}
\begin{aligned} \label{eq:def:Q}
& \underline{Q}_h(\cdot, \cdot, \cdot) \leftarrow \Pi_{H - h + 1}\{\phi(\cdot, \cdot, \cdot)^\top \underline{w}_h - \Gamma_h(\cdot, \cdot, \cdot)\}, \\
& {\overline{Q}}_h(\cdot, \cdot, \cdot)\leftarrow \Pi_{H - h + 1}\{\phi(\cdot, \cdot, \cdot)^\top \overline{w}_h + \Gamma_h(\cdot, \cdot, \cdot)\}.
\end{aligned}
\end{equation}
Here $\Gamma_h$ is the bonus function, which takes the form $\beta \sqrt{\phi^\top \Lambda_h^{-1} \phi}$, where $\beta$ is a parameter which will be specified later. Such a bonus function is common in linear bandits \citep{lattimore2020bandit} and linear MDPs \citep{jin2020provably}. We remark that $\underline{Q}_h$ and $\overline{Q}_h$ are pessimistic estimations for the max-player and the min-player, respectively. Then, we solve the matrix games with payoffs $\underline{Q}_h$ and $\overline{Q}_h$:
\$
(\hat{\pi}_h(\cdot \mid \cdot), {\nu}'_h(\cdot \mid \cdot)) \leftarrow \mathrm{NE}(\underline{Q}_h(\cdot, \cdot, \cdot)), \\
({\pi}'_h(\cdot \mid \cdot), \hat{\nu}_h(\cdot \mid \cdot)) \leftarrow \mathrm{NE}(\overline{Q}_h(\cdot, \cdot, \cdot)).
\$
The estimated value functions $\underline{V}_h(\cdot)$ and $\overline{V}_h(\cdot)$ are defined by $\EE_{a \sim \hat{\pi}_h(\cdot \mid \cdot) , b \sim {\nu}'_h(\cdot \mid \cdot)}\underline{Q}_h(\cdot, a, b)$ and $\EE_{a \sim {\pi}'_h(\cdot \mid \cdot) , b \sim \hat{\nu}_h(\cdot \mid \cdot)}\overline{Q}_h(\cdot, a, b)$, respectively. After $H$ steps, PMVI outputs the policy pair $(\hat{\pi} = \{\hat{\pi}_h\}_{h=1}^H, \hat{\nu} = \{\hat{\nu}_h\}_{h=1}^H)$.

 \begin{remark}[Computational efficiency] We remark that our algorithm is computationally efficient because both the regression~\eqref{eq:def:w} and finding the NE of a zero-sum matrix game (using linear programming) can be efficiently implemented. Moreover, we remark that we do not need to compute $\overline{Q}_h(x,\cdot,\cdot),\underline{Q}_h(x,\cdot,\cdot),\hat{\pi}_h(\cdot|x), \nu'_h(\cdot|x),{\pi}'_h(\cdot|x), \hat{\nu}'_h(\cdot|x)$ for all $x \in \cS$. Instead, we only do so for the states we encounter.
\end{remark}

  \begin{remark}
        We remark that the linearity of the reward functions and the transition kernel is \textit{strictly} stronger than the linearity of value-function. In the online setting, the recent works \citep{jin2021power, huang2021towards} show that the linearity of the value function empowers \textit{statistically} efficient learning. However, we consider this stronger assumption because it is likely that it is essential for \textit{computational} efficiency due to the lack of computation tractability with general function approximation and the hardness result in \citet{du2019good} which only assumes near-linearity of value functions of MDPs (special case of MGs). 
    \end{remark}

In the following theorem, we provide the theoretical guarantees for PMVI (Algorithm~\ref{alg1}). Recall that we use the shorthand $\phi_h = \phi(s_h, a_h, b_h)$.

\begin{theorem} \label{thm:ub}
    Suppose Assumptions \ref{assumption:linear:MG} and \ref{assump:compliant} hold. Set
    $\beta = cdH\sqrt{\zeta}$ in Algorithm~\ref{alg1}, where $c$ is a sufficient large constant and $\zeta = \log(2dKH/p)$. Then for sufficient large $K$, it holds with probability $1 - p$ that
    \$
    &\sub\big((\hat{\pi}, \hat{\nu}), x\big) \\
    & \qquad \le 2\beta \sum_{h = 1}^H \EE_{\pi^*, \nu'}\left[\sqrt{\phi_h^\top \Lambda_h^{-1} \phi_h} \middle| s_1 = x\right]  + 2\beta \sum_{h = 1}^H \EE_{\pi', \nu^*}\left[\sqrt{\phi_h^\top \Lambda_h^{-1} \phi_h} \middle| s_1 = x \right] .
    \$
\end{theorem}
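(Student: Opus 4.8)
\emph{Proof strategy.} The plan is to follow the single-agent pessimism analysis of \citet{jin2020pessimism}, adapted to the minimax structure of Algorithm~\ref{alg1}: construct an ``uncertainty quantifier'' controlling the Bellman errors of $\underline{Q}_h$ and $\overline{Q}_h$; combine it with the matrix-game structure to obtain the pessimism guarantees $\underline{V}_1(x)\le V_1^{\widehat{\pi},*}(x)$ and $\overline{V}_1(x)\ge V_1^{*,\widehat{\nu}}(x)$; and finally telescope the duality gap along the trajectory distributions of the auxiliary policy pairs $(\pi^*,\nu')$ and $(\pi',\nu^*)$ produced along the way.

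The first ingredient is the extended value-difference identity: for any functions $\{\widetilde{Q}_h\}_{h\in[H]}$, any policies $\widetilde{\pi},\widetilde{\nu}$ with $\widetilde{V}_h(s):=\EE_{a\sim\widetilde{\pi}_h,\,b\sim\widetilde{\nu}_h}\widetilde{Q}_h(s,a,b)$ and $\widetilde{V}_{H+1}\equiv 0$, and any reference pair $(\pi,\nu)$,
\begin{align}\label{eq:proofsk:edv}
\widetilde{V}_1(x) - V_1^{\pi,\nu}(x) &= \sum_{h=1}^{H}\EE_{\pi,\nu}\!\left[\big(\widetilde{Q}_h - \bellman\widetilde{V}_{h+1}\big)(s_h,a_h,b_h)\,\middle|\,s_1=x\right]\notag\\
&\quad+\sum_{h=1}^{H}\EE_{\pi,\nu}\!\left[\big(\EE_{a\sim\widetilde{\pi}_h,b\sim\widetilde{\nu}_h} - \EE_{a\sim\pi_h,b\sim\nu_h}\big)\widetilde{Q}_h(s_h,a,b)\,\middle|\,s_1=x\right],
\end{align}
which follows by backward induction using the Bellman equation~\eqref{eq:bellmam_eq}. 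The second, and main, ingredient is that the stated $\beta$ makes $\Gamma_h$ a valid uncertainty quantifier: writing $\underline{\iota}_h := \bellman\underline{V}_{h+1}-\underline{Q}_h$ and $\overline{\iota}_h := \overline{Q}_h-\bellman\overline{V}_{h+1}$, there is an event $\mathcal{E}$ with $\mathbb{P}(\mathcal{E})\ge 1-p$ on which
\begin{align}\label{eq:proofsk:quant}
0\le\underline{\iota}_h(s,a,b)\le 2\Gamma_h(s,a,b),\qquad 0\le\overline{\iota}_h(s,a,b)\le 2\Gamma_h(s,a,b),\qquad\forall\,(s,a,b,h).
\end{align}
Nonnegativity is built into the construction~\eqref{eq:def:Q} by the $\mp\Gamma_h$ shift and the clipping $\Pi_{H-h+1}$ (using $\bellman\underline{V}_{h+1},\bellman\overline{V}_{h+1}\in[0,H-h+1]$). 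For the upper bounds, Lemma~\ref{lemma:linear:value:function} makes $\bellman\underline{V}_{h+1}$ linear in $\phi$, so subtracting the closed form~\eqref{eq:def:w} leaves a ridge-regularization term, bounded deterministically using the norm bounds in Assumption~\ref{assumption:linear:MG}, plus a self-normalized noise term $\phi^\top\Lambda_h^{-1}\sum_{\tau}\phi_h^\tau\big(r_h^\tau+\underline{V}_{h+1}(x_{h+1}^\tau)-\bellman\underline{V}_{h+1}(s_h^\tau,a_h^\tau,b_h^\tau)\big)$. Since $\underline{V}_{h+1}$ depends on the dataset (in particular on $\{x_{h+1}^\tau\}_\tau$), this noise must be controlled \emph{uniformly} over an $\varepsilon$-net of the parametric class of value functions Algorithm~\ref{alg1} can output --- and the value of a zero-sum matrix game is $1$-Lipschitz in its payoff entries, so this class has the usual $\mathrm{poly}(d)\cdot\log(\cdot)$ covering number --- which, together with a union bound over $h\in[H]$ and over the two estimates, produces $\beta=\Theta(dH\sqrt{\zeta})$. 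I expect~\eqref{eq:proofsk:quant} to be the main obstacle; everything else is bookkeeping with the minimax structure, mirroring \citet{jin2020provably,jin2020pessimism} with $\phi=\phi(s,a,b)$.

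Granting~\eqref{eq:proofsk:edv}--\eqref{eq:proofsk:quant}, the pessimism guarantees follow by applying~\eqref{eq:proofsk:edv} with $(\widetilde{V},\widetilde{\pi},\widetilde{\nu})=(\underline{V},\widehat{\pi},\nu')$ --- valid because $\underline{V}_h(s)=\EE_{a\sim\widehat{\pi}_h,b\sim\nu'_h}\underline{Q}_h(s,a,b)$ by construction in Algorithm~\ref{alg1} --- and reference $(\widehat{\pi},\brt(\widehat{\pi}))$: the Bellman-error sum is $\le 0$ since $\underline{Q}_h-\bellman\underline{V}_{h+1}=-\underline{\iota}_h\le 0$, and the policy-mismatch sum is $\le 0$ since $\EE_{a\sim\widehat{\pi}_h,b\sim\nu'_h}\underline{Q}_h(s,\cdot,\cdot)=\underline{V}_h(s)=\min_{\nu}\EE_{a\sim\widehat{\pi}_h,b\sim\nu}\underline{Q}_h(s,\cdot,\cdot)\le\EE_{a\sim\widehat{\pi}_h,b\sim\brt(\widehat{\pi})_h}\underline{Q}_h(s,\cdot,\cdot)$, where we used that $(\widehat{\pi}_h,\nu'_h)$ is a Nash equilibrium of $\underline{Q}_h(s,\cdot,\cdot)$; hence $\underline{V}_1(x)\le V_1^{\widehat{\pi},*}(x)$, and symmetrically $\overline{V}_1(x)\ge V_1^{*,\widehat{\nu}}(x)$.

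Finally I would assemble the bound via weak duality~\eqref{eqn:weak_dual}, splitting $\sub\big((\widehat{\pi},\widehat{\nu}),x\big)=\big(V_1^{*,\widehat{\nu}}(x)-V_1^{*}(x)\big)+\big(V_1^{*}(x)-V_1^{\widehat{\pi},*}(x)\big)$ into two nonnegative terms. Since $V_1^{*}(x)=\inf_{\nu}V_1^{\pi^*,\nu}(x)\le V_1^{\pi^*,\nu'}(x)$ and $\underline{V}_1(x)\le V_1^{\widehat{\pi},*}(x)$, the second term is at most $V_1^{\pi^*,\nu'}(x)-\underline{V}_1(x)$; applying~\eqref{eq:proofsk:edv} with $(\underline{V},\widehat{\pi},\nu')$ and reference $(\pi^*,\nu')$, this equals $\sum_h\EE_{\pi^*,\nu'}[\underline{\iota}_h(s_h,a_h,b_h)\mid s_1=x]$ plus a policy-mismatch sum that is $\le 0$ (because $\EE_{a\sim\widehat{\pi}_h,b\sim\nu'_h}\underline{Q}_h=\underline{V}_h=\max_{\pi}\EE_{a\sim\pi,b\sim\nu'_h}\underline{Q}_h\ge\EE_{a\sim\pi^*_h,b\sim\nu'_h}\underline{Q}_h$), hence is at most $2\beta\sum_h\EE_{\pi^*,\nu'}[\sqrt{\phi_h^\top\Lambda_h^{-1}\phi_h}\mid s_1=x]$ by~\eqref{eq:proofsk:quant}. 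A mirror-image argument --- using $V_1^{*}(x)=\sup_{\pi}V_1^{\pi,\nu^*}(x)\ge V_1^{\pi',\nu^*}(x)$, the guarantee $\overline{V}_1(x)\ge V_1^{*,\widehat{\nu}}(x)$, and~\eqref{eq:proofsk:edv} with $(\overline{V},\pi',\widehat{\nu})$ and reference $(\pi',\nu^*)$ --- bounds the first term by $2\beta\sum_h\EE_{\pi',\nu^*}[\sqrt{\phi_h^\top\Lambda_h^{-1}\phi_h}\mid s_1=x]$; adding the two halves gives the theorem. The ``sufficiently large $K$'' is only needed to absorb lower-order terms into the leading $\Theta(dH\sqrt{\zeta})$ bonus.
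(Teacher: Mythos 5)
Your proposal is correct and follows essentially the same route as the paper's proof: the same high-probability event bounding the Bellman errors of $\underline{Q}_h,\overline{Q}_h$ by $2\Gamma_h$ (via self-normalized concentration plus an $\epsilon$-net argument using the $1$-Lipschitzness of the matrix-game value), the same pessimism guarantees $\underline{V}_1\le V_1^{\hat\pi,*}$, $\overline{V}_1\ge V_1^{*,\hat\nu}$ obtained from the NE structure of $(\hat\pi_h,\nu'_h)$ and $(\pi'_h,\hat\nu_h)$, and the same decomposition of the duality gap evaluated along $(\pi^*,\nu')$ and $(\pi',\nu^*)$ through the value-difference lemma. One minor phrasing caution: the nonnegativity direction of the Bellman-error bounds is not purely ``built into the construction''---it also relies on the concentration event, exactly as in the paper's Lemma~\ref{lemma:bellman:error}---but since your event $\mathcal{E}$ states both directions and the rest of your argument uses only those stated bounds, this is not a gap.
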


\begin{proof}
    See Appendix~\ref{appendix:pf:thm:ub} for a detailed proof.
\end{proof}

Theorem~\ref{thm:ub} states that the suboptimality of PMVI is upper bounded by the product of $2\beta$ and a data-dependent term, where $\beta$ comes from the the covering number of function classes and the date-dependent term will be explained in the following section.

 \section{Sufficiency: Low Relative Uncertainty} \label{sec:illustration}
In this section, we interpret Theorem~\ref{thm:ub} by characterizing the sufficient condition for achieving sample efficiency.

\subsection{Relative Uncertainty}
We first introduce the following important notion of ``relative uncertainty''.
      \begin{definition}[Relative Uncertainty]
        Given an MG $\mcm$ and a dataset $\mcd$ that is compliant with $\mcm$, for an NE policy pair $(\pi^*,\nu^*)$, the relative uncertainty of $(\pi^*,\nu^*)$ with respect to $\mcd$ is defined as
        \begin{equation}
        \begin{aligned} \label{def:related_info}
         &\ri(\mcd, \pi^*, \nu^*,x) \notag\\
         &\qquad= \max \Big\{\sup\limits_{\nu}\sum_{h=1}^H\expect_{\pi^*,\nu}\Big[\sqrt{\phi_h^\top\Lambda_h^{-1}\phi_h}\, \Big |\, s_1=x\Big],  \sup\limits_{\pi}\sum_{h=1}^H\expect_{\pi,\nu^*}\Big[\sqrt{\phi_h^\top\Lambda_h^{-1}\phi_h}\, \Big |\, s_1=x\Big]\Big\} \notag 
        \end{aligned}
        \end{equation}
        where $x$ is the initial state and expectation $\expect_{\pi^*,\nu}$ and $\EE_{\pi, \nu^*}$ are taken respect to randomness of the trajectory induced by $(\pi^*,\nu)$ and $(\pi, \nu^*)$ in the underlying MG given the fixed matrix $\Lambda_h = \sum_{\tau=1}^K \phi_h^\tau (\phi_h^\tau)^\top + I$, respectively.  
        
        We also define the relative uncertainty with respect to the dataset $\cD$ as 
        \begin{equation}
        \ri(\mcd,x) = \inf\limits_{(\pi^*,\nu^*)  \text{ is NE}}\ri(\mcd, \pi^*, \nu^*,x).
        \end{equation}
      \end{definition}
    Therefore, we can reformulate Theorem~\ref{thm:ub} as:
    \begin{equation}
        \label{eqn:rewrite_theorem}
        \sub((\hat{\pi}, \hat{\nu}), x) \leq 4\beta \cdot \ri(\cD, x).
    \end{equation}

Hence, we obtain that ``low relative uncertainty'' allows PMVI to find an approximate NE policy pair sample efficiently, which further implies that ``low relative uncertainty'' is the sufficient condition for achieving sample efficiency in offline linear MGs. 

Before we provide a detailed discussion of this notion with intuitions and examples, we first contrast our result with the single policy (optimal policy) coverage identified in the single-agent setting \citep{jin2020pessimism,xie2021bellman,rashidinejad2021bridging}.

\subsection{Single Policy (NE) Coverage is Insufficient}
As demonstrated in \citet{jin2020pessimism,xie2021bellman,rashidinejad2021bridging}, a sufficient coverage over the optimal policy is sufficient for the offline learning of MDPs. As a straightforward extension, it is natural to ask whether a sufficient coverage over the NE policy pair $(\pi^*, \nu^*)$ is sufficient and therefore minimal. However, the situation is more complicated in the MG case and we have the following impossibility result.
    \begin{proposition} Coverage of the NE policy pair $(\pi^*, \nu^*)$ is not sufficient for learning an approximate NE policy pair.
    \end{proposition}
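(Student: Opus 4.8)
The plan is to prove this impossibility by a two-instance indistinguishability argument, in the spirit of information-theoretic lower bounds. I would construct two Markov games $\mathcal{M}_1,\mathcal{M}_2$ — tabular ones, which are the special case of Assumption~\ref{assumption:linear:MG} with indicator features — together with a single offline dataset $\mathcal{D}$ that is compliant with both, covers the NE policy pair of each with an absolute-constant concentrability coefficient, and yet is such that (i) $\mathcal{D}$ has the same law under $\mathcal{M}_1$ and $\mathcal{M}_2$, so no algorithm can behave differently on the two games, and (ii) the duality gaps of \emph{any} fixed policy pair in the two games sum to a positive absolute constant, so the worse of the two is bounded below by half that constant.

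Concretely, I would take $H=1$, a single initial state $x$, $\cA_1=\cA_2=\{1,2\}$, and the two reward matrices (rows indexed by the max-player's action, columns by the min-player's action)
\begin{equation*}
A^{(1)}=\begin{pmatrix}1/2 & 1\\ 0 & 1/2\end{pmatrix},\qquad A^{(2)}=\begin{pmatrix}1/2 & 0\\ 1 & 1/2\end{pmatrix},
\end{equation*}
embedded into the linear model via $d=4$ and $\phi(x,a,b)=e_{(a,b)}\in\RR^{4}$; the transition kernel is irrelevant for $H=1$ and can be chosen to meet the norm bounds of Assumption~\ref{assumption:linear:MG} trivially. A short check shows $\mathcal{M}_1$ has the unique NE $(a_1,b_1)$ and $\mathcal{M}_2$ the unique NE $(a_2,b_2)$, each of value $1/2$. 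I then let $\mathcal{D}$ consist of $K/2$ trajectories playing $(a_1,b_1)$ and $K/2$ playing $(a_2,b_2)$. Because the only entries $\mathcal{D}$ ever records are $A^{(1)}_{11}=A^{(2)}_{11}=A^{(1)}_{22}=A^{(2)}_{22}=1/2$ and rewards are deterministic, $\mathcal{D}$ is compliant with both games and has an identical law under them; moreover it places mass $1/2$ on each game's NE action pair (equivalently $\Lambda_1=\mathrm{diag}(\tfrac{K}{2}+1,\,1,\,1,\,\tfrac{K}{2}+1)$, so $\sum_h\EE_{\pi^*,\nu^*}[\|\phi_h\|_{\Lambda_h^{-1}}]=(\tfrac{K}{2}+1)^{-1/2}$ for each $\mathcal{M}_i$), i.e. the NE is covered as well as one could possibly ask.

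It then remains only to lower-bound the duality gap of an arbitrary output. Fix any (possibly randomized) algorithm; since $\mathcal{D}$ is identically distributed under $\mathcal{M}_1$ and $\mathcal{M}_2$, its output $(\hat\pi,\hat\nu)$ has the same law on both instances. Writing $\hat\pi=(p,1-p)$ and $\hat\nu=(q,1-q)$ and computing best responses in the two matrix games, I would obtain
\begin{equation*}
\sub_{\mathcal{M}_1}\big((\hat\pi,\hat\nu),x\big)=1-\tfrac{p+q}{2},\qquad \sub_{\mathcal{M}_2}\big((\hat\pi,\hat\nu),x\big)=\tfrac{p+q}{2},
\end{equation*}
whose sum is $1$ for every $(p,q)\in[0,1]^2$, so $\max_{i}\sub_{\mathcal{M}_i}((\hat\pi,\hat\nu),x)\ge 1/2$ pointwise and hence $\max_i\EE[\sub_{\mathcal{M}_i}((\hat\pi,\hat\nu),x)]\ge 1/2$ after averaging over the algorithm's randomness. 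Thus on at least one of $\mathcal{M}_1,\mathcal{M}_2$ — both satisfying NE-coverage with an absolute constant, for every $K$ — the algorithm fails to return a $1/2$-approximate NE, which is the claim.

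The main obstacle, and the only genuinely creative step, is choosing the pair of games: they must coincide on every payoff a NE-covering dataset can reveal (here the diagonal, which equals the NE value) while differing on the off-diagonal payoffs — exactly the entries probed by \emph{unilateral} deviations from the NE — in such a way that the two duality gaps are exactly complementary. Everything else (verifying the NEs and their values, the compliance and coverage bookkeeping, and the two best-response computations) is routine. If one insists on a strictly multi-step or strictly non-tabular hard instance, padding with dummy horizons or irrelevant feature coordinates changes nothing essential. Finally, this is consistent with Theorem~\ref{thm:ub}: here $\ri(\mathcal{D},x)=\Omega(1)$, since the $\sup_{\nu}\EE_{\pi^*,\nu}$ in the relative uncertainty loads onto the un-covered off-diagonal direction, so the right-hand side of \eqref{eqn:rewrite_theorem} is (correctly) bounded away from zero.
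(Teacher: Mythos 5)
Your proposal is correct and takes essentially the same route as the paper: both construct two games that are indistinguishable from a single compliant dataset covering each game's (pure) NE action pair, and then show the two duality gaps of any output policy pair sum to an absolute constant, so one game's gap is at least a constant. Your $2\times 2$ instance with diagonal value $1/2$ and the explicit check that $\sub_{\mathcal{M}_1}+\sub_{\mathcal{M}_2}=1$ is a valid (slightly smaller) variant of the paper's $3\times 3$ construction, and your extra bookkeeping (linear embedding, randomized algorithms, $\ri(\mathcal{D},x)=\Omega(1)$ consistency) is sound but not a different argument.
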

   \begin{proof}
    We prove the result by constructing two hard instances and a dataset $\cD$ such that no algorithm can achieve small suboptimality for two instances simultaneously. We consider two simplified linear MGs $\cM_1$ and $\cM_2$ with state space $\cS = \{X\}$, action sets $\cA_1 = \{a_i: i \in [3]\}$, $\cA_2=\{b_i: i \in [3]\}$, and payoff matrices:
        \begin{equation}
        \label{eqn:game_instance}
                  R_1 = \left(
          \begin{array}{ccc}
            0.5 & {-1} & 0  \\
            {1} & \textcolor{red}{0} & {1}  \\
            0 & {-1} & 0  \\
          \end{array}\right),~~
          R_2 = \left(
          \begin{array}{ccc}
            0 & 0 & -1 \\
            1 & 0 & -1  \\
            1 & 1 & \textcolor{red}{0}  \\
          \end{array}\right).
        \end{equation}
    We consider the dataset $\cD = \{(a_2,b_2,r=0), (a_3, b_3,r=0)\}$ where the choices of action are predetermined and the rewards are sampled from the underlying game, which implies that $\cD$ is compliant with the underlying game. However, we can never distinguish these two games as they are both consistent with $\cD$. Suppose that the output policies are $\hat{\pi}(a_i) = p_i, \hat{\nu}(b_j) = q_j$ with $i,j \in [3]$, we can easily find that 
    \begin{align*}
    \sub_{\cM_1}((\hat{\pi}, \hat{\nu}), x) &= 2 - p_2 - q_2, \\
    \sub_{\cM_2}((\hat{\pi}, \hat{\nu}), x) &= p_1 + q_1 + p_2 + q_2,
    \end{align*}
    where the subscript $\cM_i$ means that the underlying MG is $\cM_i$. Therefore, we have
    $$\sub_{\cM_1}((\hat{\pi}, \hat{\nu}), x) + \sub_{\cM_2}((\hat{\pi}, \hat{\nu}), x) \geq 2,$$ 
    which implies that either $\sub\limits_{\cM_1}((\hat{\pi}, \hat{\nu}), x)$ or $\sub\limits_{\cM_2}((\hat{\pi}, \hat{\nu}), x)$ is larger than $1$.
  \end{proof}

We remark that the instances constructed in the proof also intuitively illustrate the sufficiency of the "low relative uncertainty". Suppose that the underlying MG is $\cM_1$ defined in~\eqref{eqn:game_instance} and the dataset $\cD$ now contains the information about the set of action pairs:
        \begin{equation}
        \label{eqn:action_set}
        G = \left\{(a_1,b_2),(a_2,b_2),(a_2,b_1),(a_2,b_3),(a_3,b_2)\right\}.    
        \end{equation}
Then, the learning agent has the following estimation
        \begin{equation}
        \label{eqn:game_instance_est}
          \hat{R} = \left(
          \begin{array}{ccc}
            * & {-1} & *  \\
            {1} & \textcolor{red}{0} & {1}  \\
            * & {-1} & * \\
          \end{array}\right),
        \end{equation}
    where $*$ can be arbitrary. In particular, the collected information is sufficient to verify that $(a_2, b_2)$ are best response to each other and therefore the NE policy pair.

    More generally, for the NE that is possible a mixed strategy, if we have sufficient information about $\{(\pi^*, \nu): \nu \text{ is arbitrary}\}$, we can verify that $\nu^*$ is the best response of $\pi^*$. Similarly, the information about $\{(\pi, \nu^*): \pi \text{ is arbitrary}\}$ allows us to ensure that $\pi^*$ is the best response policy to $\nu^*$. Therefore, intuitively, a sufficient coverage over these policy pairs empowers efficient offline learning of the NE.

\subsection{Interpretation of Theorem~\ref{thm:ub}}
To illustrate our theory more, we make several comments below.

\vspace{4pt}
\noindent{\textbf{Data-Dependent Performance Upper Bound.}} The upper bound in Theorem~\ref{thm:ub} is in a data-dependent manner, which is also a key idea employed by many previous works. This allows to drop the strong uniform coverage assumption, which usually fails to hold in practice. Specifically, the suboptimality guarantee only relies on the the compliance assumption and depends on the dataset $\cD$ through the relative uncertainty $\ri(\cD, x)$.

        To better illustrate the role of the relative uncertainty, we consider the linear MG $\cM_1$ constructed in  \eqref{eqn:game_instance}. We define $n_{ij}$ as the times that $(a_i, b_j)$ is taken in $\cD$. Then, we have
        $$
        \begin{aligned}
        \sup_\nu \expect_{\pi^*,\nu}\left[\sqrt{\phi_h^\top\Lambda_h^{-1}\phi_h}\middle|s_1=x\right] &= (1 + \min_j n_{2, j})^{-1/2},\\
        \sup_\pi \expect_{\pi,\nu^*}\left[\sqrt{\phi_h^\top\Lambda_h^{-1}\phi_h}\middle|s_1=x\right] &= (1 + \min_in_{i,2})^{-1/2},
        \end{aligned}
        $$
        which implies that 
        \begin{equation}
        \label{eqn:ru_example}
        \ri(\cD,x) = \ri(\mcd, \pi^*, \nu^*,x) = (1 + n^*)^{-1/2},
        \end{equation}
        where $n^* = \min_{i,j\in[3]}\{n_{2,j},n_{i,2}\}$. Hence, $\ri(\cD,x)$ measures how well the dataset $\mcd$ covers the action pairs induced by $(\pi^*,\nu)$ and $(\pi, \nu^*)$, where $\pi$ and $\nu$ are arbitrary. In particular, combining \eqref{eqn:rewrite_theorem} and \eqref{eqn:ru_example}, we obtain that
        $$\sub((\hat{\pi}, \hat{\nu}), x) \leq 4\beta \cdot (1 + n^*)^{-1/2},$$
        where we take $\beta$ as stated in the theorem. This implies that the suboptimality of Algorithm~\ref{alg1} is small if the action pair set is covered well by $\cD$, which corresponds to a large $n^*$. 
        More generally, we have the following corollary:

      \begin{corollary}[Sufficient Coverage of Relative Information]\label{cor:suff}
        Under Assumptions \ref{assumption:linear:MG} and~\ref{assump:compliant}, we assume the existence of a constant $c_1$ such that  
          \begin{equation}\label{event:suff}
          \begin{aligned}
            \Lambda_h\geqslant I + c_1 \cdot K &\cdot \max\left\{\sup\limits_{\nu}\expect_{\pi^*,\nu}\left[\phi_h\phi_h^\top\middle|s_1=x\right], \sup\limits_{\pi}\expect_{\pi,\nu^*}\left[\phi_h\phi_h^\top\middle|s_1=x\right] \right\},
          \end{aligned}
          \end{equation}
          with probability at least $1 - p/2$. Set $\beta = cdH\sqrt{\zeta}$ in Algorithm \ref{alg1} where $c$ is a sufficient large constant and $\zeta = \log(4dHK/p)$. Then for sufficient large $K$, it holds with probability $1 - p$ that  
          \$
          \sub((\hat{\pi}, \hat{\nu}), x) \leqslant c'd^{3/2}H^2K^{-1/2}\sqrt{\zeta},
          \$
          where $c'$ is a constant that only relies on $c$ and $ c_1$. 
      \end{corollary}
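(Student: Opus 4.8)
The plan is to derive the corollary directly from Theorem~\ref{thm:ub}, or rather from its reformulation \eqref{eqn:rewrite_theorem}, namely $\sub((\hat\pi,\hat\nu),x)\le 4\beta\cdot\ri(\cD,x)$, by arguing that the coverage condition \eqref{event:suff} forces $\ri(\cD,x)$ to be of order $H\sqrt{d/(c_1K)}$. Throughout I would condition on the event in \eqref{event:suff} (equivalently, on a realization of $\cD$ for which it holds), so that each $\Lambda_h$ is a fixed positive definite matrix with $\Lambda_h\succeq I+c_1K\,\Sigma_{h,\nu}$ for every min-player policy $\nu$, where $\Sigma_{h,\nu}:=\EE_{\pi^*,\nu}[\phi_h\phi_h^\top\mid s_1=x]$, and symmetrically $\Lambda_h\succeq I+c_1K\,\Sigma'_{h,\pi}$ with $\Sigma'_{h,\pi}:=\EE_{\pi,\nu^*}[\phi_h\phi_h^\top\mid s_1=x]$ for every max-player policy $\pi$.

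Next I would bound the first of the two sums defining $\ri(\cD,\pi^*,\nu^*,x)$. Fix $\nu$ and $h$. Since $\sqrt{\cdot}$ is concave, Jensen's inequality gives
\[
\EE_{\pi^*,\nu}\Big[\sqrt{\phi_h^\top\Lambda_h^{-1}\phi_h}\ \Big|\ s_1=x\Big]\le\sqrt{\EE_{\pi^*,\nu}\big[\phi_h^\top\Lambda_h^{-1}\phi_h\,\big|\,s_1=x\big]}=\sqrt{\tr\!\big(\Lambda_h^{-1}\Sigma_{h,\nu}\big)},
\]
using $\phi_h^\top\Lambda_h^{-1}\phi_h=\tr(\Lambda_h^{-1}\phi_h\phi_h^\top)$ together with linearity of trace and expectation. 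From \eqref{event:suff} we have $\Lambda_h\succeq c_1K\big(\Sigma_{h,\nu}+\tfrac{1}{c_1K}I\big)$, hence $\Lambda_h^{-1}\preceq\tfrac{1}{c_1K}\big(\Sigma_{h,\nu}+\tfrac{1}{c_1K}I\big)^{-1}$ (here the extra $+I$ in \eqref{event:suff} is exactly what lets me safely invert even when $\Sigma_{h,\nu}$ is singular), so that
\[
\tr\!\big(\Lambda_h^{-1}\Sigma_{h,\nu}\big)\le\frac{1}{c_1K}\,\tr\!\Big(\big(\Sigma_{h,\nu}+\tfrac{1}{c_1K}I\big)^{-1}\Sigma_{h,\nu}\Big)\le\frac{d}{c_1K},
\]
because the eigenvalues of $(\Sigma+\epsilon I)^{-1}\Sigma$ lie in $[0,1)$ and $\Sigma_{h,\nu}\in\RR^{d\times d}$. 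Taking $\sup_\nu$ and summing over $h\in[H]$ yields $\sup_\nu\sum_{h=1}^H\EE_{\pi^*,\nu}[\sqrt{\phi_h^\top\Lambda_h^{-1}\phi_h}\mid s_1=x]\le H\sqrt{d/(c_1K)}$; the identical argument with the roles of the two players exchanged bounds the other sum the same way, so $\ri(\cD,x)\le\ri(\cD,\pi^*,\nu^*,x)\le H\sqrt{d/(c_1K)}$.

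Finally I would assemble the bound. Theorem~\ref{thm:ub} holds on an event of probability at least $1-p/2$ (this is why $\zeta$ is set to $\log(4dHK/p)$ here, rather than $\log(2dKH/p)$), and the coverage event \eqref{event:suff} holds with probability at least $1-p/2$; a union bound puts us on their intersection with probability at least $1-p$. On this intersection, \eqref{eqn:rewrite_theorem} with $\beta=cdH\sqrt{\zeta}$ gives
\[
\sub((\hat\pi,\hat\nu),x)\le 4\beta\cdot\ri(\cD,x)\le 4cdH\sqrt{\zeta}\cdot H\sqrt{\frac{d}{c_1K}}=\frac{4c}{\sqrt{c_1}}\,d^{3/2}H^2K^{-1/2}\sqrt{\zeta},
\]
and setting $c'=4c/\sqrt{c_1}$ finishes the proof; the ``sufficiently large $K$'' requirement is inherited verbatim from Theorem~\ref{thm:ub}.

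I do not expect a serious obstacle here: once \eqref{eqn:rewrite_theorem} is in hand the argument is short. The only points requiring a little care are (i) giving a clean meaning to the $\sup$/$\max$ sitting inside the Loewner-order condition \eqref{event:suff} --- reading it as the existence of a single matrix dominating $\Sigma_{h,\nu}$ (resp.\ $\Sigma'_{h,\pi}$) for all $\nu$ (resp.\ $\pi$), against which $\Lambda_h$ is compared --- and (ii) the passage from the matrix inequality to the trace bound when $\Sigma_{h,\nu}$ is only positive semidefinite, which the $+I$ term handles. Everything else is bookkeeping on constants and the union bound.
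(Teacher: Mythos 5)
Your proposal is correct and follows essentially the same route as the paper: bound each per-step expectation by $\sqrt{d/(c_1K)}$ via Jensen (concavity of the square root), the trace identity, and the Loewner-order consequence of \eqref{event:suff}, sum over $h$, and combine $\sub\le 4\beta\cdot\ri(\cD,x)$ with a union bound over the coverage event and the event of Theorem~\ref{thm:ub}, arriving at the same constant $c'=4c/\sqrt{c_1}$. The only cosmetic difference is the order of operations (you apply Jensen before the matrix comparison, the paper applies the comparison inside the square root first), which changes nothing.
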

      \begin{proof}
        See Appendix \ref{appendix:pf:cor_suff} for detailed proof.
      \end{proof}
\noindent{\textbf{Oracle Property.}} Notably, in the above example, the action pair that lies off the set $G$ in \eqref{eqn:action_set} will not affect $\ri(\cD,x)$. Such a property is referred as the oracle property in the literature \citep{donoho1994ideal, zou2006adaptive, fan2001variable}. Specifically, since $\ri(\cD, x)$ takes expectation under the set of policy pairs:
\$
P = \{(\pi^*, \nu): \nu \text{ is arbitrary}\} \bigcup \{(\pi, \nu^*): \pi \text{ is arbitrary}\},
\$
the suboptimality automatically "adapts" to the trajectory induced by this set even though it is unknown in prior. This property is important especially when the dataset $\cD$ contains a large amount of irrelative information as the irrelative information possibly misleads other learning agents. For instance, suppose that we collect $\cD$ through a naive policy pair where both the max-player and the min-player pick their actions randomly. Therefore, all action pairs are sampled approximately uniformly. We assume that they are equally sampled for ${K}/{9}$ times for simplicity. In this case, since $n^* = {K}/{9}$, the suboptimality of Algorithm~\ref{alg1} still decays at a rate of $K^{-1/2}$. In particular, one important observation is that the output policy pair $(\hat{\pi}, \hat{\nu})$ can outperform the naive policy used to collect the dataset $\cD$.

\vspace{4pt}
 \noindent{\textbf{Well-Explored Dataset.}} As in existing literature (e.g., \cite{duan2020minimax}), we also consider the case where the data collecting process explores the state-action space well.

\begin{corollary}[Well-Explored Dataset] \label{cor_well_exp}
        Supposed the dataset $\mcd = \{(s_h^\tau,a_h^\tau,b_h^\tau,r_h^\tau)\}_{\tau,h=1}^{K,H}$ is induced by a fixed behavior policy pair $(\bpc)$ in the underlying MG. We also assume the existence of a constant $\underline{c}>0$ such that 
        \$
        \lambda_{\text{min}}(\Sigma_h) \geqslant \underline{c}\quad \text{where} \quad \Sigma_h = \expect_{\bpc}[\phi_h\phi_h^\top ],\quad \forall h \in [H].
        \$
        Set $\beta = cdH\sqrt{\zeta}$ in Algorithm \ref{alg1} where $c$ is a sufficient large constant and $\zeta = \log(4dHK/p)$. Then for sufficient large $K$, it holds with probability $1 - p$ that 
        \$
         \sub((\hat{\pi}, \hat{\nu}), x) \leqslant c'dH^2K^{-1/2}\sqrt{\zeta},
         \$
         where $c'$ is a constant that only relies on $c$ and $\underline{c}$.
\end{corollary}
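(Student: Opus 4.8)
The plan is to bound the right-hand side of Theorem~\ref{thm:ub} by a quantity decaying like $K^{-1/2}$, exploiting the well-exploredness hypothesis to control the bonus terms $\sqrt{\phi_h^\top\Lambda_h^{-1}\phi_h}$ \emph{uniformly} over all state-action triples. Since $\|\phi(\cdot,\cdot,\cdot)\|\le 1$ by Assumption~\ref{assumption:linear:MG}, one has the pointwise estimate $\phi_h^\top\Lambda_h^{-1}\phi_h \le \|\phi_h\|^2/\lambda_{\min}(\Lambda_h) \le 1/\lambda_{\min}(\Lambda_h)$, so the whole argument reduces to establishing a high-probability lower bound $\lambda_{\min}(\Lambda_h) \gtrsim \underline{c}\,K$ that holds simultaneously for all $h\in[H]$.

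To obtain this lower bound I would use the i.i.d.\ structure of $\mcd$: because the dataset is generated by the fixed behavior pair $(\bpc)$, the feature vectors $\{\phi_h^\tau\}_{\tau=1}^K$ are i.i.d.\ across $\tau$ with second moment $\Sigma_h$ and $\|\phi_h^\tau\|\le 1$. A standard matrix concentration bound for sums of i.i.d.\ bounded rank-one matrices (of the type used in \citet{jin2020pessimism}) then shows that, once $K$ exceeds a threshold depending on $\underline{c}$ and $\log(dH/p)$, with probability at least $1-p/(2H)$ we have $\sum_{\tau=1}^K\phi_h^\tau(\phi_h^\tau)^\top \succeq \tfrac{K}{2}\Sigma_h \succeq \tfrac{\underline{c} K}{2}\,I$, hence $\lambda_{\min}(\Lambda_h) \ge 1 + \underline{c} K/2$. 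A union bound over $h\in[H]$ makes this hold for every $h$ with probability at least $1-p/2$.

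On that event, $\sqrt{\phi_h^\top\Lambda_h^{-1}\phi_h} \le (1+\underline{c} K/2)^{-1/2} \le \sqrt{2/(\underline{c} K)}$ uniformly, so every conditional expectation appearing in Theorem~\ref{thm:ub} is bounded by the same constant; summing over $h$ and substituting $\beta = cdH\sqrt{\zeta}$ gives
\[
\sub((\hat{\pi},\hat{\nu}),x) \;\le\; 4\beta H\sqrt{\tfrac{2}{\underline{c} K}} \;=\; c'\,dH^2K^{-1/2}\sqrt{\zeta}
\]
with $c' = 4c\sqrt{2/\underline{c}}$. Intersecting with the probability-$(1-p/2)$ event of Theorem~\ref{thm:ub} (applied with $\zeta = \log(4dHK/p)$, so that its failure probability is $p/2$) and taking one more union bound yields the claim with probability $1-p$. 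Notice that this route improves the dependence from $d^{3/2}$ in Corollary~\ref{cor:suff} to $d$, precisely because we bound the bonus through $\lambda_{\min}(\Lambda_h)$ directly rather than through a trace of $\Lambda_h^{-1}$.

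The main obstacle is the matrix-concentration step, i.e.\ pinning down how large $K$ must be so that the empirical second-moment matrix dominates $\tfrac12 K\Sigma_h$ with failure probability at most $p/(2H)$; this is where both the ``sufficiently large $K$'' hypothesis and the minimum-eigenvalue assumption are consumed. The remaining manipulations --- the pointwise bonus bound, the union bound over $h$, and the substitution of $\beta$ --- are routine.
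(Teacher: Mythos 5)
Your proposal is correct and follows essentially the same route as the paper's proof: concentrate $\sum_{\tau}\phi_h^\tau(\phi_h^\tau)^\top$ around $K\Sigma_h$ via matrix Bernstein (the paper's Lemma on matrix concentration), deduce $\lambda_{\min}(\Lambda_h)\gtrsim \underline{c}K$ for all $h$ with probability $1-p/2$ once $K\gtrsim \underline{c}^{-1}\log(dH/p)$, bound the bonus pointwise by $\lambda_{\min}(\Lambda_h)^{-1/2}$, and plug into Theorem~\ref{thm:ub}. The only cosmetic difference is that you phrase the concentration multiplicatively ($\succeq \tfrac{K}{2}\Sigma_h$) while the paper bounds $\|\Lambda_h - I - K\Sigma_h\|_{\mathrm{op}}$ additively, which amounts to the same condition.
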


\begin{proof}
     See Appendix \ref{appendix:pf:cor_well_exp} for a detailed proof.
\end{proof}

    \section{Necessity: Low Relative Uncertainty} \label{sec:minimax:optimality}

    In this section, we show that the low relative uncertainty is also the necessary condition by establishing an information-theoretic lower bound. 

    We have considered two sets of policy pairs, corresponding to two levels of coverage assumptions on the dataset:
    \begin{equation}
        \begin{aligned}
          P_1 = \{(\pi^*, \nu^*) \text{ is an NE}\}; \qquad 
          P_2 = \{(\pi^*, \nu), (\pi, \nu^*): \pi, \nu \text{ are arbitrary}\}.
        \end{aligned}
    \end{equation}
    Clearly, we have $P_1 \subset P_2$. From the discussion in Section~\ref{sec:illustration}, we know that a good coverage of $P_1$ is insufficient, while a good coverage over $P_2$ is sufficient for efficient offline learning. It remains to ask whether there is a coverage assumption weaker than $P_2$ but stronger than $P_1$ that empowers efficient offline learning in our setting. We give the negative answer by providing an information-theoretic lower bound in the following theorem.

    \begin{theorem}\label{theorem:minimax}
    For any algorithm $\algo(\cdot)$ that outputs a Markov policy pair based on $\cD$, there exists a linear game $\mathcal{M}$ and a dataset $\cD$ that is compliant with the underlying MG $\mcm$, such that when $K$ is large enough, it holds that 
  \begin{equation}\label{ineq:minimax}
    \expect_{\mathcal{D}}\left[\dfrac{\sub\left(\algo(\mathcal{D});x_{0}\right)}{\mathrm{RU} (\mcd,x_0)}\right] \geqslant C',
  \end{equation}
 where $C'$ is an absolute constant and $x_0$ is the initial state. The expectation is taken with respect to $\mathbb{P}_{\mcd}$ where $\algo(\cD)$ is a policy pair constructed based on the dataset $\cD$.
\end{theorem}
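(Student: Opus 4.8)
The plan is a two-point (Le Cam) argument that refines the hard instance from the proof of the impossibility result above, so that the relative uncertainty decays with $K$ at exactly the same rate as the unavoidable suboptimality. I would construct two linear games $\cM_1$ and $\cM_2$ sharing the same feature map, initial state $x_0$, and transition kernel, and agreeing on every reward entry except one \emph{pivotal} triple $(s, a^\star, b^\star)$ whose reward is a Bernoulli-type random variable with means differing by $2\varepsilon$ between the two games, where $\varepsilon \asymp K^{-1/2}$. The matrix-game gadget at the relevant step is taken to be a perturbation of a degenerate payoff matrix in the spirit of \eqref{eqn:game_instance}, chosen so that the sign of the perturbation fully determines the equilibrium behavior. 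The data-collection process plays a \emph{deterministic} sequence of action pairs that (i) covers the pivotal direction $\Theta(K)$ times and (ii) covers the remaining equilibrium-relevant directions — those traversed by $(\pi^\star,\nu)$ and $(\pi,\nu^\star)$ for arbitrary $\pi,\nu$, as in \eqref{eqn:action_set} — also $\Theta(K)$ times; since the actions are predetermined and (in the relevant construction) the state is fixed, $\Lambda_h$, hence $\ri(\cD, x_0)$, is deterministic and of order $K^{-1/2}$ for both games.

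The argument then has four pieces. First, \emph{statistical indistinguishability}: because only the rewards are random and the transitions agree, $\mathbb{P}_{\cD}$ under $\cM_1$ versus $\cM_2$ differs only through $\Theta(K)$ i.i.d.\ reward draws at the pivotal triple, so a $\mathrm{KL}$/Pinsker estimate gives $\mathrm{KL}\lesssim K\varepsilon^2 = O(1)$ and, for a suitable constant in $\varepsilon\asymp K^{-1/2}$, $\mathrm{TV}(\mathbb{P}_{\cD}^{\cM_1},\mathbb{P}_{\cD}^{\cM_2})\le 1/2$. Second, a \emph{game-theoretic separation}: by directly computing duality gaps exactly as in the proof of the impossibility result, one shows $\sub_{\cM_1}((\pi,\nu),x_0) + \sub_{\cM_2}((\pi,\nu),x_0)\ge c_1\varepsilon$ for \emph{every} Markov policy pair $(\pi,\nu)$, so no single output is near-optimal in both perturbed games, not even via a mixed policy interpolating between the two equilibria. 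Third, a \emph{relative-uncertainty calculation}: using the explicit $\Lambda_h$ and the fact that both games' NE policies place mass only on well-covered directions, one gets $\ri(\cM_i,\cD,x_0)\le c_2\varepsilon$ for $i=1,2$, where one also checks that the infimum over NE pairs in the definition of $\ri$ does not drop it below order $\varepsilon$. Fourth, \emph{combine}: splitting on whether $\sub_{\cM_1}(\algo(\cD);x_0) < c_1\varepsilon/2$ and using $\mathrm{TV}\le 1/2$, a standard testing argument yields $\max_{i\in\{1,2\}}\expect_{\cD\sim\cM_i}[\sub_{\cM_i}(\algo(\cD);x_0)]\ge c_1\varepsilon/8$; dividing by the deterministic bound $\ri\le c_2\varepsilon$ gives \eqref{ineq:minimax} with $C' = c_1/(8c_2)$, and we take $\cM$ to be whichever of $\cM_1,\cM_2$ attains the maximum. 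The hypothesis ``$K$ large enough'' is what makes $\varepsilon$ small enough for the degeneracy/linearization estimates and lets the constants stabilize.

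I expect the \emph{game-theoretic separation} to be the main obstacle: an $\varepsilon$-size reward perturbation only moves an equilibrium when the unperturbed game is degenerate at that point, so the $3\times 3$ gadget of \eqref{eqn:game_instance}--\eqref{eqn:game_instance_est} must be modified so that (a) for each sign of the perturbation the NE is essentially unique and supported on a distinct action pair, and (b) every policy pair — including arbitrary mixtures interpolating between the two equilibria — incurs duality gap $\Omega(\varepsilon)$ in at least one of the two games, uniformly. A secondary difficulty is the bookkeeping to keep the construction a bona fide linear MG (features of norm $\le 1$, parameters bounded as in Assumption~\ref{assumption:linear:MG}, rewards in $[0,1]$, signed-measure transitions) while simultaneously forcing $\ri\asymp\varepsilon$ and verifying that the $\inf$ over NE pairs does not shrink $\ri$; both become routine once the gadget and its embedding into $\mathbb{R}^d$ are fixed.
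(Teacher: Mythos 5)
Your proposal follows the same Le Cam skeleton as the paper (two indistinguishable instances, KL/Pinsker, an explicit upper bound on $\ri(\mathcal{D},x_0)$, then divide), but hiding the perturbation in a single reward entry breaks the argument at the step ``$\ri(\mathcal{D},x_0)\le c_2\varepsilon$''. The relative uncertainty is a \emph{sum over all $H$ steps}, and in any linear MG every visited triple has $\|\phi(s,a,b)\|\ge 1/\sqrt{d}$ (since $\phi(s,a,b)^{\top}\mu_h(\mathcal{S})=1$ and $\|\mu_h(\mathcal{S})\|\le\sqrt{d}$), while $\lambda_{\max}(\Lambda_h)\le 1+K$; hence each step contributes at least order $1/\sqrt{d(1+K)}$ and $\ri(\mathcal{D},x_0)\gtrsim H/\sqrt{dK}$ no matter how you design the gadget or the data. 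Your single-entry perturbation of size $\varepsilon\asymp K^{-1/2}$, covered $\Theta(K)$ times, separates the two games by only $O(\varepsilon)$ in suboptimality, so the ratio you obtain is of order $1/H$ (up to $d$), not an absolute constant, and the claimed matching with Theorem~\ref{thm:ub} uniformly in $H$ is lost. The paper's construction avoids exactly this: it perturbs the \emph{step-$1$ transition probability} ($p$ versus $p^{*}$ with $p^{*}-p\asymp (n_1+n_2)^{-1/2}$), so the gap is amplified through the $H-1$ absorbing reward-collecting steps, giving a separation $\tfrac12(H-1)(p^{*}-p)$ (Lemma~\ref{lemma:reduction}) that scales with $H$ exactly as the bound $\ri\le 2\sqrt{2}H/\sqrt{n^{j^{*}}_{\min}}$ of Lemma~\ref{lemma:upper_ru} does. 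To rescue your route you would have to either under-cover the pivotal entry (e.g.\ $\Theta(K/H^{2})$ visits, so that $\varepsilon\asymp H/\sqrt{K}$ remains statistically hidden) or inflate $d\gtrsim H^{2}$ with small-norm features elsewhere; as written, the relative-uncertainty step fails.

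There is a second, structural problem: in the model of Assumption~\ref{assumption:linear:MG} the reward is the deterministic function $r_h=\phi^{\top}\theta_h$, and your data-collecting actions are predetermined, so if the two games differ in the value of one reward entry, a \emph{single} observation of that entry identifies the game exactly — the total variation distance is $1$ and no Bernoulli/KL indistinguishability argument is available. Your fix (a Bernoulli reward at the pivotal triple) moves the hard instances outside the deterministic-reward class for which the upper bound is stated, so the lower bound would no longer be a statement about the same problem class. The paper sidesteps this deliberately: all rewards are deterministic, all statistical uncertainty sits in the first-step transition, and the dataset (predetermined actions, sampled transitions) is compliant by construction. Your other ingredients — the per-policy separation lemma, the testing step, the deterministic $\Lambda_h$ — are sound in outline and mirror the paper's Lemma~\ref{lemma:reduction} and final assembly, but the two issues above are genuine gaps, not bookkeeping.
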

\begin{proof}
     See Appendix~\ref{sec:proof_lower} for a detailed proof.
\end{proof}

Notably, the lower bound in Theorem~\ref{theorem:minimax} matches the suboptimality upper bound in Theorem~\ref{thm:ub} up to $\beta$ and absolute constant factors and therefore establishes the near-optimality of Algorithm~\ref{alg1}. 
Meanwhile, Theorem~\ref{theorem:minimax} states that the relative uncertainty $\mathrm{RU} (\mcd,x_0)$ correctly captures the hardness of offline MG under the linear function approximation setting, that is, low relative uncertainty is the necessary condition for achieving sample efficiency.

\section{Conclusion} 
In this paper, we make the first attempt to study the two-player zero-sum linear MGs in the offline setting. For such an equilibrium finding problem, we propose a pessimism-based algorithm PMVI, which is the first RL algorithm that can achieve both computational efficiency and minimax optimality up to multiplicative factors involving the dimension and horizon. Meanwhile, we introduce a new notion of relative uncertainty and prove that low relative uncertainty is the necessary and sufficient condition for achieving sample efficiency in offline linear MGs. We believe our work opens up many promising directions for future work, such as how to perform sample-efficient equilibrium learning in the offline zero-sum MGs with general function approximations~\citep{jin2021power,huang2021towards}.

\section*{Acknowledgement}
The authors would like to thank Qiaomin Xie for helpful discussions.
 
\bibliographystyle{ims} 
\bibliography{reference.bib}
\newpage
\appendix
\onecolumn
\section{Proof of Theorem \ref{thm:ub}}  \label{appendix:pf:thm:ub}

\begin{proof}[Proof of Theorem \ref{thm:ub}]
    First, we define the Bellman error as 
    \$
    & \underline{\iota}_h(x, a, b) = \BB_h \underline{V}_{h+1}(x, a, b) - \underline{Q}_h(x, a, b), \\
    & \overline{\iota}_h(x, a, b) = \BB_h \overline{V}_{h+1}(x, a, b) - \overline{Q}_h(x, a, b).
    \$
    
	Our proof relies on the following lemma.
	\begin{lemma} \label{lemma:bellman:error}
	    Let $\cE$ denote the event that 
	    \$
		&0 \le -\overline{\iota}_h(s, a, b) \le 2 \Gamma_h(s, a, b), \\ &0 \le \underline{\iota}_h(s, a, b) \le 2 \Gamma_h(s, a, b).
		\$
		for all $h \in [H]$ and $(s, a, b) \in \cS \times \cA \times \cB$. Then we have $\Pr(\cE) \ge 1 - p$. 
	\end{lemma}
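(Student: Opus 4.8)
\textbf{Proof sketch for Lemma~\ref{lemma:bellman:error}.}
The plan is to reduce the two-sided control of the Bellman errors to a single high-probability statement on the accuracy of the ridge estimates $\underline{w}_h$ and $\overline{w}_h$. First, by the definition of the Bellman operator in \eqref{bellman operator} and the linear structure of $r_h,\mathbb{P}_h$ in Assumption~\ref{assumption:linear:MG}, for \emph{any} bounded $V:\cS\to\RR$ we have $\mathbb{B}_hV(x,a,b)=\phi(x,a,b)^\top\bigl(\theta_h+\int_\cS V(x')\,\d\mu_h(x')\bigr)$; in particular $\mathbb{B}_h\underline{V}_{h+1}(x,a,b)=\phi(x,a,b)^\top w_h$ with $w_h=\theta_h+\int_\cS\underline{V}_{h+1}\,\d\mu_h$, and similarly $\mathbb{B}_h\overline{V}_{h+1}(x,a,b)=\phi(x,a,b)^\top\tilde w_h$. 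I claim that there is an event $\cE_0$ with $\Pr(\cE_0)\ge 1-p$ on which
\begin{equation}\label{eqn:lemBE-key}
\bigl|\phi(x,a,b)^\top(w_h-\underline{w}_h)\bigr|\le\Gamma_h(x,a,b)\qquad\text{and}\qquad\bigl|\phi(x,a,b)^\top(\tilde w_h-\overline{w}_h)\bigr|\le\Gamma_h(x,a,b)
\end{equation}
for all $h\in[H]$ and all $(x,a,b)$. Granting \eqref{eqn:lemBE-key}, the lemma follows from the truncation in \eqref{eq:def:Q}: since $\Pi_{H-h+1}$ is nondecreasing and $\mathbb{B}_h\underline{V}_{h+1}=\phi^\top w_h\in[0,H-h+1]$, the bound $\phi^\top\underline{w}_h-\Gamma_h\le\phi^\top w_h$ gives $\underline{Q}_h=\Pi_{H-h+1}\{\phi^\top\underline{w}_h-\Gamma_h\}\le\Pi_{H-h+1}\{\mathbb{B}_h\underline{V}_{h+1}\}=\mathbb{B}_h\underline{V}_{h+1}$, i.e.\ $\underline{\iota}_h\ge 0$; while $\mathbb{B}_h\underline{V}_{h+1}=\phi^\top w_h\le(\phi^\top\underline{w}_h-\Gamma_h)+2\Gamma_h$ together with $\underline{Q}_h\ge\min\{\phi^\top\underline{w}_h-\Gamma_h,\,H-h+1\}$ and $\mathbb{B}_h\underline{V}_{h+1}\le H-h+1$ gives $\underline{\iota}_h\le 2\Gamma_h$. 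The bounds on $\overline{\iota}_h$ are symmetric, so $\cE_0\subseteq\cE$ and $\Pr(\cE)\ge 1-p$.

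To prove \eqref{eqn:lemBE-key}, I would use $\Lambda_h w_h=w_h+\sum_{\tau=1}^K\phi_h^\tau(\phi_h^\tau)^\top w_h=w_h+\sum_{\tau=1}^K\phi_h^\tau\,\mathbb{B}_h\underline{V}_{h+1}(s_h^\tau,a_h^\tau,b_h^\tau)$ and the closed form of $\underline{w}_h$ in \eqref{eq:def:w} to obtain the decomposition
\begin{equation}\label{eqn:lemBE-decomp}
\phi^\top(w_h-\underline{w}_h)=\phi^\top\Lambda_h^{-1}w_h-\phi^\top\Lambda_h^{-1}\sum_{\tau=1}^K\phi_h^\tau\Bigl(r_h^\tau+\underline{V}_{h+1}(x_{h+1}^\tau)-\mathbb{B}_h\underline{V}_{h+1}(s_h^\tau,a_h^\tau,b_h^\tau)\Bigr).
\end{equation}
Since $\Lambda_h\succeq I$, Cauchy--Schwarz bounds the first term by $\|w_h\|_2\,\normlambda{\phi}$, and $\|w_h\|_2\le\|\theta_h\|+\|\int_\cS\underline{V}_{h+1}\,\d\mu_h\|=O(H\sqrt d)$ by Assumption~\ref{assumption:linear:MG} (using $\|\underline{V}_{h+1}\|_\infty\le H$); the second term is at most $\normlambda{\phi}$ times the self-normalized sum $\bigl\|\sum_\tau\phi_h^\tau\bigl(r_h^\tau+\underline{V}_{h+1}(x_{h+1}^\tau)-\mathbb{B}_h\underline{V}_{h+1}(s_h^\tau,a_h^\tau,b_h^\tau)\bigr)\bigr\|_{\Lambda_h^{-1}}$. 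As $\Gamma_h=\beta\,\normlambda{\phi}$, it therefore suffices to bound this self-normalized sum by $O(dH\sqrt\zeta)$ uniformly over $h$, and then to take $\beta=cdH\sqrt\zeta$ for a sufficiently large constant $c$.

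The main obstacle — and the only non-routine step — is that $\underline{V}_{h+1}$ depends on the dataset, in particular on $\{x_{h+1}^\tau\}_\tau$, so the summands above are not a martingale-difference sequence and a uniform covering argument is needed. For a \emph{fixed} bounded $V$, compliance (Assumption~\ref{assump:compliant}) makes $\{r_h^\tau+V(x_{h+1}^\tau)-\mathbb{B}_hV(s_h^\tau,a_h^\tau,b_h^\tau)\}_\tau$ a martingale-difference sequence with respect to the natural filtration, with increments bounded by $H+1$, so a standard self-normalized tail bound for vector-valued martingales (as in the linear-MDP analysis of \citet{jin2020provably}) gives $\bigl\|\sum_\tau\phi_h^\tau(\cdots)\bigr\|_{\Lambda_h^{-1}}^2=O\bigl(H^2(d\log K+\log(1/\delta))\bigr)$ with probability $1-\delta$. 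Unrolling Algorithm~\ref{alg1} shows that $\underline{V}_{h+1}(\cdot)=\mathrm{val}\bigl(\Pi_{H-h}\{\phi(\cdot,\cdot,\cdot)^\top\underline{w}_{h+1}-\beta\sqrt{\phi(\cdot,\cdot,\cdot)^\top\Lambda_{h+1}^{-1}\phi(\cdot,\cdot,\cdot)}\}\bigr)$, where $\mathrm{val}(\cdot)$ denotes the value of a zero-sum matrix game, so $\underline{V}_{h+1}$ (and likewise $\overline{V}_{h+1}$) lies in the parametric class
\$
\mathcal{V}=\Bigl\{\,x\mapsto\mathrm{val}\bigl(\Pi_{H}\{\phi(x,\cdot,\cdot)^\top w\pm\beta\sqrt{\phi(x,\cdot,\cdot)^\top A^{-1}\phi(x,\cdot,\cdot)}\}\bigr)\;:\;\|w\|_2\le L,\ A\succeq I\,\Bigr\},\quad L=O(H\sqrt d).
\$
Because the value of a zero-sum matrix game is $1$-Lipschitz in the payoff entries under the entrywise sup-norm (and $\Pi_H$ is $1$-Lipschitz), the $\varepsilon$-covering number of $\mathcal{V}$ obeys $\log\mathcal{N}_\varepsilon\lesssim d\log(1+L/\varepsilon)+d^2\log(1+\beta^2K/\varepsilon^2)$, exactly as in the linear-MDP setting. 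A union bound over a net of $\mathcal{V}$ of scale $\varepsilon\asymp 1/K$, over both value functions, and over $h\in[H]$ — together with the crude Lipschitz control $\bigl|\sum_\tau\phi_h^\tau(V-V')(x_{h+1}^\tau)\bigr|\le K\|V-V'\|_\infty$ of the discretization residual — yields $\bigl\|\sum_\tau\phi_h^\tau\bigl(r_h^\tau+\underline{V}_{h+1}(x_{h+1}^\tau)-\mathbb{B}_h\underline{V}_{h+1}(s_h^\tau,a_h^\tau,b_h^\tau)\bigr)\bigr\|_{\Lambda_h^{-1}}\le c'dH\sqrt\zeta$ (and the analogue for $\overline{V}_{h+1}$) simultaneously for all $h$, on an event of probability at least $1-p$, with $\zeta=\log(2dKH/p)$. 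Plugging this and the bound on the first term into \eqref{eqn:lemBE-decomp} and choosing $\beta=cdH\sqrt\zeta$ establishes \eqref{eqn:lemBE-key}, and hence the lemma.
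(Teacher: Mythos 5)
Your proposal is correct and follows essentially the same route as the paper's proof: the same regression-error decomposition into $\phi^\top\Lambda_h^{-1}w_h$ (bounded by $H\sqrt{d}\,\|\phi\|_{\Lambda_h^{-1}}$ via Assumption~\ref{assumption:linear:MG}) plus a self-normalized noise term, the same uniform concentration via a covering/union-bound over the data-dependent value class using the non-expansiveness of the matrix-game value (the paper's Lemma~\ref{lemma:nonexpand}), and the same truncation case analysis to convert the $\Gamma_h$-accuracy of the ridge estimates into the two-sided Bellman-error bounds. The only slip is minor: the covering class must contain the value functions built from the ridge estimates $\underline{w}_{h+1},\overline{w}_{h+1}$, whose norms are only bounded by $H\sqrt{dK}$ rather than $O(H\sqrt{d})$ (as in the paper's class \eqref{eq:def:function:class}), which changes only a logarithmic factor already absorbed in $\zeta$.
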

    \begin{proof}
    See Appendix~\ref{appendix:pf:lemma:bellman:error} for a detailed proof. 
    \end{proof}
    Under this event, we also have the following lemma to ensure that our estimated value functions are optimistic.
	\begin{lemma} \label{lemma:optimism}
		Under the event $\cE$, we have 
		\$
		& \underline{V}_h(x) \le V_h^{\hat{\pi}, *}(x), 
		\qquad V_h^{*, \hat{\nu}}(x) \le \overline{V}_h(x) .
		\$
	\end{lemma}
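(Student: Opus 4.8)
The plan is to prove both inequalities by backward induction on $h$, from $h = H+1$ down to $h = 1$, running the two claims in parallel; I will write out the argument only for $\underline{V}_h(x) \le V_h^{\hat{\pi}, *}(x)$, since $V_h^{*,\hat{\nu}}(x) \le \overline{V}_h(x)$ follows by the mirror-image argument (exchange the two players, replace $(\underline{V},\underline{Q},\underline{\iota})$ by $(\overline{V},\overline{Q},\overline{\iota})$, and reverse every inequality, using the bound $-\overline{\iota}_h \ge 0$ from $\cE$ in place of $\underline{\iota}_h \ge 0$). The base case $h = H+1$ is immediate from the initialization $\underline{V}_{H+1}\equiv 0 = V_{H+1}^{\hat{\pi},*}$.

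For the inductive step, assume $\underline{V}_{h+1}(x') \le V_{h+1}^{\hat{\pi},*}(x')$ for all $x'$ and fix an arbitrary min-player policy $\nu$. First I would compare the $Q$-functions. Since $V_{h+1}^{\hat{\pi},*} = \inf_{\nu'} V_{h+1}^{\hat{\pi},\nu'} \le V_{h+1}^{\hat{\pi},\nu}$, the induction hypothesis gives $\underline{V}_{h+1} \le V_{h+1}^{\hat{\pi},\nu}$ pointwise. Applying the Bellman operator of \eqref{bellman operator}, which is monotone ($V \le V'$ implies $\bellman V \le \bellman V'$ directly from its definition), and then the Bellman equation \eqref{eq:bellmam_eq} for the pair $(\hat{\pi},\nu)$, I obtain $\bellman \underline{V}_{h+1}(x,a,b) \le \bellman V_{h+1}^{\hat{\pi},\nu}(x,a,b) = Q_h^{\hat{\pi},\nu}(x,a,b)$ for all $(x,a,b)$. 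On the event $\cE$ of Lemma~\ref{lemma:bellman:error} we have $\underline{\iota}_h \ge 0$, i.e.\ $\underline{Q}_h(x,a,b) \le \bellman \underline{V}_{h+1}(x,a,b)$; chaining the two bounds yields $\underline{Q}_h(x,a,b) \le Q_h^{\hat{\pi},\nu}(x,a,b)$ for every $(x,a,b)$.

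Next I would pass from this pointwise $Q$-comparison to the claimed value comparison using the matrix-game structure. By construction $(\hat{\pi}_h(\cdot\mid x),\nu'_h(\cdot\mid x))$ is a Nash equilibrium of the matrix game with payoff $\underline{Q}_h(x,\cdot,\cdot)$, and $\underline{V}_h(x)$ is its value; because an equilibrium strategy of the max-player in a zero-sum matrix game secures at least the game value against \emph{every} opponent response, in particular against each pure response $b \in \cA_2$, we get $\underline{V}_h(x) \le \EE_{a\sim\hat{\pi}_h(\cdot\mid x)}[\underline{Q}_h(x,a,b)]$ for all $b$. Averaging over $b\sim\nu_h(\cdot\mid x)$ and then invoking the $Q$-comparison above,
\begin{equation*}
\underline{V}_h(x) \;\le\; \EE_{a\sim\hat{\pi}_h(\cdot\mid x),\,b\sim\nu_h(\cdot\mid x)}\big[\underline{Q}_h(x,a,b)\big] \;\le\; \EE_{a\sim\hat{\pi}_h(\cdot\mid x),\,b\sim\nu_h(\cdot\mid x)}\big[Q_h^{\hat{\pi},\nu}(x,a,b)\big] \;=\; V_h^{\hat{\pi},\nu}(x).
\end{equation*}
Since $\nu$ was arbitrary, taking the infimum over $\nu$ gives $\underline{V}_h(x) \le \inf_\nu V_h^{\hat{\pi},\nu}(x) = V_h^{\hat{\pi},*}(x)$, closing the induction.

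The main obstacle is exactly the subtlety in the third paragraph: $\underline{V}_h$ is built from $\hat{\pi}_h$ paired with its \emph{own} equilibrium opponent $\nu'_h$, whereas $V_h^{\hat{\pi},\nu}$ pairs $\hat{\pi}_h$ with an arbitrary $\nu_h$, so the two expectations cannot be compared termwise. The resolution — that a Nash strategy of the maximizer guarantees at least the game value against every pure deviation of the minimizer — is the only place where the proof genuinely uses that $(\hat{\pi}_h,\nu'_h)$ is an equilibrium rather than merely some policy pair. Everything else is the standard pessimism-implies-optimism induction of \citet{jin2020pessimism}, duplicated for the two value sequences of the game.
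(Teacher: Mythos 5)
Your proof is correct and is essentially the paper's own argument: backward induction over $h$, combining $\underline{\iota}_h \ge 0$ (resp.\ $-\overline{\iota}_h\ge 0$) on the event $\cE$ with the monotonicity of $\BB_h$ and the equilibrium property of $(\hat{\pi}_h,\nu'_h)$ (resp.\ $(\pi'_h,\hat{\nu}_h)$) for the matrix game with payoff $\underline{Q}_h$ (resp.\ $\overline{Q}_h$). The only cosmetic difference is that you fix an arbitrary $\nu$ and take an infimum at the end, while the paper decomposes the difference directly at the best-response pair $(\hat{\pi},\mathrm{br}(\hat{\pi}))$; the two uses of the NE/security property are equivalent.
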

	\begin{proof}
		See Appendix~\ref{appendix:pf:lemma:optimism} for a detailed proof.
	\end{proof}
	Back to our proof, we decompose the suboptimality gap as
	\begin{equation}
	 \label{eq:330}
	\sub\big((\hat{\pi}, \hat{\nu}), x\big) = V_1^{*, \hat{\nu}}(x) - V_1^{\hat{\pi}, *}(x) = \underbrace{V_1^{*, \hat{\nu}}(x) - V_1^*(x)}_{\rm (i)} + \underbrace{V_1^*(x) - V_1^{\hat{\pi}, *}(x)}_{\rm (ii)}.
	\end{equation}
	For term (i), by Lemma~\ref{lemma:optimism}, we have
	\begin{equation}
	 \label{eq:331}
	{\rm (i)} \le \overline{V}_1(x) - V_1^*(x) 
	\le \overline{V}_1(x) - V_1^{\pi', \nu^*}(x),
	\end{equation}
	where the last inequality follows from the fact that $(\pi^*, \nu^*)$ is the NE. Then we can use the following lemma to decompose the term $\overline{V}_1(x) - V_1^{{\pi'}, \nu^*}(x)$.
	\begin{lemma}[Value Difference Lemma] \label{lemma:value:difference}
		Given an MG $(\mathcal{S},\mathcal{A}, \cB, r,H)$. Let $\hat{\pi}\otimes\hat{\nu} = \{ \hat{\pi}_h\otimes\hat{\nu}_h: \cS \rightarrow \Delta(\cA_1) \times \Delta(\cA_2)\}_{h \in [H]}$ be a product policy, $(\pi, \nu)$ be a policy pair, and $\{\qfunhat_h\}_{h=1}^H$ be any estimated $Q$-functions. For any $h \in [H]$ , we define the estimated value function $\hat{V}_h: \mathcal{S} \rightarrow \mathbb{R}$ by setting $ \hat{V}_h (x) = \langle\qfunhat_h (x,\cdot,\cdot),  \hat{\pi}_h(\cdot|x)\otimes\hat{\nu}_h(\cdot|x) \rangle$ for all $x \in \mathcal{S} $. For all $x \in \mathcal{S} $,  
          \begin{align*}
            \hat{V}_1(x)-V_1^{\pi, \nu}(x) &= \sum_{h = 1}^H \EE_{{\pi}, \nu} \Big[ \langle \hat{Q}_h(s_h, \cdot, \cdot), \hat{\pi}_h(\cdot|s_h)\otimes\hat{\nu}_h(\cdot|s_h) - {\pi}_h(\cdot|s_h) \otimes \nu_h(\cdot|s_h) \rangle | s_1 = x\Big] \\
            &\qquad + \sum_{h = 1}^H \expect_{\pi, \nu}\left[\qfunhat_h(s_h,a_h,b_h) - \bellman\hat{V}_{h+1}(s_h,a_h,b_h)|s_1 = x\right].
          \end{align*}
	\end{lemma}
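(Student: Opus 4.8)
The plan is to prove the identity by a one-step decomposition of the error $\delta_h(x):=\hat{V}_h(x)-V_h^{\pi,\nu}(x)$, followed by telescoping over $h=1,\dots,H$ along the trajectory induced by $(\pi,\nu)$; equivalently one can run a backward induction over $h$ on the claim that $\delta_h(x)=\sum_{h'=h}^{H}\expect_{\pi,\nu}[\,\cdots\mid s_h=x\,]$ with the summand exactly as on the right-hand side of the lemma. Throughout I use the boundary convention $\hat{V}_{H+1}\equiv V_{H+1}^{\pi,\nu}\equiv 0$, so that $\delta_{H+1}\equiv 0$, together with the elementary facts $\hat{V}_h(x)=\langle\qfunhat_h(x,\cdot,\cdot),\hat{\pi}_h(\cdot\mid x)\otimes\hat{\nu}_h(\cdot\mid x)\rangle$ (the definition of $\hat{V}_h$) and $V_h^{\pi,\nu}(x)=\langle Q_h^{\pi,\nu}(x,\cdot,\cdot),\pi_h(\cdot\mid x)\otimes\nu_h(\cdot\mid x)\rangle$ with $Q_h^{\pi,\nu}=\bellman V_{h+1}^{\pi,\nu}$, the latter being the Bellman equation~\eqref{eq:bellmam_eq}.

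First I would expand $\delta_h$ at a fixed state $x$: adding and subtracting $\langle\qfunhat_h(x,\cdot,\cdot),\pi_h(\cdot\mid x)\otimes\nu_h(\cdot\mid x)\rangle$ and $\bellman\hat{V}_{h+1}$ (evaluated at $(x,\cdot,\cdot)$) inside the pairing, one obtains
\begin{align*}
\delta_h(x)&=\big\langle\qfunhat_h(x,\cdot,\cdot),\,\hat{\pi}_h\otimes\hat{\nu}_h-\pi_h\otimes\nu_h\big\rangle+\big\langle\qfunhat_h(x,\cdot,\cdot)-\bellman\hat{V}_{h+1}(x,\cdot,\cdot),\,\pi_h\otimes\nu_h\big\rangle\\
&\quad+\big\langle\bellman\hat{V}_{h+1}(x,\cdot,\cdot)-\bellman V_{h+1}^{\pi,\nu}(x,\cdot,\cdot),\,\pi_h\otimes\nu_h\big\rangle.
\end{align*}
The key observation is that the Bellman operator in~\eqref{bellman operator} is affine in its argument, with a reward term independent of the value function, so $\bellman\hat{V}_{h+1}(x,a,b)-\bellman V_{h+1}^{\pi,\nu}(x,a,b)=\expect[\delta_{h+1}(s_{h+1})\mid(s_h,a_h,b_h)=(x,a,b)]$; consequently the third pairing above equals $\expect_{\pi,\nu}[\delta_{h+1}(s_{h+1})\mid s_h=x]$, i.e.\ one step of the $(\pi,\nu)$-induced transition applied to the error at level $h+1$.

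Finally I would unroll. Fixing $s_1=x$ and applying the one-step identity at $h=1$, the first two pairings are already of the form $\expect_{\pi,\nu}[\,\cdot\mid s_1=x]$ (trivially at $h=1$, and because $a_1\sim\pi_1$, $b_1\sim\nu_1$ for the second pairing); substituting the one-step identity for $\delta_2(s_2)$ inside $\expect_{\pi,\nu}[\,\cdot\mid s_1=x]$, then for $\delta_3(s_3)$, and so on down to $h=H$, and using the tower property and the Markov property of the trajectory to merge the nested conditional expectations, the accumulated terms are exactly the right-hand side of the lemma, while the leftover tail $\expect_{\pi,\nu}[\delta_{H+1}(s_{H+1})\mid s_1=x]$ vanishes by the boundary convention. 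I do not expect a genuine obstacle here; the only step needing care is precisely this last merging of nested conditional expectations, i.e.\ checking that the inner expectations over $(a_{h+1},b_{h+1},s_{h+2},\dots)$ collapse into the single outer expectation $\expect_{\pi,\nu}[\,\cdot\mid s_1=x]$. If a cleaner write-up is preferred I would instead present the argument as the backward induction noted at the outset, whose inductive step is exactly the one-step identity above.
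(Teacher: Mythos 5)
Your argument is correct: the one-step decomposition of $\hat{V}_h-V_h^{\pi,\nu}$ into the policy-difference pairing, the Bellman-error pairing, and the propagated error $\EE_{\pi,\nu}[\delta_{h+1}(s_{h+1})\mid s_h=x]$, telescoped with $\hat{V}_{H+1}=V_{H+1}^{\pi,\nu}=0$, is exactly the standard extended value-difference argument that the paper defers to (Section B.1 of \citet{cai2020provably}). So you have essentially reproduced the paper's proof, just written out self-containedly, and no further changes are needed.
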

	\begin{proof}
		See Section B.1 in \citet{cai2020provably} for a detailed proof. 
	\end{proof}
	By Lemma~\ref{lemma:value:difference}, we obtain
    \#
     \label{eq:332}
	\overline{V}_1(x) - V_1^{{\pi'}, \nu^*}(x)  &= \sum_{h = 1}^H \EE_{\pi', \nu^*} \Big[ \langle \overline{Q}_h(s_h, \cdot, \cdot),\pi'_h(\cdot|x)\otimes\nuhat_h(\cdot|x)- \pi'_h(\cdot|s_h) \otimes \nu_h^*(\cdot|s_h) \rangle | s_1 = x\Big]  \notag\\
  & \qquad\qquad\qquad - \sum_{h = 1}^H \EE_{\pi', \nu^*} [\overline{\iota}_h(s_h, a_h, b_h) | s_1 = x] .
  \#
	The first term can be bounded by the following lemma.
	\begin{lemma} \label{lemma:optimization:error}
		It holds that 
		\$
		\sum_{h = 1}^H \EE_{\pi', \nu^*} \Big[ \langle \overline{Q}_h(s_h, \cdot, \cdot), \pi'_h(\cdot|s_h)\otimes\hat{\nu}_h(\cdot|s_h)- \pi'_h(\cdot|s_h) \otimes \nu_h^*(\cdot|s_h) \rangle | s_1 = x\Big] \le 0.
		\$
	\end{lemma}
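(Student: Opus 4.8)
\textbf{Proof proposal for Lemma~\ref{lemma:optimization:error}.}
The plan is to invoke the defining best-response property of the Nash equilibrium $(\pi'_h, \hat{\nu}_h)$ of the matrix game with payoff matrix $\overline{Q}_h$, and then observe that the resulting inequality is pointwise in the state, so that taking expectations and summing over $h$ preserves the sign. Recall that in Algorithm~\ref{alg1} the pair $(\pi'_h(\cdot\mid x), \hat{\nu}_h(\cdot\mid x))$ is an NE of the zero-sum matrix game $\overline{Q}_h(x,\cdot,\cdot)$, in which the max-player chooses the row distribution and the min-player the column distribution. By the standard characterization of an NE of a zero-sum matrix game, for every state $x$ the min-player's strategy $\hat{\nu}_h(\cdot\mid x)$ is a best response against $\pi'_h(\cdot\mid x)$, i.e.
\$
\langle \overline{Q}_h(x, \cdot, \cdot),\, \pi'_h(\cdot\mid x)\otimes\hat{\nu}_h(\cdot\mid x)\rangle \;=\; \min_{\nu}\, \langle \overline{Q}_h(x, \cdot, \cdot),\, \pi'_h(\cdot\mid x)\otimes\nu\rangle \;\le\; \langle \overline{Q}_h(x, \cdot, \cdot),\, \pi'_h(\cdot\mid x)\otimes\nu_h^*(\cdot\mid x)\rangle,
\$
where the last step specializes the minimizing $\nu$ to the NE policy $\nu_h^*(\cdot\mid x)$ of the underlying game.

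The first step, then, is to record that this gives, for each fixed $h\in[H]$ and each state $x$,
\$
\big\langle \overline{Q}_h(x, \cdot, \cdot),\, \pi'_h(\cdot\mid x)\otimes\hat{\nu}_h(\cdot\mid x) - \pi'_h(\cdot\mid x)\otimes\nu_h^*(\cdot\mid x)\big\rangle \;\le\; 0.
\$
The second step is to apply this at the random state $s_h$ and take the expectation $\EE_{\pi',\nu^*}[\,\cdot\mid s_1=x]$: since this is an expectation over a probability distribution on states, it is a monotone linear functional, so the inequality is preserved, i.e. $\EE_{\pi',\nu^*}\big[\langle \overline{Q}_h(s_h,\cdot,\cdot), \pi'_h(\cdot\mid s_h)\otimes\hat{\nu}_h(\cdot\mid s_h) - \pi'_h(\cdot\mid s_h)\otimes\nu_h^*(\cdot\mid s_h)\rangle \mid s_1=x\big]\le 0$ for every $h$. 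Summing this over $h=1,\dots,H$ yields exactly the claimed bound.

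There is essentially no serious obstacle here; the only point requiring care is bookkeeping about which player is the minimizer, so that the NE property is applied in the direction that produces a $\le 0$ term (the min-player's strategy $\hat{\nu}_h$ being the best response against $\pi'_h$ is what forces the sign), together with the trivial but necessary observation that conditioning on $s_1=x$ and averaging over the trajectory distribution induced by $(\pi',\nu^*)$ cannot flip the sign of a pointwise-nonpositive quantity.
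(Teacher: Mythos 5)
Your proposal is correct and follows essentially the same argument as the paper: for each state, the NE property of $(\pi'_h,\hat{\nu}_h)$ for the payoff matrix $\overline{Q}_h$ makes $\hat{\nu}_h$ a best response (minimizer) against $\pi'_h$, giving the pointwise inequality against $\nu^*_h$, and taking expectations over the trajectory induced by $(\pi',\nu^*)$ and summing over $h$ preserves the sign. Your write-up simply spells out the best-response step and the monotonicity of expectation in more detail than the paper does.
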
 
	\begin{proof}
	See Appendix~\ref{proof:lemma:optimization:error} for a detailed proof.
	\end{proof}
	Applying Lemma~\ref{lemma:bellman:error} to the second term of \eqref{eq:332} gives
	\$
	- \sum_{h = 1}^H \EE_{\pi', \nu^*} [\overline{\iota}_h(s_h, a_h, b_h) | s_1 = x]  \le 2 \sum_{h = 1}^H \EE_{\pi', \nu^*} [\Gamma_h(s_h, a_h, b_h) | s_1 = x].
	\$
	Putting the above inequalities together we obtain
	\# \label{eq:333}
	{\rm (i)} &\le 2 \sum_{h = 1}^H \EE_{\pi', \nu^*} [\Gamma_h(s_h, a_h, b_h) | s_1 = x] .
	\#
	 Similarly, we can obtain 
	\# \label{eq:334}
	{\rm (ii)} \le 2 \sum_{h = 1}^H \EE_{\pi^*, \nu'} [\Gamma_h(s_h, a_h, b_h) | s_1 = x].
	\#
	Plugging \eqref{eq:333} and \eqref{eq:334} into \eqref{eq:330}, we conclude the proof of Theorem \ref{thm:ub}.
\end{proof}

      \subsection{Proof of Lemma~\ref{lemma:bellman:error}} \label{appendix:pf:lemma:bellman:error}
      \begin{proof}[Proof of Lemma~\ref{lemma:bellman:error}]
          Throughout this proof, we use the shorthands 
          \$
          \phi_h^{\tau}=\phi\statepht, \qquad \phi_h =\phi\stateph, \qquad \phi = \phi\statep.
          \$
          For the simplicity of notation, we also let
          \#
          \epsilon_h^{\tau}(V) = r_h^{\tau}+V(s_{h+1}^{\tau})-\bellman V\statepht . \label{eq:epsilon}
          \#
          By the linear MG assumption, we have $(\bellman\overline{V}_{h+1})(s,a,b) = \phi(s,a,b)^\top w_h$, where 
           \$
           w_h = \theta_h+\int_{x \in \mathcal{S}}\overline{V}_{h+1}(x)\mu_h(x)\text{d} x.
           \$
           Then we have 
          \begin{align}\label{eq:decomp_of_uncerty}
            &\abso{\phi^\top \overline{w}_h - (\bellman\overline{V}_{h+1})\statep} \notag\\
            &\qquad= \abso{\phi^\top\left(\Lambda_h^{-1} \sum_{\tau=1}^K \left(r_h^{\tau} + \overline{V}_{h+1}(s_{h+1}^{\tau})\right)\phi_h^{\tau}\right)-(\bellman\overline{V}_{h+1})\statep }  \notag\\
            &\qquad= \abso{\phi^\top\Lambda_h^{-1} \sum^K_{\tau=1} \epsilon_h^{\tau}(\overline{V}_{h+1})\phi_h^{\tau} + \phi^\top\Lambda_h^{-1}\sum_{\tau=1}^K(\bellman\overline{V}_{h+1})\statepht\phi_h^{\tau}-\phi^\top w_h} \notag \\
            &\qquad= \abso{\phi^\top\Lambda_h^{-1} \sum_{\tau=1}^K \epsilon_h^{\tau}(\overline{V}_{h+1})\phi_h^{\tau} +  \phi^\top\Lambda_h^{-1}\sum_{\tau=1}^K\phi_h^{\tau}(\phi_h^{\tau})^{\top} w_h-\phi^\top w_h  } \notag\\
            &\qquad= \abso{\phi^\top\Lambda_h^{-1} \sum_{\tau=1}^K \epsilon_h^{\tau}(\overline{V}_{h+1})\phi_h^{\tau} + \phi^\top\Lambda_h^{-1}(\Lambda_h-I) w_h-\phi^\top w_h  } \notag\\
            &\qquad= \abso{\phi^\top\Lambda_h^{-1} \sum_{\tau=1}^K \epsilon_h^{\tau}(\overline{V}_{h+1})\phi_h^{\tau}-\phi^\top\Lambda_h^{-1}w_h }  \notag \\
            &\qquad\leqslant \underbrace{\abso{\phi^\top\Lambda_h^{-1}w_h }}_{\displaystyle{\text{(i)}}}+\underbrace{\abso{\phi^\top\Lambda_h^{-1} \sum_{\tau=1}^K \epsilon_h^{\tau}(\overline{V}_{h+1})\phi_h^{\tau}}}_{\displaystyle{\text{(ii)}}}   .
          \end{align}
          Now we estimate term (i)
          \begin{align} \label{eq:est_of_i}
            \abso{ \text{(i)} } \leqslant \normof{\phi}_{\Lambda_h^{-1}}\normof{w_h}_{\Lambda_h^{-1}}
            \leqslant\normof{w_h}\normlambda{\phi}\leqslant H\sqrt{d}\normlambda{\phi},
          \end{align}
          where the second inequality follows from $\normof{\Lambda_h^{-1}}_{\text{op}} \leqslant 1$ and the third inequality follows from Lemma \ref{lemma:est_of_omega}. Here $\Vert\cdot\Vert_{\text{op}}$ denotes the operator norm of a matrix.


          Supposed that $\| V - \overline{V}_{h+1} \|_{\infty} \leqslant \epsilon$, by the definition of $\epsilon_h^{\tau}(V)$ in \eqref{eq:epsilon}, we have 
          \begin{align*}
            &\abso{\epsilon_h^{\tau}(\overline{V}_{h+1})-\epsilon_h^{\tau}(V)} \\
            &\qquad= 
          \abso{r_h^{\tau}+\overline{V}_{h+1}(s_{h+1}^{\tau})-\bellman\overline{V}_{h+1}\statepht -r_h^{\tau}-V(s_{h+1}^{\tau})+\bellman V\statepht} \notag \\ 
          &\qquad \leqslant\abso{\overline{V}_{h+1}(s_{h+1}^{\tau})-V(s_{h+1}^{\tau})} + \abso{\bellman\overline{V}_{h+1}\statepht-\bellman V\statepht}  \notag \leqslant 2\epsilon .
        \end{align*}
        Thus we have 
          \begin{align*}
            \abso{\phi^\top\Lambda_h^{-1} \sum_{\tau=1}^K \left(\epsilon_h^{\tau}(\overline{V}_{h+1})-\epsilon_h^{\tau}(V)\right)\phi_h^{\tau}} 
            &\leqslant \sum_{\tau=1}^K\abso{\phi^\top\Lambda_h^{-1} \left(\epsilon_h^{\tau}(\overline{V}_{h+1})-\epsilon_h^{\tau}(V)\right)\phi_h^{\tau}} \\
            & \leqslant \sum_{\tau=1}^K\abso{\epsilon_h^{\tau}(\overline{V}_{h+1})-\epsilon_h^{\tau}(V)}\normlambda{\phi}\normlambda{\phi_h^{\tau}} \\
            &\leqslant \sum_{\tau=1}^K\abso{\epsilon_h^{\tau}(\overline{V}_{h+1})-\epsilon_h^{\tau}(V)}\normlambda{\phi}\normof{\phi_h^{\tau}} \leqslant 2\epsilon K \normlambda{\phi},
          \end{align*} 
          where the last inequality holds since $\normof{\phi}\leqslant 1$. We define two function classes as 
\begin{equation}
\begin{aligned} \label{eq:def:function:class}
& \underline{\cQ}_{h} = \Pi_{H - h + 1}\left\{\phi(\cdot, \cdot, \cdot)^\top w - \beta \sqrt{\phi^\top \Lambda^{-1} \phi}\right\}, \\
& \overline{\cQ}_{h} = \Pi_{H - h + 1}\left\{\phi(\cdot, \cdot, \cdot)^\top w + \beta \sqrt{\phi^\top \Lambda^{-1} \phi}\right\},
\end{aligned}
\end{equation}
where the parameters $(w, \Lambda)$ satisfy $\|w\| \le H\sqrt{dK}$ and $\lambda_{\min}(\Lambda) \ge 1$. Let $\underline{\cQ}_{h, \epsilon}$ and $\overline{\cQ}_{h, \epsilon}$ be the $\epsilon$-nets of $\underline{\cQ}_{h}$ and $\overline{\cQ}_{h}$, respectively. Choose the pair $(\underline{Q}'_{h+1},\overline{Q}'_{h+1})\in \underline{\mathcal{Q}}_{h, \epsilon}\times \overline{\mathcal{Q}}_{h, \epsilon}$ such that 
          $$ \Vert \overline{Q}_{h+1}- \overline{Q}'_{h+1}\Vert_{\infty} \leqslant \epsilon,\qquad \Vert \underline{Q}_{h+1}- \underline{Q}'_{h+1}\Vert_{\infty} \leqslant \epsilon,$$
          where $\epsilon = 1/KH$. Let $V'_{h+1}(s)$ be the NE value of payoff matrix $\overline{Q}'_{h+1}(s,\cdot,\cdot)$. By Lemma \ref{lemma:nonexpand} we have
          $$\abso{V'_{h+1}(s)- \overline{V}_{h+1}(s)}\leqslant \epsilon.$$ Then we obtain 
          \begin{align} \label{eq:decomp_of_ii}
            \abso{ (\text{ii})} &= \abso{  \phi^\top\Lambda_h^{-1} \sum_{\tau=1}^K \left(\epsilon_h^{\tau}(\overline{V}_{h+1})-\epsilon_h^{\tau}(V)\right)\phi_h^{\tau} + 
            \phi^\top\Lambda_h^{-1}\sum_{\tau=1}^K\epsilon_h^{\tau}(\overline{V}'_{h+1})\phi_h^{\tau}  } \notag \\ 
            &\leqslant  \abso{ \phi^\top\Lambda_h^{-1} \sum_{\tau=1}^K \left(\epsilon_h^{\tau}(\overline{V}_{h+1})-\epsilon_h^{\tau}(\overline{V}'_{h+1})\right)\phi_h^{\tau}}+
            \abso{ \phi^\top\Lambda_h^{-1}\sum_{\tau=1}^K\epsilon_h^{\tau}(\overline{V}'_{h+1})\phi_h^{\tau}}  \notag \\
            &\leqslant 2\epsilon K\normlambda{\phi} + \abso{\phi^\top\Lambda_h^{-1}\sum_{\tau=1}^K\epsilon_h^{\tau}(\overline{V}'_{h+1})\phi_h^{\tau}}   \notag \\
            & \leqslant 2\epsilon K\normlambda{\phi} + \underbrace{\normlambda{\sum_{\tau=1}^K\epsilon_h^{\tau}(\overline{V}'_{h+1})\phi_h^{\tau}}\normlambda{\phi}}_{\displaystyle{(\text{iii})}} .
          \end{align}
          For any $\tau \in [K], h \in [H]$, we define
            \[\mathcal{F}_{h,\tau-1} := \sigma\left(\{(s_h^j,a_h^j,b_h^j)\}_{j=1}^{\min\{\tau+1,K\}} \cup \{(r_h^j,s_{h+1}^j)\}_{j=1}^\tau\right),\]
            where $\sigma(\cdot)$ is the $\sigma-$algebra generated by a set of random variables. For all $\tau \in [K]$, we have $\phi(s_h^\tau,a_h^\tau,b_h^\tau) \in \mathcal{F}_{h,\tau-1}$, as $(s_h^\tau,a_h^\tau,b_h^\tau)$ is $\mathcal{F}_{h,\tau-1}-$measurable. Besides, for any fix function $V:\mathcal{S} \rightarrow [0,H-1]$ and all $\tau \in [K]$, we have 
            \[\epsilon_h^\tau(V) = r_h^\tau+V(s_{h+1}^\tau)-(\bellman V)(s_h^\tau,a_h^\tau,b_h^\tau) \in\mathcal{F}_{h,\tau-1}\]
          and $\{\epsilon_h^\tau(V)\}_{\tau=1}^K$ is a stochastic process adapted to the filtration $\{\mathcal{F}_{h,\tau}\}_{\tau=0}^K$. By Lemma \ref{lemma:concentration}, we  obtain an estimation of term (iii). For any $\delta \in (0,1)$,
            \$\mathbb{P}\left(\normlambda{\sum_{\tau=1}^K\epsilon_h^{\tau}(V)\phi_h^{\tau}}^2\geqslant 2H^2 \log\left(\dfrac{\det(\Lambda_h)^{{1/2}}}{\delta\det(I)^{1/2}}\right) \right)\leqslant \delta .\$
          Since 
          \[\normof{\Lambda_h}_{\text{op}} = \normof{I + \sum_{\tau=1}^K \phi_h^\tau(\phi_h^\tau)^\top}_\text{op} \leqslant 1 + \sum_{\tau=1}^K\normof{\phi_h^\tau(\phi_h^\tau)^\top}_\text{op} \leqslant 1+K,\]
          we have $\det(\Lambda_h)\leqslant (1+K)^d$, which further implies   
            \begin{align*}
              \mathbb{P}&\left(\normlambda{\sum_{\tau=1}^K\epsilon_h^{\tau}(V)\phi_h^{\tau}}^2 \geqslant H^2\left(d\log(1+K)+2\log(\dfrac{1}{\delta})\right)\right) \\
               &\quad\leqslant \mathbb{P}\left(\normlambda{\sum_{\tau=1}^K\epsilon_h^{\tau}(V)\phi_h^{\tau}}^2\geqslant 2H^2 \log\left(\dfrac{\det(\Lambda_h)^{{1/2}}}{\delta\det(I)^{1/2}}\right) \right)\leqslant \delta 
            .\end{align*}
            By Lemma \ref{lemma:covering}, $\abso{\underline{\cQ}_{\epsilon, h}\times \overline{\cQ}_{\epsilon, h}}=\mathcal{N}_{h,\epsilon}^2 \leqslant \left(1+\frac{4H\sqrt{dK}}{\epsilon}\right)^{2d}\left(1+\frac{8\beta^2\sqrt{d}}{\epsilon^2}\right)^{2d^2}$. 
            Thus, by the union bound argument we have 
            \# \label{eq:est_of_iii}
            |{\rm (iii)}| \lesssim dH\sqrt{\zeta}\|\phi\|_{\Lambda_h^{-1}} 
            \#
            with probability at least $1 - p/2$.
            Combining \eqref{eq:est_of_i}, \eqref{eq:decomp_of_ii}, and \eqref{eq:est_of_iii}, we have
            \$
            \abso{\phi^\top\overline{w}_h - (\bellman\overline{V}_{h+1})\statep} \le \beta \|\phi\|_{\Lambda_h^{-1}} = \Gamma_h(s, a, b)
            \$
            with probability at least $1-p/2$. Then, we have
          \[\phi^\top\overline{w}_h +\Gamma_h(s,a,b) \ge \bellman\overline{V}_{h+1}(s,a,b) \ge -(H-h+1).\]
          The last inequality follows from $\abso{r_h}\leqslant 1$ and $\abso{\overline{V}_{h+1}(s,a,b)}\leqslant H-h$. The inequality implies 
          \[\overline{Q}_h\statep = \min\{H-h+1, \phi^\top\overline{w}_h + \Gamma_h(s,a,b)\} \le  \phi^\top \overline{w}_h + \Gamma_h(s,a,b).\] 
          Therefore, we have  
           \begin{align}\label{ineq:iota_upperb}
            \overline{\iota}_h(s_h,a_h,b_h) &=   \mathbb{B}_h\overline{V}_{h+1}(s_h,a_h,b_h)-\overline{Q}_h(s_h,a_h,b_h) \notag\\& \ge \mathbb{B}_h\overline{V}_{h+1}(s_h,a_h,b_h)- \phi^\top \overline{w}_h - \Gamma_h(s_h,a_h,b_h) \ge -2\Gamma_h(s_h,a_h,b_h).
           \end{align}
        If $\phi^\top \overline{w}_h + \Gamma_h(s,a,b) \ge H-h+1$, then, we have
        \[\overline{Q}_h\statep=\min\{H-h+1, \phi^\top \overline{w}_h + \Gamma_h(s,a,b)\} = H-h+1. \] 
        Thus, we further obtain that
        \#
        \overline{\iota}_h(s,a,b) =   \mathbb{B}_h \overline{V}_{h+1}(s, a, b) -\overline{Q}_h(s,a,b) = \mathbb{B}_h\overline{V}_{h+1}(s,a,b) - (H-h+1) \le 0.  \label{ineq:iota_lowerb}\#
        Otherwise, $\phi^\top \overline{w}_h + \Gamma_h(s,a,b) \le H-h+1$, which implies $ \overline{Q}_h(s,a,b) = \phi^\top \overline{w}_h  + \Gamma_h(s,a,b)$. In this situation, we have
        \begin{align}\label{ineq:iota_lowerbii}
           \overline{\iota}_h(s,a,b) &=   \mathbb{B}_h\overline{V}_{h+1}(s,a,b)-\overline{Q}_h(s,a,b) \notag\\ &=  \mathbb{B}_h\overline{V}_{h+1}(s,a,b)- \phi^\top \overline{w}_h - \Gamma_h(s,a,b) \le 0.
        \end{align}
        Similarly, we can prove 
        \#
        0 \le \underline{\iota}_h(s, a, b) \le 2\Gamma_h(s, a, b)
        \#
        with probability at least $1 - p/2$. Thus, the event $\cE$ happens with probability at least $1-p$, which concludes our proof.
        \end{proof}  
        
        \subsection{Proof of Lemma~\ref{lemma:optimism}} \label{appendix:pf:lemma:optimism}
        \begin{proof}[Proof of Lemma~\ref{lemma:optimism}]
		We prove the first inequality i.e., 
		$$\underline{V}_h(x) \le V_h^{\hat{\pi}, *}(x) .$$
		We prove it by induction. When $ h=H+1, V_h^{\hat{\pi},*} = \underline{V}_{h}=0,  $ the inequality holds trivially. Now we suppose the inequality holds for step $h +1$, we prove it also holds for step $h$. By definition of value function,
        \begin{align}
            V_h^{\hat{\pi},*}(x) - \underline{V}_{h}(x) &= \EE_{\hat{\pi},*}[Q_h^{\hat{\pi},*}(x,a,b)] - \EE_{\pihat, \nu'}[\underline{Q}_h(x,a,b)]  \notag \\
            & = \EE_{\hat{\pi},*}[Q_h^{\hat{\pi},*}(x,a,b)-\underline{Q}_h(x,a,b)] \notag\\
            &\qquad +\big(\EE_{\hat{\pi},*}[\underline{Q}_h(x,a,b)] - \EE_{\pihat, \nu'}[\underline{Q}_h(x,a,b)]\big). \label{3.1.1}
        \end{align}  
        By the definition that $\underline{\iota}_h(x, a, b) = \BB_h \underline{V}_{h+1}(x, a, b) - \underline{Q}_h(x, a, b)$, we have
        \#
            Q_h^{\hat{\pi},*}(x,a,b)-\underline{Q}_h(x,a,b) = \mathbb{B}_h\big(V^{\hat{\pi},*}_{h+1}(x,a,b)-\underline{V}_{h+1}(x,a,b)\big) + \underline{\iota}_h(x,a,b)\geq 0, \label{3.1.2}
        \#
        where the last inequality follows from Lemma~\ref{lemma:bellman:error} and induction assumption. Meanwhile, by the property of NE, we have
        \begin{align} \label{3.1.3}
            \EE_{\hat{\pi},*}[\underline{Q}_h(x,a,b)] - \EE_{\pihat, \nu'}[\underline{Q}_h(x,a,b)] \ge 0,
    \end{align} 
        Combining (\ref{3.1.1}), (\ref{3.1.2}) and (\ref{3.1.3}), we obtain 
        $$  V_h^{\hat{\pi},*}(x) - \underline{V}_{h}(x) \geq 0,$$
        which concludes the proof.
        \end{proof}
        
        \subsection{Proof of Lemma~\ref{lemma:optimization:error}} \label{proof:lemma:optimization:error}
        \begin{proof}[Proof of Lemma~\ref{lemma:optimization:error}]
          We estimate the each term in the summation. By the fact that $(\pi_h'(\cdot|x), \hat{\nu}_h(\cdot|x))$ is the NE of the matrix game with payoff $\overline{Q}_h(x, \cdot, \cdot)$ for any $x \in \cS$, we have
        \# \label{3.5.2}
        \langle\overline{Q}_h(s_h,\cdot,\cdot),\pi_h'(\cdot|s_h)\otimes\hat{\nu}_h(\cdot|s_h)-\pi'_h(\cdot|s_h)\otimes\nu^*_h(\cdot|s_h) \le 0. 
        \#
        Taking summation over $h \in [H]$, we obtain
        $$ \sum_{h = 1}^H \mathbb{E}_{\pi', \nu^*} \Big[ \langle \overline{Q}_h(s_h, \cdot, \cdot), \pi_h'(\cdot|s_h)\otimes\hat{\nu}_h(\cdot|s_h)- \pi'_h(\cdot|s_h) \otimes \nu_h^*(\cdot|s_h) \rangle | s_1 = x\Big] \le  0, $$
        which concludes the proof.
        \end{proof}

        \section{Proof of Corollary \ref{cor:suff}} \label{appendix:pf:cor_suff}
            \begin{proof}[Proof of Corollary \ref{cor:suff}]
              Fix $(\pi^*, \nu)$. For notational simplicity, we define 
                    \[\Sigma_h(x) = \expect_{\pi^*,\nu}[\phi\stateph\phi^\top\stateph|s_1=x].\] 
              By the assumption, we have 
              \begin{align*}
                \expect_{\pi^*,\nu}\left[\sqrt{\phi_h^\top\Lambda_h^{-1}\phi_h}\right]&\leqslant \expect_{\pi^*,\nu}\left[\sqrt{\phi_h^\top(I + c_1K\Sigma_h(x))^{-1}\phi_h}|s_1 = x\right]\\
                &=  \expect_{\pi^*,\nu}\left[\sqrt{\tr((I + c_1K\Sigma_h(x))^{-1}\phi_h\phi_h^\top)}|s_1 = x\right]\\
                &\leqslant \sqrt{\expect_{\pi^*,\nu}\left[\tr((I + c_1K\Sigma_h(x))^{-1}\phi_h\phi_h^\top)|s_1= x\right]} \\
                & =  \sqrt{[\tr((I + c_1K\Sigma_h(x))^{-1}\Sigma_h(x))]}\\
                & = \sqrt{\dfrac{1}{c_1K}} \cdot \sqrt{\tr((I + c_1K\Sigma_h(x))^{-1}(c_1K\Sigma_h(x)+I-I))}\\
                & = \sqrt{\dfrac{1}{c_1K}} \cdot \sqrt{\tr(I -(I + c_1K\Sigma_h(x))^{-1} )}  \leqslant \sqrt{\dfrac{d}{c_1K}},
              \end{align*}    
              where the second inequality follows from the Cauchy-Schwarz inequality. Thus, for any policy $\nu$, we have
              \begin{align*}
                2\beta\sum_{h=1}^H \expect_{\pi^*,\nu}[\sqrt{\phi_h^\top\Lambda_h^{-1}\phi_h}|s_1=x] \leqslant 2cc_1^{-1/2}d^{3/2}H^2K^{-1/2}\sqrt{\zeta}
              .\end{align*}
              Similarly, for any policy $\pi$, we have 
              \begin{align*}
               2\beta\sum_{h=1}^H \expect_{\pi,\nu^*}[\sqrt{\phi_h^\top\Lambda_h^{-1}\phi_h}|s_1=x] \leqslant 2cc_1^{-1/2}d^{3/2}H^2K^{-1/2}\sqrt{\zeta}
              .\end{align*}
              Let $c' = 4cc_1^{-1/2}$, by the definition of relative uncertainty, we obtain
              \[4\beta \cdot \ri(\mcd,x)\leqslant c'd^{3/2}H^2K^{-1/2}\sqrt{\zeta}.\]
              Combined with Theorem \ref{thm:ub}, we further obtain
              \[\text{SubOpt}(\text{PMVI}(\mcd),x) \leqslant c'd^{3/2}H^2K^{-1/2}\sqrt{\zeta},\]
              which concludes our proof.
            \end{proof}
   
            \section{Proof of Corollary \ref{cor_well_exp}} \label{appendix:pf:cor_well_exp}
            \begin{proof}[Proof of Corollary \ref{cor_well_exp}]
                The proof consists of two steps. In the first step, we use Lemma \ref{lemma:matrix_concertra} for concentration. In the second step, we estimate the suboptimality. Recall that we denote $\phi_h = \phi(s_h, a_h, b_h)$ , $\phi_h^\tau = \phi(s_h^\tau, a_h^\tau, b_h^\tau)$ and $\Sigma_h(x) = \expect_{\overline{\pi},\overline{\nu}}[\phi\stateph\phi^\top\stateph|s_1=x]$. Let
                \[Z_h = \sum_{\tau = 1}^K A_h^\tau  ,\quad  A_h^\tau = \phi_h^\tau (\phi_h^\tau)^\top  - \Sigma_h,\quad \forall h \in [H].\]
              Clearly, we have $\expect_{\bpc}[A_h^\tau] = 0$. Since the trajectories are induced by the behavior policy $(\bpc)$, the $K$ trajectory are i.i.d. and $\{A_h^\tau\}$ are i.i.d..  By Assumption \ref{assumption:linear:MG}, we have $\normof{\phi(s,a,b)}\leqslant 1 $ for any $(s, a, b) \in \cS \times \cA_1 \times \cA_2$, which further implies $\normop{\phi_h^\tau (\phi_h^\tau)^\top} \leqslant 1 $. Then, we have
              \[\normop{\Sigma_h} = \normop{\expect_{\bpc}[\phi_h^\tau (\phi_h^\tau)^\top]} \leqslant \expect_{\bpc}\left[\normop{\phi_h^\tau (\phi_h^\tau)^\top}\right] \leqslant 1  .\]
              Thus, we obtain 
              \$
              \normop{A_h^\tau} \leqslant \normop{\Sigma_h} + \normop{\phi_h^\tau (\phi_h^\tau)^\top} \leqslant 2, 
              \$
              and
               \begin{align*} 
                \normop{A_h^\tau(A_h^\tau)^\top} &\leqslant \normop{A_h^\tau}^2 \leqslant 4 .
               \end{align*}
               Since $\{A_h^\tau\}_{\tau \in [K}$ are i.i.d. and mean-zero, for any $h \in [H]$, we have
               \begin{align*}
               \normop{\expect_{\bpc}[Z_h^\top Z_h]}  = \normop{\sum_{\tau = 1}^K\expect_{\bpc}[A_h^\tau(A_h^\tau)^\top]} &= K \cdot \normop{\expect_{\bpc}[A_h^\tau(A_h^\tau)^\top]}\leqslant 4K 
               .\end{align*}
               Similarly, we can obtain $\normop{\expect_{\bpc}[Z_hZ_h^\top ]}  \leqslant 4K $.
               By Lemma \ref{lemma:matrix_concertra}, we have ,
               \[\mbp(\normop{Z_h}\geqslant t)\leqslant 2d\cdot\exp\left(-\dfrac{t^2/2}{4K+2t/3}\right), \quad \forall t>0.\]
               Let $t = \sqrt{10K\log\left(4dH/p\right)}$, we have
               \[\mbp(\normop{Z_h}\geqslant t ) \leqslant  \dfrac{p}{2H}.\]
               By the definition of $\Lambda_h$, we have 
               \[Z_h = \sum_{\tau=1}^K\phi_h^\tau (\phi_h^\tau)^\top   - K\Sigma_h = \Lambda_h -I-K\Sigma_h.\]
               When $K > 40/\underline{c} \cdot \log\left(4dH/p\right)$, it holds with probability at least $1 - p/2$ that
               \$
               \lambda_{\text{min}}(\Lambda_h) &= \lambda_{\text{min}}(Z_h+I+K\Sigma_h) \\
               &\geqslant K\lambda_{\text{min}}(\Sigma_h)-\normop{Z_h}\geqslant K\left(\underline{c}-\sqrt{10/K \cdot \log\left(4dH/p\right)}\right) \\
               & \geqslant K\underline{c}/2 
               \$
               for all $h\in [H]$. Let $c'' = \sqrt{2/\underline{c}}$, with probability $1-p/2$, we have 
               \[\normop{\Lambda_h^{-1}} \leqslant c''^2K^{-1}\]
               for all $h \in [H]$. Combined the fact that $\normof{\phi}(\cdot, \cdot, \cdot) \leqslant 1 $, for all $h \in [H]$, we have
               \$
              \sqrt{\phi_h^\top \Lambda_h^{-1}\phi_h} \le \normop{\Lambda_h^{-1}} \leqslant c''K^{-1/2}.
               \$ 
              Then, for any policy pair $(\pi,\nu)$, we have 
               \[\sum_{h=1}^H\expect_{\pi,\nu}\left[\sqrt{\phi_h^\top \Lambda_h^{-1}\phi_h}|s_1 = x\right] \leqslant c''HK^{-1/2} .\]
               Together with Theorem \ref{thm:ub}, we have
               \[\text{SubOpt}(\text{PMVI}(\mcd),x)\leqslant c'dH^2K^{-1/2}\sqrt{\zeta}\]
               with probability at least $1 - p$ with $c' = 4cc''$. Therefore, we finish the proof.
            \end{proof}

        \section{Proof of the Information-Theoretic Lower Bound} \label{sec:proof_lower}
        The proof is organized as follows. First, we construct a class of linear MGs $\mathfrak{M}$ and a dataset collecting process for $\cD$ which is compliant with the underlying MG. Then, given a policy pair $(\pi, \nu)$ constructed based on $\cD$, we find two hard MGs $\cM_1, \cM_2$ from the class $\mathfrak{M}$ such that the policy pair cannot achieve a desired suboptimality simultaneously. Before continuing, we introduce another notion of suboptimality, defined as
        $$\subb(({\pi}, {\nu}), x_0) = |V^*_1 - V_1^{\pi, \nu}| \leq V_1^{*,\nu} - V_1^{\pi, *} = \sub(({\pi}, {\nu}), x_0),$$
        due to the weak duality property given in \eqref{eqn:weak_dual}. Therefore, we can prove the lower bound for $\subb(({\pi}, {\nu}), x_0)$ which implies the original theorem.
        
        \subsection{Construction of the Linear MG Class $\mathfrak{M}$}
        The class $\mathfrak{M}$ is defined to be 
            $$\mathfrak{M} = \{M(p_1,p_2,p_3):p_1,p_2 \in [1/4,3/4], p_3 = \min\{p_1,p_2\}\},$$
            where $M(p_1,p_2,p_3)$ is a MG with $H\geqslant2$, state space $\mathcal{S} = \{x_0,x_1,x_2\} $ and action space $\mathcal{A}_1 =  \mathcal{A}_2 = \{y_i\}_{i=0}^{A}$ with $A = |\cA_1| \geq 3$. We fix the initial state as $x_0$. Now we define the transition kernel of the game at step $h=1$ to be
        $$
        \begin{array}{ll}
        \mathbb{P}_{1}\left(x_{1} \mid x_{0}, y_{1}, y_j\right)=p_{1} & \mathbb{P}_{1}\left(x_{2} \mid x_{0}, y_{1}, y_j\right)=1-p_{1}, ~\forall j \in \{1,\cdots,A\}, \\
        \mathbb{P}_{1}\left(x_{1} \mid x_{0}, y_{2}, y_j\right)=p_{2}, & \mathbb{P}_{1}\left(x_{2} \mid x_{0}, y_{2}, y_j\right)=1-p_{2}, ~\forall j \in \{1,\cdots,A\},\\
        \mathbb{P}_{1}\left(x_{1} \mid x_{0}, y_{i}, y_j\right)=p_{3}, & \mathbb{P}_{1}\left(x_{2} \mid x_{0}, y_{i}, y_j\right)=1-p_{3}, ~\forall i \geq 3, \forall j \in \{1,\cdots,A\}.
        \end{array}
        $$
        According to the construction, we can see that the transition is determined by the max-player's action at step $h=1$. At subsequent step $h \geq 2$, we set
            \begin{align*}
              \PP_h(x_1|x_1,y_i,y_j) = \PP_h(x_2|x_2,y_i,y_j) = 1, \forall i,j \in \{1,\cdots, A\}.
            \end{align*}
        In other words, the states $x_1$ and $x_2$ are absorbing. The reward functions of the game are defined as 
        $$
        \begin{aligned}
        &r_1(x_0, y_i, y_j) = 0, ~\forall i,j \in \{1,\cdots, A\},\\
        &r_h(x_1, y_i, y_j) = 1, r_h(x_2, y_i, y_j) = 0, ~\forall h \geq 2, \forall i,j \in \{1,\cdots, A\}.
        \end{aligned}
        $$
        We further illustrate the class $\mathfrak{M}$ in Figure~\ref{fig:mg}. To show that the game $\cM(p_1,p_2,p_3)$ is indeed a linear MG, we define the feature map $\phi\statep$ to be
             \begin{align*}
               &\phi(x_0,y_i,y_j) = (e_{(i-1)A+j},0,0) \in \mathbb{R}^{A^2+2} \quad
               \phi(x_1,y_i,y_j) = (\boldsymbol{0}_{A^2},1,0) \in \mathbb{R}^{A^2+2} \\
               &\phi(x_2,y_i,y_j) = (\boldsymbol{0}_{A^2},0,1) \in \mathbb{R}^{A^2+2} \quad 
               \forall i,j \in \{1,\cdots, A\},
             \end{align*}
        where $e_{n} \in \RR^{A^2}$ is a vector whose components are all zero except for the $n$-th one.
             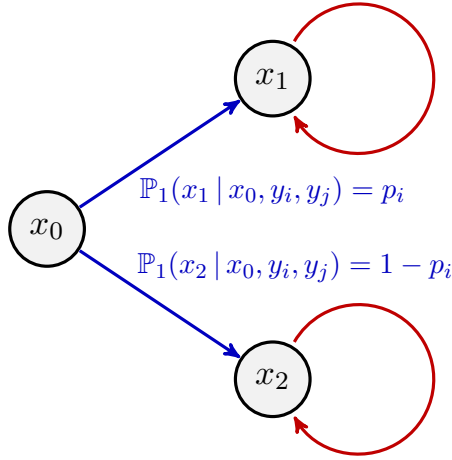
\begin{figure} [hpt]
             \label{fig:mg}
              \centering
              \begin{tikzpicture}[->,>=stealth', very thick, main node/.style={circle,draw}]
              
              \node[main node, text=black, circle, draw=black, fill=black!5, scale=1.2] (1) at  (0,0) {\small $x_{0}$};
              \node[main node, text=black, circle, draw=black, fill=black!5, scale=1.2] (2) at  (3,2) {\small $x_{1}$};
              \node[main node, text=black, circle, draw=black, fill=black!5, scale=1.2] (3) at  (3,-2) {\small $x_{2}$};

              \draw[->] (1) edge [draw=blue!75!black] (2);
              \draw[->] (1) edge [draw=blue!75!black] (3);

              \drawloop[<-,stretch=1.1]{3}{300}{420};
              \drawloop[<-,stretch=1.1]{2}{300}{420};
              \node[text=blue!75!black] at  (3,0.5) {\small $\mathbb{P}_1(x_{1}\given x_{0},y_i,y_j)=p_i$};
              \node[text=blue!75!black] at  (3.3,-0.5) {\small $\mathbb{P}_1(x_{2}\given x_{0},y_i,y_j)= 1-p_i$};
              \end{tikzpicture}
              \vspace*{-5mm}
              \caption{Illustration of the Game $\cM(p_1,p_2,p_3)$: In the first step with initial state $x_0$, the game is totally determined by the max-player's action. The game has a probability of $p_i$ to enter state $x_1$ if the max-player  takes action $a_1 = y_i$. Meanwhile, $x_1$ and $x_2$ are absorbing states.}
            \end{figure}
        \subsection{Dataset Collecting Process}\label{subsec:data}
        We specify the dataset collecting process in this subsection. Given an MG $\cM(p_1,p_2,p_3) \in \mathfrak{M}$, the dataset $\mathcal{D} = \{(s_h^{\tau},a_h^{\tau},b_h^{\tau}, r_h^{\tau})\}_{\tau, h=1}^{K, H}$ consists of $K$ trajectories starting from the initial state $x_0$, namely, $x_1^\tau = x_0$ for all $\tau \in [K]$. The actions taken at the first step $\{a_1^\tau, b_1^\tau\}_{\tau=1}^K$ are predetermined. The transitions at step $h=1$ are sampled from $\cM$ and are independent across $K$ trajectories. The rewards are also generated by the $\cM$. The subsequent actions $\{a_h^\tau, b_h^\tau\}_{\tau=1}^K, h \geq 2$ are arbitrary since they do not affect the transition and reward generation. In this case, the dataset is compliant with the underlying MG $\cM$.
        
        Before continuing, we define 
        \begin{equation}
             \begin{aligned}
               n_{ij} = \sum_{\tau=1}^K\mathbbm{1}\{a_1^\tau = y_i,b_1^\tau = y_j\}, \quad& \kappa_i^j = \sum_{\tau=1}^K\mathbbm{1}\{a_1^\tau = y_i,s_2^\tau = x_j\}, \\
               n^{k}_{\text{min}} = \min\{\min\limits_{j}n_{kj}, \min\limits_{i} n_{ik} \}, \quad n_i &= \sum_{j=1}^A n_{ij}, \quad m_i = \sum_{\tau=1}^K\mathbbm{1}\{s_2^\tau = x_i\}.
             \end{aligned}
             \end{equation}
    
    In other words, in the dataset $\cD$, the action pair $(y_i, y_j)$ is taken by two players at step $h=1$ for $n_{ij}$ times; the event that the max-player takes action $y_i$ at step $h=1$ and the next state is $x_j$ happens for $\kappa_i^j$ times; the max-player takes action $y_i$ at step $h=1$ for $n_i$ times; and the initial state $x_0$ transits to $x_i$ for $m_i$ times. Finally, $n_{\text{min}}^k$ measures how well the dataset $\cD$ covers the state action pairs where one action is fixed to be $k$.
    
    Since $x_1$ and $x_2$ are absorbing states, for learning the optimal policy $\pi^*$, the original dataset $\cD$ contains the same information as the reduced one $\cD_1 := \{(x_1^\tau, a_1^\tau, b_1^\tau, x_2^\tau, r_2^\tau)\}_{\tau=1}^K$. Recall that the actions at step $h=1$ are predetermined. The randomness of the dataset generation only comes from the transiton at the first step and we can write:
    \begin{equation}
        \label{eq:prob_dataset}
        \begin{aligned}
    \mathbb{P}_{\mathcal{D} \sim \mathcal{M}}\left(\mathcal{D}_{1}\right) &=\prod_{\tau=1}^{K} \mathbb{P}_{\mathcal{M}}\left(s_2=x_{2}^{\tau} \mid s_{1}=x_{1}^{\tau}=x_0, a_{1}=a_{1}^{\tau}, b_1 = b_1^\tau\right) \\
    &=\prod_{j=1}^{A}\left( p_j^{k_j^1}(1-p_j)^{k_j^2}\right).
\end{aligned}
    \end{equation}
    
    \subsection{Lower Bound of the Suboptimality}
    In this subsection, we constructed two MGs and show that the suboptimality of any algorithm that outputs a policy based on the dataset $\cD$ is lower bounded by the hypothesis testing risk and the risk can be further lower bounded by some tuning parameters.
    
    \begin{lemma}[Reduction to Testing] \label{lemma:reduction} For the dataset $\cD$ collected as specified in Section~\ref{subsec:data}, there exists two MGs $\cM_1(p^*, p, p), \cM_2(p, p^*, p) \in \mathfrak{M}$ where $p^* > p$ satisfy $p, p^* \in [1/4, 3/4]$, such that the output policy $\algo(\cD)$ of any algorithm satisfies:
    \begin{equation}
    \label{eq:sub_to_risk}
    \begin{aligned}
    &\expectgamei{\subb(\algo(\mathcal{D});x_{0})} +\expectgameii{\subb(\algo(\mathcal{D});x_{0})}\notag\\ &\geqslant (H-1)(p^*-p)\left(\expectgamei{1-\pihat_1(y_1)}+\expectgameii{\pihat_1(y_1)}\right) .
    \end{aligned}
    \end{equation}
    It further holds that 
            \begin{equation}
              \expectgamei{\subb(\algo(\mathcal{D});x_{0})} +\expectgameii{\subb(\algo(\mathcal{D});x_{0})} \geqslant 1/2 (H-1)(p^*-p).
            \end{equation} 
    \end{lemma}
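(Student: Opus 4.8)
The plan is to split the proof into a deterministic value computation, which yields the first inequality for \emph{any} admissible $(p,p^*)$, followed by a two-point (Le Cam) testing argument to fix $(p,p^*)$ and obtain the second. Throughout, $\pihat$ denotes the max-player's policy in $\algo(\cD)$, hence a function of $\cD$.

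\textbf{Step 1 (reduce suboptimality to the first-step policy).} First I would exploit the structure of $\mathfrak{M}$: the only randomness in a trajectory is at $h=1$, the transition there depends solely on the max-player's action, and $x_1,x_2$ are absorbing with per-step rewards $1,0$. Thus for \emph{any} policy pair $(\pi,\nu)$ and any $M(p_1,p_2,p_3)\in\mathfrak{M}$,
\[
V_1^{\pi,\nu}(x_0)=(H-1)\sum_{i=1}^{A}\pi_1(y_i\mid x_0)\,p_i ,
\]
so the min-player is irrelevant, $V_1^*=(H-1)\max_i p_i$, and $V_1^{\pi,\nu}(x_0)\le V_1^*$. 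Working with $\subb$ (recall $\subb((\pi,\nu),x_0)=|V_1^*-V_1^{\pi,\nu}(x_0)|\le\sub((\pi,\nu),x_0)$ from the remark preceding the lemma, so a lower bound on $\subb$ suffices), in $\cM_1=M(p^*,p,p)$ the maximizer is $y_1$ and $\subb(\algo(\cD);x_0)=(H-1)\,(1-\pihat_1(y_1\mid x_0))\,(p^*-p)$, while in $\cM_2=M(p,p^*,p)$ the maximizer is $y_2$ and $\subb(\algo(\cD);x_0)=(H-1)\,(1-\pihat_1(y_2\mid x_0))\,(p^*-p)\ge(H-1)\,\pihat_1(y_1\mid x_0)\,(p^*-p)$, the last step using $\pihat_1(y_1\mid x_0)+\pihat_1(y_2\mid x_0)\le1$ and $p^*>p$. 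Taking $\expectgamei{\cdot}$ and $\expectgameii{\cdot}$ of these two identities and adding gives the first displayed inequality, by linearity of expectation alone (plus an average over $\algo$'s internal randomness if it is randomized).

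\textbf{Step 2 (two-point testing).} Next I would choose $p,p^*\in[1/4,3/4]$ with $p^*>p$ so that $\expectgamei{1-\pihat_1(y_1\mid x_0)}+\expectgameii{\pihat_1(y_1\mid x_0)}\ge\tfrac12$. Set $\psi(\cD)=\pihat_1(y_1\mid x_0)\in[0,1]$ and let $\mbp_1,\mbp_2$ be the laws of $\cD$ under $\cM_1,\cM_2$. The Le Cam inequality gives $\expectgamei{1-\psi(\cD)}+\expectgameii{\psi(\cD)}\ge1-\mathrm{TV}(\mbp_1,\mbp_2)$, so it suffices that $\mathrm{TV}(\mbp_1,\mbp_2)\le1/2$. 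Since $x_1,x_2$ are absorbing and the step-$1$ actions are predetermined, $\cD$ is informationally equivalent to $\cD_1=\{(x_0,a_1^\tau,b_1^\tau,x_2^\tau,r_2^\tau)\}_{\tau=1}^K$, whose law factorizes over $\tau$ by \eqref{eq:prob_dataset} with $x_2^\tau\mid a_1^\tau$ Bernoulli; tensorizing the KL divergence and using $d_{\mathrm{KL}}(q\,\|\,q')\le(q-q')^2/(q'(1-q'))$ with $q'(1-q')\ge3/16$ on $[1/4,3/4]$,
\[
\mathrm{KL}(\mbp_1\,\|\,\mbp_2)=n_1\,d_{\mathrm{KL}}(p^*\,\|\,p)+n_2\,d_{\mathrm{KL}}(p\,\|\,p^*)\le\tfrac{16}{3}(n_1+n_2)(p^*-p)^2\le\tfrac{16}{3}K(p^*-p)^2 ,
\]
with $n_i=\sum_j n_{ij}$. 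Taking $p=1/2$ and $p^*-p=\min\{1/4,\sqrt{3/(32(n_1+n_2))}\}$ (and $p^*=3/4$ when $n_1=n_2=0$) makes this bound at most $1/2$, hence $\mathrm{TV}(\mbp_1,\mbp_2)\le1/2$ by Pinsker. Combined with Step 1 this gives $\expectgamei{\subb(\algo(\cD);x_0)}+\expectgameii{\subb(\algo(\cD);x_0)}\ge\tfrac12(H-1)(p^*-p)$.

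\textbf{Expected main obstacle.} The individual pieces are routine; I expect the two points needing care to be (i) routing everything through $\subb$ rather than $\sub$, so that weak duality carries the bound back to $\sub$, together with the observation that the min-player plays no role in this construction; and (ii) calibrating $p^*-p$. For the lemma itself it only needs to be small enough that $\cM_1,\cM_2$ are statistically indistinguishable ($\mathrm{TV}\le1/2$), but in the downstream steps of Theorem~\ref{theorem:minimax} one wants it as large as possible — of order $1/\sqrt{n_{\min}}$ for the appropriate dataset count — so that $(H-1)(p^*-p)$ is of the same order as the relative uncertainty $\ri(\cD,x_0)$ of the construction; turning ``$\mathrm{TV}\le1/2$'' into that matching rate is where the real work of the lower bound lies.
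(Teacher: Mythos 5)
Your proposal is correct and follows essentially the same route as the paper: reduce the suboptimality to the max-player's first-step action probabilities (the min-player being irrelevant in this construction), then apply the Le Cam two-point argument with Pinsker's inequality and a tensorized KL bound, choosing $p^*-p \asymp 1/\sqrt{n_1+n_2}$ so that the KL divergence is at most $1/2$. The only differences are cosmetic — you bound each Bernoulli KL by $(p^*-p)^2/(q'(1-q'))$ and fix $p=1/2$, whereas the paper uses $\log(1+x)\le x$ and centers $p,p^*$ symmetrically about $1/2$ — and both yield the stated constants.
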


    The right-hand side of \eqref{eq:sub_to_risk} is the risk of a (randomized) test function about the hypothesis testing problem:
    $$
    H_{0}: \mathcal{M}=\mathcal{M}_{1} \text { versus } H_{1}: \mathcal{M}=\mathcal{M}_{2}.
    $$
    This construction mirrors the Le Cam method \citep{le2012asymptotic, yu1997festschrift}. See the Section $5.3.2$ of \citet{jin2020pessimism} for a detailed discussion.
    \begin{proof}[Proof of Lemma~\ref{lemma:reduction}]
     We first notice that by the construction of $\cM_1$ and $\cM_2$, {the games degrade to the MDPs.} Therefore, we have 
     \$
     \expect_{\mathcal{D}  \sim\mathcal{M}_{\text{1}}}\left[\subb(\algo(\mathcal{D});x_{0})\right] &= \expectgamei{| V_1^{\pi^*,\nu^*}(x_0) - V_1^{\pihat,\nuhat}(x_0)|} \\
     & =\expectgamei{V_1^{\pi^*,\nu^*}(x_0) - V_1^{\pihat,\nuhat}(x_0)},
     \$
     where we use the fact that the Nash value is the V-value of the induced MDP in the last equality. Clearly, for $\cM_1$, $\pi_1^*$ puts probability $1$ for action $y_1$ given the state $x_0$. In this case, we have the following calculation:
     $$
     \begin{aligned}
     \expectgamei{V_1^{\pi^*,\nu^*}(x_0) - V_1^{\pihat,\nuhat}(x_0)} &= (H-1)\left(p^*-\sum_{i=1}^{A}\expectgamei{\pihat_1(y_i)}p_i\right) \\
     &= (H-1)\Eb_{\cD \sim \cM_1} \left( p^*(1-\hat{\pi}_1(y_1)) - \sum_{i \neq 1} \hat{\pi}_i(y_i) p \right)\\
     &= (H-1)(p^*-p)\expectgamei{1-\pihat_1(y_1)},
     \end{aligned}
     $$
    where we use $\sum_{j=1}^A \hat{\pi}_1(y_j) = 1$ in the last equality. Therefore, we have
        $$\expectgamei{\subb(\algo(\mathcal{D});x_{0})} = (H-1)(p^*-p)\expectgamei{1-\pihat_1(y_1)}.$$
        Similarly,
        $$\expectgameii{\subb(\algo(\mathcal{D});x_{0})} = (H-1)(p^*-p)\expectgameii{1-\pihat_1(y_2)}.$$
    It follows that 
    \begin{equation}
    \label{eq:add_gap}
        \begin{aligned}
          &\expectgamei{\subb(\algo(\mathcal{D});x_{0})}+\expectgameii{\subb(\algo(\mathcal{D});x_{0})}\\ 
          &\qquad \geqslant (H-1)(p^*-p)\left(\expectgamei{1-\pihat_1(y_1)}+\expectgameii{\pihat_1(y_1)}\right),
        \end{aligned}
    \end{equation}
    where we use $\hat{\pi}_1(y_1) \leq 1 - \hat{\pi}_1(y_2)$. This concludes the proof of \eqref{eq:sub_to_risk}. It remains to find a lower bound for the right-hand side. We have:
        \begin{align}\label{eq:kl_est}
          \expectgamei{1-\pihat_1(y_1)}+\expectgameii{\pihat_1(y_1)} &\geqslant 1- \text{TV}\left(\mathbb{P}_{\mathcal{D} \sim \mathcal{M}_1}, \mathbb{P}_{\mathcal{D} \sim \mathcal{M}_2}\right)\notag \\
           &\geqslant 1- \sqrt{\text{KL}\left(\mathbb{P}_{\mathcal{D} \sim \mathcal{M}_1}|| \mathbb{P}_{\mathcal{D} \sim \mathcal{M}_2}\right)/2}
        \end{align}
    where $\text{TV}(\cdot,\cdot)$ and $\text{KL}(\cdot||\cdot)$ are the total variation distance of probability measures and Kullback-Leibler (KL) divergence of two distributions, respectively. Here the first inequality comes from the definition of total variation distance, and the last inequality follows from Pinsker's inequality. {Intuitively, we set $p$ and $p^*$ carefully to make $\cM_1$ and $\cM_2$ hard to be distinguished.}
 
    As stated in \eqref{eq:prob_dataset}, we can explicitly write down the probability of the reduced dataset $\cD_1$ as
              \begin{align*}
            \mathbb{P}_{\mathcal{D}\sim\mcm_1}\left(\mcd_1\right) = (p^*)^{\kappa_1^1}\cdot(1-p^*)^{\kappa^2_1}\cdot p^{\sum_{i\neq 1}\kappa_i^1}\cdot(1-p)^{\sum_{i\neq 1}\kappa_i^2};\\
            \mathbb{P}_{\mathcal{D}\sim\mcm_2}\left(\mcd_1\right) = (p^*)^{\kappa_2^1}\cdot(1-p^*)^{\kappa_2^2}\cdot p^{\sum_{i\neq 2}\kappa_i^1}\cdot(1-p)^{\sum_{i\neq 2}\kappa_i^2}.
          \end{align*}
    We recall $\kappa_i^j = \sum_{\tau=1}^K\mathbbm{1}\{a_1^\tau = y_i,s_2^\tau = x_j\}$. Since the randomness only comes from the state transition at the first step for $\cD_1$ and these transitions are independent across $K$ trajectories. It follows that 
              \begin{align}
            &\text{KL}\left(\mathbb{P}_{\mathcal{D} \sim \mathcal{M}_1}|| \mathbb{P}_{\mathcal{D} \sim \mathcal{M}_2}\right) \notag\\
            &\qquad= \expect_{\mcd\sim\mcm_1}\bigg[\left(\kappa_1^1-\kappa_2^1\right)\log\left(\dfrac{p^*}{p}\right)+\left(\kappa_1^2-\kappa_2^2\right)\log\left(\dfrac{1-p^*}{1-p}\right)\bigg] \notag\\
            &\qquad= \left(p^*n_1-pn_2\right)\log\left(\dfrac{p^*}{p} \right) +\left((1-p^*)n_1-(1-p)n_2\right) \log\left(\dfrac{1-p^*}{1-p}\right) \notag\\
            &\qquad= n_1\left(p^*\log\dfrac{p^*}{p}+(1-p^*)\log\dfrac{1-p^*}{1-p}\right) +n_2\left(p\log\dfrac{p}{p^*}+(1-p)\log\dfrac{1-p}{1-p^*}\right).
          \end{align}
    It remains to carefully set $p$ and $p^*$ to obtain the desired lower bound. To this end, we set 
    $$p = \dfrac{1}{2} - \dfrac{1}{16}\sqrt{\dfrac{2}{n_1+n_2}}, \quad p^* = \dfrac{1}{2} + \dfrac{1}{16}\sqrt{\dfrac{2}{n_1+n_2}},$$
    such that $p, p^* \in [1/4, 3/4]$ and 
    $$p^*-p <\dfrac{1}{4}<\text{min}\{p,p^*,1-p^*,1-p\}.$$
          As a result of the inequality $\log(1+x)\leqslant x ,\forall x>-1$, we have
          $$
          \begin{aligned}
          p^*\log\dfrac{p^*}{p}+(1-p^*)\log\dfrac{1-p^*}{1-p} \leqslant \dfrac{(p^*-p)^2}{p(1-p)}\leqslant 16(p^*-p)^2, \\
          p\log\dfrac{p}{p^*}+(1-p)\log\dfrac{1-p}{1-p^*} \leqslant \dfrac{(p^*-p)^2}{p^*(1-p^*)}\leqslant 16(p^*-p)^2.
          \end{aligned}
        $$
          Thus,
          \begin{align*}
            \text{KL}\left(\mathbb{P}_{\mathcal{D} \sim \mathcal{M}_1}|| \mathbb{P}_{\mathcal{D} \sim \mathcal{M}_2}\right) \leqslant 16n_1(p^*-p)^2+16n_2(p^*-p)^2 \leqslant16(n_1+n_2)(p^*-p)^2 \leq \dfrac{1}{2}. \\
          \end{align*}
         It follows that
            \begin{align*}
              \expectgamei{1-\pihat_1(y_1)}+\expectgameii{\pihat_1(y_1)} \geqslant  1- \sqrt{\text{KL}\left(\mathbb{P}_{\mathcal{D} \sim \mathcal{M}_1}|| \mathbb{P}_{\mathcal{D} \sim \mathcal{M}_2}\right)/2} \geqslant \dfrac{1}{2}.
            \end{align*}
          Combined this with \eqref{eq:add_gap} and \eqref{eq:kl_est}, we conclude that 
          \begin{equation}\label{eq:lowerbound}
            \expectgamei{\subb(\algo(\mathcal{D});x_{0})} +\expectgameii{\subb(\algo(\mathcal{D});x_{0})} \geqslant \frac{1}{2}(H-1)(p^*-p).
          \end{equation}
         Therefore, we conclude the proof.
    \end{proof}

    \subsection{Upper Bound of $\ri(\cD, x_0)$}
    We recall that we are concerning about 
    $$    \expect_{\mathcal{D}\sim\mathcal{M}}\left[\dfrac{\subb\left(\text{Algo}(\mathcal{D});x_{0}\right)}{\ri(\mcd,x_0)}\right].$$
    We still need to find an upper bound of $\ri(\cD,x_0)$ for the constructed linear MGs.
    
    \begin{lemma}[Upper Bound of $\ri(\cD, x_0)$] \label{lemma:upper_ru}
    Suppose the Assumption~\ref{assump:compliant} holds and the underlying MG is $\cM \in \mathfrak{M}$. We define $j^* = \argmax_{j \in \{1,2\}} p_j$ (we assume that $p_1 \neq p_2$). Then, the optimal policy satisfies $\pi_1^*(y_{j^*}) = 1$ and we further take $\nu^*_1(y_{j^*}) = 1$. Then, for Algorithm~\ref{alg1}, it holds that
\begin{equation}
\label{eqn:upper_ru1}
\begin{aligned}
&\sum_{h=1}^{H}  \sup_{\nu} \mathbb{E}_{\pi^{*}, \nu}\left[\left(\phi\left(s_{h}, a_{h}, b_h\right)^{\top} \Lambda_{h}^{-1} \phi\left(s_{h}, a_{h}, b_h\right)\right)^{1 / 2} \mid x_{0}\right] \\
& \qquad \leq \frac{1}{\sqrt{1+n^{j^*}_{ \text{min}}}} +(H-1) \cdot\left(\frac{p_{j^{*}}}{\sqrt{1+m_{1}}}+\frac{1-p_{j^{*}}}{\sqrt{1+m_{2}}}\right),\\
&\sum_{h=1}^{H}  \sup_{\pi} \mathbb{E}_{\pi, \nu^*}\left[\left(\phi\left(s_{h}, a_{h}, b_h\right)^{\top} \Lambda_{h}^{-1} \phi\left(s_{h}, a_{h}, b_h\right)\right)^{1 / 2} \mid x_{0}\right] \\
&\qquad \leq \sup_\pi \left\{\frac{1}{\sqrt{1+n^{j^*}_{ \text{min}}}} + \sum_{i \in [A]}\pi_1(y_{i})(H-1) \cdot\left(\frac{p_{i}}{\sqrt{1+m_{1}}}+\frac{1-p_{i}}{\sqrt{1+m_{2}}}\right)\right\}.
\end{aligned}
\end{equation}
Furthermore, with probability at least $1-\frac{1}{K}$, the following event holds 
$$
\overline{\mathcal{E}} = \{m_i\geqslant K/4-\sqrt{2K\log(2K)} \mid i = 1,2 \},$$
where the probability is taken with respect to $\PP_{\cD \sim \cM}$. Under $\overline{\mathcal{E}}$, for $K\geqslant 32\log(2K)$, we can obtain 
\begin{equation}
\label{eqn:upper_ru}
\mathop{\mathrm{RU}}(\cD, \pi^*, \nu^*, x_0) \leq \frac{1}{\sqrt{n^{j^*}_{\text{min}}}} + \frac{2\sqrt{2}(H-1)}{\sqrt{K}} \leq \frac{2\sqrt{2} H}{\sqrt{n^{j^*}_{\text{min}}}}.
\end{equation}
\end{lemma}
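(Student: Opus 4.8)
The plan is to use the extremely simple (block, one-hot) structure of the feature map to compute $\Lambda_h$ and the bonus $\phi_h^\top\Lambda_h^{-1}\phi_h$ in closed form, and then to evaluate the two suprema defining $\ri(\mcd,\pi^*,\nu^*,x_0)$ directly, step by step in $h$. \textbf{Step 1 (closed forms for the bonus).} Since $s_1^\tau=x_0$ for every trajectory and the vectors $\phi(x_0,y_i,y_j)=(e_{(i-1)A+j},0,0)$ are orthonormal, $\Lambda_1$ is diagonal with the $(i,j)$-block entry equal to $1+n_{ij}$, so $\phi(x_0,y_i,y_j)^\top\Lambda_1^{-1}\phi(x_0,y_i,y_j)=(1+n_{ij})^{-1}$. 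For $h\ge 2$ every dataset trajectory sits in the absorbing state $s_2^\tau\in\{x_1,x_2\}$, whose features $(\mathbf 0_{A^2},1,0)$ and $(\mathbf 0_{A^2},0,1)$ do not depend on the actions; hence $\Lambda_h=\mathrm{diag}(1,\dots,1,1+m_1,1+m_2)$ and $\phi(x_\ell,\cdot,\cdot)^\top\Lambda_h^{-1}\phi(x_\ell,\cdot,\cdot)=(1+m_\ell)^{-1}$ for $\ell\in\{1,2\}$.

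\textbf{Step 2 (identify the NE and unroll).} Because $\PP_1(x_1\mid x_0,y_i,y_j)=p_i$ is independent of $j$ and the post-$h{=}1$ dynamics and rewards ignore both players' actions, the game reduces to an MDP; thus $(\pi_1^*=\delta_{y_{j^*}},\nu_1^*=\delta_{y_{j^*}})$ is an NE and the actions at $h\ge2$ are irrelevant. I now unroll both suprema term by term. For the first line of \eqref{eqn:upper_ru1}: at $h=1$, $\EE_{\pi^*,\nu}[(\phi_1^\top\Lambda_1^{-1}\phi_1)^{1/2}]=\sum_j\nu_1(y_j)(1+n_{j^*j})^{-1/2}$, whose supremum over $\nu$ is $(1+\min_j n_{j^*j})^{-1/2}\le(1+n^{j^*}_{\text{min}})^{-1/2}$; for each $h\ge2$ the state distribution is frozen at $x_1$ w.p.\ $p_{j^*}$ and $x_2$ w.p.\ $1-p_{j^*}$, \emph{independently of $\nu$}, contributing the constant $p_{j^*}(1+m_1)^{-1/2}+(1-p_{j^*})(1+m_2)^{-1/2}$, and summing over $h=2,\dots,H$ gives the factor $H-1$. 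For the second line: at $h=1$, $\EE_{\pi,\nu^*}[(\phi_1^\top\Lambda_1^{-1}\phi_1)^{1/2}]=\sum_i\pi_1(y_i)(1+n_{ij^*})^{-1/2}\le(1+n^{j^*}_{\text{min}})^{-1/2}$ \emph{uniformly in $\pi$}, while for $h\ge2$ one lands in $x_1$ with probability $\sum_i\pi_1(y_i)p_i$, so the $h\ge2$ contribution is $(H-1)\sum_i\pi_1(y_i)\big(p_i(1+m_1)^{-1/2}+(1-p_i)(1+m_2)^{-1/2}\big)$; adding the two pieces and taking $\sup_\pi$ yields the second bound in \eqref{eqn:upper_ru1}. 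Taking the larger of the two lines gives $\ri(\mcd,\pi^*,\nu^*,x_0)$.

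\textbf{Step 3 (control $m_1,m_2$ and simplify).} Conditioning on the predetermined first-step actions, $m_\ell=\sum_{\tau}\mathbbm 1\{s_2^\tau=x_\ell\}$ is a sum of independent Bernoullis with means in $[1/4,3/4]$ (as each $p_i\in[1/4,3/4]$), so $\EE[m_\ell]\ge K/4$; a Hoeffding bound plus a union bound over $\ell\in\{1,2\}$ gives $\overline{\mathcal E}$ with probability at least $1-1/K$, and for $K\ge 32\log(2K)$ the deviation $\sqrt{2K\log(2K)}$ is dominated so that $1+m_\ell\gtrsim K$. On $\overline{\mathcal E}$, $p_{j^*}(1+m_1)^{-1/2}+(1-p_{j^*})(1+m_2)^{-1/2}\le(1+\min\{m_1,m_2\})^{-1/2}\le 2\sqrt 2/\sqrt K$, so both lines of \eqref{eqn:upper_ru1} are at most $(1+n^{j^*}_{\text{min}})^{-1/2}+2\sqrt2(H-1)/\sqrt K$. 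Finally, $n^{j^*}_{\text{min}}\le\min_i n_{ij^*}\le\frac1A\sum_i n_{ij^*}\le K/A\le K$, hence $1/\sqrt K\le 1/\sqrt{n^{j^*}_{\text{min}}}$ and $\ri(\mcd,\pi^*,\nu^*,x_0)\le 1/\sqrt{n^{j^*}_{\text{min}}}+2\sqrt2(H-1)/\sqrt{n^{j^*}_{\text{min}}}\le 2\sqrt2 H/\sqrt{n^{j^*}_{\text{min}}}$, i.e.\ \eqref{eqn:upper_ru}.

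\textbf{Main obstacle.} After Step~1 everything is elementary; the one genuinely delicate point is the $h=1$ term in the second supremum, where the max-player's action at step $1$ simultaneously controls its own feature-space uncertainty $(1+n_{ij^*})^{-1}$ \emph{and} the downstream state distribution $\sum_i\pi_1(y_i)p_i$. One must not try to optimize these jointly; instead one bounds the (small) $h=1$ contribution by the policy-independent quantity $(1+n^{j^*}_{\text{min}})^{-1/2}$ and keeps only the genuinely $\pi$-dependent $h\ge2$ part inside $\sup_\pi$, which is exactly the form in \eqref{eqn:upper_ru1}. A secondary, purely arithmetic point is arranging the concentration constants so that $\sqrt{2K\log(2K)}$ is absorbed into the $2\sqrt2$ prefactor.
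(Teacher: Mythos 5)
Your proposal is correct and follows essentially the same route as the paper's own proof: explicit diagonal forms of $\Lambda_1$ and $\Lambda_h$ for $h\ge 2$, term-by-term evaluation of the two suprema with the $h=1$ contribution bounded by $(1+n^{j^*}_{\min})^{-1/2}$ and the $h\ge 2$ contributions governed by the step-$2$ state distribution, Hoeffding plus a union bound for $m_1,m_2$, and the final simplification via $n^{j^*}_{\min}\le K$. One small bookkeeping caveat: with the deviation $\sqrt{2K\log(2K)}$ as written in the event, the condition $K\ge 32\log(2K)$ only gives $m_\ell\ge 0$; you need the sharper Hoeffding deviation $\sqrt{K\log(2K)/2}$ (which is what the Hoeffding bound you invoke actually yields, and what the paper's proof uses) to conclude $m_\ell\ge K/8$ and hence the $2\sqrt{2}/\sqrt{K}$ factor in \eqref{eqn:upper_ru}.
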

    \begin{proof}[Proof of Lemma \ref{lemma:upper_ru}]\textbf{Proof of \eqref{eqn:upper_ru1}.} We first consider $\sup_\nu \Eb_{\pi^*, \nu}$. By $x_1^\tau = x_0$ for all $\tau \in [K]$ and the definition of $\Lambda_h$, we have 
     \$
\Lambda_{1} &= I+\sum_{\tau=1}^{K} \phi\left(x_{0}, a_{1}^{\tau}, b_{1}^{\tau}\right) \phi\left(x_{0}, a_{1}^{\tau}, b_{1}^{\tau}\right)^{\top} \\
&=\operatorname{diag}\left(1+n_{11}, 1 + n_{12} \ldots, 1+n_{AA}, 1, 1\right) \in \mathbb{R}^{(A^2+2) \times(A^2+2)},
\$
where the second equality follows from the definition of $\phi$. For $h \geq 2$, the state is $x_1$ or $x_2$, so we have
     $$
\Lambda_{h}=I+\sum_{\tau=1}^{K} \phi\left(x_h^\tau, a_{h}^{\tau}, b_{h}^{\tau}\right) \phi\left(x_h^\tau, a_{h}^{\tau}, b_{h}^{\tau}\right)^{\top}=\operatorname{diag}\left(1, 1, \ldots, 1, 1 + m_1, 1 + m_2\right) \in \mathbb{R}^{(A^2+2) \times(A^2+2)},
$$
where the second equality follows from the definition of $\phi$. Under $(\pi^*, \nu)$, we know that 
$$\PP_{\pi^*, \nu}(s_2=x_1) = p_{j^*}, ~~~ \PP_{\pi^*, \nu}(s_2=x_2) = 1-p_{j^*}.$$

It follows that 
\begin{equation*}
\begin{aligned}
&\sup_\nu \mathbb{E}_{\pi^{*}, \nu}\left[\left(\phi\left(s_{h}, a_{h}, b_h\right)^{\top} \Lambda_{h}^{-1} \phi\left(s_{h}, a_{h}, b_h\right)\right)^{1 / 2} \mid s_{1}=x_{0}\right] \\
&\qquad \leq \begin{cases}\left(1+n^{j^{*}}_{\text{min}}\right)^{-1 / 2}, & h=1, \\
p_{j^{*}} \cdot\left(1+m_{1}\right)^{-1 / 2}+\left(1-p_{j^{*}}\right) \cdot\left(1+m_{2}\right)^{-1 / 2}, & h \in\{2, \ldots, H\},
\end{cases}
\end{aligned}
\end{equation*}
where we use the definition of $\phi$, and the fact that $n_{\text{min}}^{j^*} \leq n_{j^* i}$ for all $i \in [A]$.

For $\sup_{\pi} \Eb_{\pi, \nu^*}$, the main difference lies in the distribution of $s_2$:
$$\PP_{\pi, \nu^*}(s_2=x_1) = \sum_{j \in [A]} \pi_1(y_j)p_j, ~~~ \PP_{\pi, \nu^*}(s_2=x_2) = 1 - \sum_{j \in [A]} \pi_1(y_j)p_j.$$
It follows that 
\begin{equation*}
\begin{aligned}
&\mathbb{E}_{\pi, \nu^*}\left[\left(\phi\left(s_{h}, a_{h}, b_h\right)^{\top} \Lambda_{h}^{-1} \phi\left(s_{h}, a_{h}, b_h\right)\right)^{1 / 2} \mid s_{1}=x_{0}\right] \\
&\qquad \leq \begin{cases}\left(1+n^{j^{*}}_{\text{min}}\right)^{-1 / 2}, & h=1, \\
\sum_{i \in [A]}\pi_1(y_{i})(H-1) \cdot\left(\frac{p_{i}}{\sqrt{1+m_{1}}}+\frac{1-p_{i}}{\sqrt{1+m_{2}}}\right), & h \in\{2, \ldots, H\},\end{cases}
\end{aligned}
\end{equation*}
where we use the definition of $\phi$, and the fact that $n_{\text{min}}^{j^*} \leq n_{i j^*}$ for all $i \in [A]$. This concludes the proof of \eqref{eqn:upper_ru1}.

We now turn to the high-probability event: 
$$
\overline{\mathcal{E}} = \left\{m_i\geqslant K/4-\sqrt{\frac{1}{2}K\log(2K)} \mid i = 1,2 \right\}.$$
By construction, we know that $\frac{3}{4} \geq p_1, p_2 \geq \frac{1}{4}$. Therefore, we know that $\expect[m_i] \geqslant 1/4K$ for $i=1,2$. By Hoeffding's inequality, for any $\xi \in (0,1) $, with probability at least $1- \xi$ , the following event happens
$$
\left\{m_i\geqslant K/4-\sqrt{\frac{1}{2}K\log(2/\xi)} \mid i = 1,2 \right\}.
$$
Setting $\xi = 1/K$, we obtain the desired result.

\noindent\textbf{Proof of \eqref{eqn:upper_ru}.} In particular, for  $K\geqslant 32\log(2K)$, with probability at least $1-1/K$, we have 
$$m_i \geqslant K/8, ~~\forall i \in \{1,2\}.$$ 
The \eqref{eqn:upper_ru} follows directly from \eqref{eqn:upper_ru1} and $m_i \geqslant K/8$.
\end{proof}

        \subsection{Proof of Theorem \ref{theorem:minimax}}
        We now invoke Lemma~\ref{lemma:reduction} and Lemma~\ref{lemma:upper_ru} to give a detailed proof of Theorem \ref{theorem:minimax}.
        
        \begin{proof}[Proof of Theorem \ref{theorem:minimax}]\label{proof_of_minimax}
            Since the actions are predetermined, we can additionally assume that 
            $$ \dfrac{1}{\overline{c}} \leqslant \dfrac{n^1_{\min}}{n^2_{\text{min}}}\leqslant\overline{c}, \quad \dfrac{1}{\overline{c}} \leqslant \dfrac{n_{2i}}{n^2_{\text{min}}}\leqslant\overline{c},\ \ \dfrac{1}{\overline{c}} \leqslant \dfrac{n_{1i}}{n^2_{\text{min}}}\leqslant\overline{c} \quad \forall i \in \{1,\cdots,n\},$$
             where $\overline{c}$ is a positive constant. {This assumption means that the numbers of action pairs whose components contain $y_1$ or $y_2$ are relatively uniform.} By Lemma \ref{lemma:reduction}, there exist two games $\mathcal{M}_1,\mathcal{M}_2$ such that 
              \begin{align*}
                \max\limits_{i\in\{1,2\}}&\sqrt{n_{\text{min}}^i}\expect_{\mathcal{D} \sim \mathcal{M}_i}\left[\subb\left(\algo(\mathcal{D}), x_{0}\right)\right] \\
                &\geqslant\dfrac{\sqrt{n_{\text{min}}^1 n_{\text{min}}^2}}{\sqrt{n_{\text{min}}^1}+\sqrt{n_{\text{min}}^2}}\left(\expectgamei{\subb\left(\algo(\mathcal{D}), x_{0}\right)}+\expectgameii{\subb\left(\algo(\mathcal{D}), x_{0}\right)}\right) \\&\geqslant \dfrac{\sqrt{n_{\text{min}}^1 n_{\text{min}}^2}}{\sqrt{n_{\text{min}}^1}+\sqrt{n_{\text{min}}^2}} \frac{1}{2}(H-1)(p^*-p) = \dfrac{\sqrt{2}}{16}\cdot (H-1) \cdot \dfrac{\sqrt{n_{\text{min}}^1 n_{\text{min}}^2}}{\sqrt{n_{\text{min}}^1}+\sqrt{n_{\text{min}}^2}} \cdot \dfrac{1}{\sqrt{n_1+n_2}},
              \end{align*}
             where the first inequality is because $\max\{x, y\} \geq a x + (1-a) y$ for all $a \in [0,1]$ and $x, y \geq 0$ and the second inequality follows from Lemma~\ref{lemma:reduction}.
            Note that
                \$n_1+n_2 = \sum_{i=1}^A(n_{1i}+n_{2i})\leqslant2\overline{c}An^{2}_{\text{min}}.\$
             Therefore, we have
             \begin{equation}
                \label{eqn:lower_subopt}
              \begin{aligned}
                \max\limits_{i\in\{1,2\}}\sqrt{n_{\text{min}}^i}\expect_{\mathcal{D} \sim \mathcal{M}_i}\left[\subb\left(\algo(\mathcal{D}), x_{0}\right)\right]
                \geqslant \dfrac{1}{16\sqrt{\overline{c}A}}\cdot (H-1)\cdot \dfrac{\sqrt{{n^{1}_{\text{min}}}/{n^{2}_{\text{min}}}}}{\sqrt{{n^{1}_{\text{min}}}/{n^{2}_{\text{min}}}}+1}\geqslant C,
              \end{aligned}
                \end{equation}
              where $C = \dfrac{1}{16\sqrt{\overline{c}A}}\cdot (H-1)\cdot\dfrac{1}{1+\sqrt{\overline{c}}}$. Here the last inequality is because $f(t) = \frac{t}{1+t}$ is increasing for $t\geq 0$. We now take the optimal policy of game $\mathcal{M}_i$ to be $\pi^*_1(y_i) = \nu^*_1(y_i) = 1$. It follows that 
              \begin{align*}
                &\max\limits_{i\in \{1,2\}}\expect_{\mathcal{D} \sim \mathcal{M}_i}\left[ \dfrac{\subb\left(\algo(\mathcal{D}), x_{0}\right)}{\ri(\mcd,x_0)}\right]\\
                &\qquad \geqslant\max\limits_{i\in \{1,2\}}\expect_{\mathcal{D} \sim \mathcal{M}_i}\left[ \dfrac{\subb\left(\algo(\mathcal{D}), x_{0}\right)}{\ri(\mcd,x_0)}\mathbbm{1}_{\overline{\mathcal{E}}}\right]\\
                &\qquad \geqslant\max\limits_{i\in \{1,2\}}\expect_{\mathcal{D} \sim \mathcal{M}_i}\left[\dfrac{1}{C_1}\sqrt{n_{\text{min}}^i} \subb(\algo(\cD), x_0)\mathbbm{1}_{\overline{\mathcal{E}}}\right]\\
                &\qquad = \max\limits_{i\in \{1,2\}}\expect_{\mathcal{D} \sim \mathcal{M}_i}\left[\dfrac{1}{C_1}\sqrt{n_{\text{min}}^i}\subb\left(\algo(\mathcal{D}), x_{0}\right)\right]\\
                &\qquad \qquad - \expect_{\mathcal{D} \sim \mathcal{M}_i}\left[\dfrac{1}{C_1}\sqrt{n_{\text{min}}^i}\subb\left(\algo(\mathcal{D}), x_{0}\right)\cdot\mathbbm{1}_{\overline{\mathcal{E}}^c}\right]\\
                &\qquad \geqslant \dfrac{C}{C_1} - \dfrac{\sqrt{K}}{C_1}\cdot2H\cdot\dfrac{1}{K}\\
                &\qquad \geqslant\dfrac{C}{2C_1} :=  C' >0,
              \end{align*} 
              where $C' =  \dfrac{C}{2C_1} , C_1 = 2\sqrt{2}H $, and $C = \dfrac{1}{16\sqrt{\overline{c}A}}\cdot (H-1)\cdot\dfrac{1}{1+\sqrt{\overline{c}}}$. The second inequality follows from \eqref{eqn:upper_ru}. The third inequality is because \eqref{eqn:lower_subopt}, $\subb(\algo(\cD), x_0) \leq 2H$, $n_{\text{min}}^i \leq K$, and $\PP(\overline{\cE}) \leq \frac{1}{K}$. The forth inequality holds for $K \geqslant \dfrac{16H^2}{C^2}$. By $\sub(\algo(\cD), x_0) \geq \subb(\algo(\cD), x_0)$, we conclude the proof of Theorem~\ref{theorem:minimax}.
        \end{proof}

        \section{Technical Lemmas}
        Recall that we use shorthands
        $$ \phi_h = \phi(s_h,a_h,b_h),\qquad\phi_h^\tau = \phi(s_h^\tau,a_h^\tau,b_h^\tau),\qquad r_h^\tau = r(s_h^\tau,a_h^\tau,b_h^\tau) .$$
        \begin{lemma} \label{lemma:est_of_omega}
          For any dataset $\mcd$, we define 
          $$w_h = \theta_h + \int_{x\in\mathcal{S}}\overline{V}_{h+1}(x)\mu_h(x)\d x,$$ 
          where $\overline{V}_{h+1}(x)$ are the value function constructed in Algorithm \ref{alg1}, then $w_h$ as well as $\overline{w}_h$  in Algorithm \ref{alg1} satisfy 
          \[ \normof{w_h} \leqslant H\sqrt{d}, \quad \normof{\overline{w}_h}\leqslant H\sqrt{Kd}, \quad i = 1,2.\]
        \end{lemma}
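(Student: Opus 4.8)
The plan is to bound $w_h$ and $\overline{w}_h$ separately by elementary norm inequalities, relying only on Assumption~\ref{assumption:linear:MG} together with the observation that the truncation operator in Algorithm~\ref{alg1} forces $0 \le \overline{V}_{h+1}(x) \le H - h$ for every $x\in\cS$ (indeed $\overline{V}_{h+1}$ is an expectation of $\overline{Q}_{h+1} = \Pi_{H-h}\{\cdot\}\in[0,H-h]$, and likewise $0\le\underline{V}_{h+1}\le H-h$).

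\textbf{Bounding $w_h$.} I would apply the triangle inequality to $w_h = \theta_h + \int_{\cS}\overline{V}_{h+1}(x)\,\d\mu_h(x)$. The first term obeys $\normof{\theta_h} \le \sqrt d$ by Assumption~\ref{assumption:linear:MG}. For the second, since $0 \le \overline{V}_{h+1}(x) \le H-h \le H-1$, each coordinate $i\in[d]$ satisfies $\bigl|\int_\cS \overline{V}_{h+1}(x)\,\d\mu_h^{(i)}(x)\bigr| \le (H-1)\,|\mu_h^{(i)}(\cS)|$, whence $\normof{\int_\cS \overline{V}_{h+1}(x)\,\d\mu_h(x)} \le (H-1)\normof{\mu_h(\cS)} \le (H-1)\sqrt d$, again by Assumption~\ref{assumption:linear:MG}. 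Adding the two estimates gives $\normof{w_h} \le \sqrt d + (H-1)\sqrt d = H\sqrt d$.

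\textbf{Bounding $\overline{w}_h$.} Recall $\overline{w}_h = \Lambda_h^{-1}\sum_{\tau=1}^K \phi_h^\tau\bigl(r_h^\tau + \overline{V}_{h+1}(x_{h+1}^\tau)\bigr)$ with $0 \le r_h^\tau + \overline{V}_{h+1}(x_{h+1}^\tau) \le 1 + (H-h) \le H$. For any unit vector $v$ I would estimate $|v^\top \overline{w}_h| \le \sum_{\tau=1}^K |v^\top \Lambda_h^{-1}\phi_h^\tau|\cdot|r_h^\tau + \overline{V}_{h+1}(x_{h+1}^\tau)| \le H\sum_{\tau=1}^K \normof{\Lambda_h^{-1}\phi_h^\tau}$ (Cauchy--Schwarz on each summand), then $\sum_\tau \normof{\Lambda_h^{-1}\phi_h^\tau} \le \sqrt{K\sum_\tau \normof{\Lambda_h^{-1}\phi_h^\tau}^2}$ (Cauchy--Schwarz over $\tau$). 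Since $\Lambda_h \succeq I$ we have $\Lambda_h^{-2} \preceq \Lambda_h^{-1}$, so $\normof{\Lambda_h^{-1}\phi_h^\tau}^2 = (\phi_h^\tau)^\top \Lambda_h^{-2}\phi_h^\tau \le (\phi_h^\tau)^\top \Lambda_h^{-1}\phi_h^\tau$; summing and using $\sum_\tau \phi_h^\tau(\phi_h^\tau)^\top = \Lambda_h - I$ yields $\sum_\tau (\phi_h^\tau)^\top \Lambda_h^{-1}\phi_h^\tau = \tr\bigl(\Lambda_h^{-1}(\Lambda_h - I)\bigr) = \tr(I - \Lambda_h^{-1}) \le d$. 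Combining, $|v^\top \overline{w}_h| \le H\sqrt{Kd}$ for every unit $v$, hence $\normof{\overline{w}_h} \le H\sqrt{Kd}$. The identical argument with $\underline{V}_{h+1}\in[0,H-h]$ in place of $\overline{V}_{h+1}$ gives the same bounds for $\underline{w}_h$.

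There is no genuine obstacle here; the only point requiring slight care is tracking the truncation level so that $\overline{V}_{h+1} \le H-h$ rather than merely $\le H$, which is exactly what makes the constant in $\normof{w_h} \le H\sqrt d$ come out as $H$ rather than $2H$. The single substantive computation is the trace bound $\sum_{\tau=1}^K (\phi_h^\tau)^\top \Lambda_h^{-1}\phi_h^\tau = \tr(I - \Lambda_h^{-1}) \le d$, which is standard in the linear-MDP literature (see, e.g., \citet{jin2020provably}).
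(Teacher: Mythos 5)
Your proof is correct and follows essentially the same route as the paper's: triangle inequality plus the bounds $\normof{\theta_h}\le\sqrt d$, $\overline{V}_{h+1}\le H-h$, and $\normof{\mu_h(\cS)}\le\sqrt d$ for $w_h$, and for $\overline{w}_h$ the chain $\normof{\Lambda_h^{-1}\phi_h^\tau}\le(\,(\phi_h^\tau)^\top\Lambda_h^{-1}\phi_h^\tau)^{1/2}$ (your $\Lambda_h^{-2}\preceq\Lambda_h^{-1}$ is the paper's $\normof{\Lambda_h^{-1}}_{\mathrm{op}}\le1$), Cauchy--Schwarz over $\tau$, and the trace identity $\sum_\tau(\phi_h^\tau)^\top\Lambda_h^{-1}\phi_h^\tau=\tr(\Lambda_h^{-1}(\Lambda_h-I))\le d$. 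The only cosmetic difference is your unit-vector duality step in place of a direct triangle inequality on the norm, which is equivalent.
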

        \begin{proof}[Proof of Lemma \ref{lemma:est_of_omega}] By definition of $w_h$,
          \begin{align*}
            \normof{w_h} &= \normof{\theta_h + \int_{x\in\mathcal{S}}\overline{V}_{h+1}(x)\mu_h(x)\d x}\leqslant \normof{\theta_h} + \normof{\int_{x\in\mathcal{S}}\overline{V}_{h+1}(x)\mu_h(x)\d x}\\
            &\leqslant \sqrt{d} + (H-h)\int_{x\in\mathcal{S}} \normof{\mu_h(x)}\d x \leqslant \sqrt{d} + (H-h)\sqrt{d} \leqslant H\sqrt{d}
          \end{align*}
          where the second and the last inequities follow from the regulation assumption in Assumption~\ref{assumption:linear:MG} that  $\normof{\theta_h} \leqslant \sqrt{d}$ and $ \int_{x\in\mathcal{S}} \normof{\mu_h(x)}\d x \leqslant \sqrt{d}$ , while the construction in Algorithm \ref{alg1}  guarantees $\overline{V}_{h+1}(x)\leqslant H-h$, which implies the third inequality.

          By construction of $\overline{w}_h$ in Algorithm \ref{alg1}, we have
          \begin{align*}
            \normof{\overline{w}_h} = \normof{\Lambda_h^{-1}\sum_{\tau=1}^K\phi_h^\tau(r_h^\tau+\overline{V}_{h+1}(s_{h+1}))}\leqslant H\sum_{\tau=1}^K\normof{\Lambda_h^{-1}\phi_h^\tau},
          \end{align*}
          where the last inequality follows from triangle inequality and $\abso{r_h^\tau} \leqslant 1, \overline{V}_{h+1}(x)\leqslant H-h $. Note that 
          \[\normof{\Lambda_h^{-1}\phi_h^\tau} = \sqrt{(\phi_h^\tau)^\top\Lambda_h^{-1/2}\Lambda_h^{-1}\Lambda_h^{-1/2}\phi_h^\tau} \leqslant \left((\phi_h^\tau)^\top\Lambda_h^{-1}\phi_h^\tau\right)^{1/2}.\]
          The last inequality follows from $\normof{\Lambda_h^{-1}}_{\text{op}}\leqslant 1$. Thus,
          \begin{align*}
            H\sum_{\tau=1}^K\normof{\Lambda_h^{-1}\phi_h^\tau} &={H}\sum_{\tau=1}^K\left((\phi_h^\tau)^\top\Lambda_h^{-1}\phi_h^\tau\right)^{1/2} \leqslant H\sqrt{K}\left(\sum_{\tau=1}^K(\phi_h^\tau)^\top\Lambda_h^{-1}\phi_h^\tau\right)^{1/2} \\
            &= H\sqrt{K}\left(\tr(\Lambda_h^{-1}\sum_{\tau=1}^K\phi_h^\tau(\phi_h^\tau)^\top)\right)^{1/2} = H\sqrt{K}\left(\tr(\Lambda_h^{-1}(\Lambda_h-1 I))\right)^{1/2}\\
            &\leqslant H\sqrt{K}\left(\tr(I)\right)^{1/2} = H\sqrt{Kd},
          \end{align*}
          where the first inequality follows from Cauchy-Schwarz inequality.
        \end{proof}
        \begin{lemma}[Non-Expansive Property of Nash Value]\label{lemma:nonexpand}
            For any integer $n$ and matrix $A \in \RR^{n \times n}$, we denote $f(A) = \max_{x\in \Delta}\min_{y\in\Delta} x^\T A y $, where $\Delta = \{x\in \mathbb{R}^n: x_i \ge 0 , \sum_{i=1}^n x_i = 1 \}$. Fix $\epsilon > 0$, given any matrices $A_1, A_2 \in \mathbb{R}^{n\times n}$ satisfying $\normof{A_1-A_2}_\infty \leq \epsilon$, we have 
           \$
           \abso{f(A_1)-f(A_2)} \leq \epsilon.
           \$
        \end{lemma}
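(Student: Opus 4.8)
The plan is to reduce everything to the elementary observation that for probability vectors $x,y \in \Delta$, a bilinear form $x^\top M y$ built from a matrix $M$ with small entries is itself small. Concretely, I would first show that for every $x, y \in \Delta$,
\[
|x^\top A_1 y - x^\top A_2 y| = |x^\top (A_1 - A_2) y| \le \sum_{i,j} x_i\, |(A_1 - A_2)_{ij}|\, y_j \le \epsilon \sum_{i,j} x_i y_j = \epsilon,
\]
where the last equality uses $\sum_i x_i = \sum_j y_j = 1$ and the middle inequality uses $\|A_1-A_2\|_\infty \le \epsilon$ (entrywise sup-norm). Equivalently, $x^\top A_2 y - \epsilon \le x^\top A_1 y \le x^\top A_2 y + \epsilon$, uniformly over $(x,y) \in \Delta \times \Delta$.

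Second, I would propagate this two-sided bound through the inner minimization: fixing $x \in \Delta$ and taking the minimum over $y \in \Delta$ of both sides of $x^\top A_1 y \le x^\top A_2 y + \epsilon$ gives $\min_y x^\top A_1 y \le \min_y x^\top A_2 y + \epsilon$, and symmetrically $\min_y x^\top A_1 y \ge \min_y x^\top A_2 y - \epsilon$; that is, the function $x \mapsto \min_y x^\top A y$ changes by at most $\epsilon$ when $A$ is replaced. Third, I would repeat the identical argument for the outer maximization over $x \in \Delta$, which yields $|f(A_1) - f(A_2)| \le \epsilon$. Here the $\min$ and $\max$ are attained because $\Delta$ is compact and the maps are continuous, so there is no subtlety with $\inf$ versus $\min$; in any case the same chain of inequalities holds verbatim for infima and suprema.

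There is essentially no hard step: the lemma is just the standard fact that $\min$ and $\max$ are $1$-Lipschitz (non-expansive) under a uniform perturbation of their argument, combined with the simplex normalization that turns an entrywise bound on $A_1-A_2$ into a uniform bound on the payoff values. The only point worth flagging is the convention for $\|\cdot\|_\infty$: in the intended application — the $\epsilon$-net argument inside the proof of Lemma~\ref{lemma:bellman:error}, where it controls $|V'_{h+1}(s) - \overline{V}_{h+1}(s)|$ via $\|\overline{Q}_{h+1} - \overline{Q}'_{h+1}\|_\infty \le \epsilon$ — it is the entrywise sup-norm $\max_{i,j}|M_{ij}|$, which is exactly the quantity used in the first display above.
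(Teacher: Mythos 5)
Your proposal is correct and follows essentially the same route as the paper's proof: both bound the bilinear form uniformly, $|x^\top (A_1-A_2) y| \le \epsilon$ for $x, y \in \Delta$, and then propagate the two-sided bound through the inner $\min$ and outer $\max$ to get $|f(A_1)-f(A_2)| \le \epsilon$. Your remark on the entrywise interpretation of $\normof{\cdot}_\infty$ matches the intended use in the covering argument, so nothing further is needed.
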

        \begin{proof} Fix $x$, we have
          \begin{align*}
              \min_{y\in \Delta} x^\T A_1 y& = \min_{y\in \Delta} x^\T (A_1-A_2+A_2) y  \geq  \min_{y\in \Delta} x^\T A_2 y -\epsilon,
          \end{align*}
          where the inequality follows from the fact that $\normof{A_1-A_2}_\infty \leq \epsilon$. Hence, we can further obtain 
          \begin{align}
              f(A_1) = \max_{x \in \Delta}\min_{y\in \Delta} x^\T A_1 y \ge \max_{x \in \Delta}  \min_{y\in \Delta} x^\T A_2 y -\epsilon = f(A_2) -\epsilon .
          \end{align}
          Symmetrically, we can obtain $ f(A_2) \geq f(A_1) -\epsilon $. Therefore, we conclude the proof.
        \end{proof}

        \begin{lemma}[$\epsilon-$Covering]\label{lemma:covering}
          For any $\epsilon>0$, the $\epsilon-$covering number $\mathcal{N}_{h,\epsilon}$ of $\overline{\mathcal{Q}}_{h}$ (and $ \underline{\mathcal{Q}}_{h}$) with respect to $\ell_{\infty} $ norm satisfies
            \[\mathcal{N}_{h,\epsilon} \leqslant \left(1+\frac{4H\sqrt{dK}}{\epsilon}\right)^{d}\left(1+\frac{8\beta^2\sqrt{d}}{\epsilon^2}\right)^{d^2},\]
          Here the function classes $\overline{\mathcal{Q}}_{h}$ and $ \underline{\mathcal{Q}}_{h}$ are defined in \eqref{eq:def:function:class}. 
        \end{lemma}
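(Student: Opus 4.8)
The plan is to run the standard metric-entropy argument for linear function classes (as used for linear bandits/MDPs), adapted to the extra additive bonus term. First I would note that the truncation operator $\Pi_{H-h+1}(y)=\min\{H-h+1,\max\{y,0\}\}$ is $1$-Lipschitz in $y$, so an $\epsilon$-net (in $\ell_\infty$) of the untruncated class $\{(s,a,b)\mapsto \phi(s,a,b)^\top w+\beta\sqrt{\phi^\top\Lambda^{-1}\phi}\}$ descends, after applying $\Pi_{H-h+1}$ to each center, to an $\epsilon$-net of $\overline{\mathcal{Q}}_h$ of the same cardinality. Thus it suffices to cover the untruncated, parametric class.

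Next I would reparametrize the bonus: set $A=\beta^2\Lambda^{-1}$, so that $\beta\sqrt{\phi^\top\Lambda^{-1}\phi}=\sqrt{\phi^\top A\phi}$. Since $\lambda_{\min}(\Lambda)\ge 1$ we have $\normop{\Lambda^{-1}}\le 1$, hence $\normop{A}\le\beta^2$ and therefore $\normof{A}_F\le\sqrt{d}\,\beta^2$. So every function has the form $f_{w,A}(\cdot)=\phi(\cdot)^\top w+\sqrt{\phi(\cdot)^\top A\phi(\cdot)}$ with $\normof{w}\le H\sqrt{dK}$ and $\normof{A}_F\le\sqrt{d}\beta^2$, and it is enough to cover this parameter set. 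The key Lipschitz-type estimate is that for any two parameter pairs,
\$
\normof{f_{w_1,A_1}-f_{w_2,A_2}}_{\infty}\le \normof{w_1-w_2}+\sqrt{\normof{A_1-A_2}_F},
\$
which follows from $\normof{\phi}\le 1$, Cauchy--Schwarz $|\phi^\top(w_1-w_2)|\le\normof{\phi}\normof{w_1-w_2}$, the elementary bound $|\sqrt{u}-\sqrt{v}|\le\sqrt{|u-v|}$ applied with $u=\phi^\top A_1\phi$, $v=\phi^\top A_2\phi$, and $|\phi^\top M\phi|=|\langle\phi\phi^\top,M\rangle|\le\normof{\phi\phi^\top}_F\normof{M}_F\le\normof{M}_F$.

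Finally I would invoke the standard volumetric bound that an $\ell_2$-ball of radius $R$ in $\RR^n$ admits an $\eta$-net of size at most $(1+2R/\eta)^n$. Applying this with $R=H\sqrt{dK}$, $n=d$, $\eta=\epsilon/2$ yields a net $\mathcal{C}_w$ for the $w$-ball with $|\mathcal{C}_w|\le(1+4H\sqrt{dK}/\epsilon)^d$; identifying $(\RR^{d\times d},\normof{\cdot}_F)$ with $\RR^{d^2}$ and applying it with $R=\sqrt{d}\beta^2$, $n=d^2$, $\eta=\epsilon^2/4$ yields a net $\mathcal{C}_A$ for the $A$-ball with $|\mathcal{C}_A|\le(1+8\sqrt{d}\beta^2/\epsilon^2)^{d^2}$. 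By the displayed estimate, the family $\{f_{w,A}:w\in\mathcal{C}_w,\ A\in\mathcal{C}_A\}$ is an $\epsilon$-net, since the two error contributions are $\le\epsilon/2$ and $\le\sqrt{\epsilon^2/4}=\epsilon/2$; composing with $\Pi_{H-h+1}$ gives $\mathcal{N}_{h,\epsilon}\le|\mathcal{C}_w|\,|\mathcal{C}_A|\le(1+4H\sqrt{dK}/\epsilon)^d(1+8\beta^2\sqrt{d}/\epsilon^2)^{d^2}$. The same argument with $+\Gamma_h$ replaced by $-\Gamma_h$ covers $\underline{\mathcal{Q}}_h$. There is no substantial obstacle here: the only places requiring care are the reparametrization $\Lambda\mapsto A$ together with the Frobenius-radius bound (which uses $\lambda_{\min}(\Lambda)\ge1$), and bookkeeping the constants so that an $\epsilon/2$-net for $w$ and an $\epsilon^2/4$-net for $A$ combine into exactly the claimed $\epsilon$-net with the stated constants $4$ and $8$.
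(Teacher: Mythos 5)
Your proposal is correct and follows essentially the same argument as the paper: drop the truncation by $1$-Lipschitzness, bound $\|Q_1-Q_2\|_\infty$ by $\|w_1-w_2\|$ plus a square-root Frobenius term via Cauchy--Schwarz and $|\sqrt{u}-\sqrt{v}|\le\sqrt{|u-v|}$, then cover the two parameter balls with the standard volumetric bound. Your reparametrization $A=\beta^2\Lambda^{-1}$ versus the paper's choice of covering $\{A:\|A\|_F\le\sqrt{d}\}$ at mesh $\epsilon^2/(4\beta^2)$ is only a rescaling and yields the identical count.
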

        \begin{proof}[Proof of Lemma \ref{lemma:covering}]
          We only estimate the covering number of $\overline{\mathcal{Q}}_{h}$. Suppose $Q_1$ with parameters $(w_1,A_1)$ and $Q_2$ with parameters $(w_2,A_2)$ are in the function class $\overline{\mathcal{Q}}_{h} $, then
          \begin{align*}
            \normof{Q_1-Q_2}_\infty&=\sup\limits_{\phi:\normof{\phi}\leqslant 1}\abso{\mathrm{\Pi}_{H-h+1}\left(\phi^\top w_1 + \beta\sqrt{\phi^\top A_1 \phi}\right)-\mathrm{\Pi}_{H-h+1}\left(\phi^\top w_2 + \beta\sqrt{\phi^\top A_2 \phi}\right)}\\
            &\leqslant\sup\limits_{\phi:\normof{\phi}\leqslant 1}\abso{\left(\phi^\top w_1 + \beta\sqrt{\phi^\top A_1 \phi}\right)-\left(\phi^\top w_2 + \beta\sqrt{\phi^\top A_2 \phi}\right)}\\
            &\leqslant\sup\limits_{\phi:\normof{\phi}\leqslant 1}\abso{\phi^\top(w_1-w_2)} + \sup\limits_{\phi:\normof{\phi}\leqslant 1} \beta\sqrt{\abso{\phi^\top(A_1-A_2)\phi}} \\
            &\leqslant \normof{w_1-w_2}+ \beta\sqrt{\normof{A_1-A_2}_{F}},
          \end{align*} 
          where the second inequality follows from the inequality $\sqrt{x}-\sqrt{y} \leqslant \sqrt{\abso{x-y}}$. Thus $\epsilon/2$-covering of $C_w = \{w \in \mathbb{R}^d:\normof{w}\leqslant H\sqrt{dK}\} $ and $\dfrac{\epsilon^2}{4\beta^2}-$covering of $C_{A} = \{A\in \mathbb{R}^{d\times d}:\normof{A}_{F}\leqslant\sqrt{d}\}$ are sufficient to form an $\epsilon$-cover of $\overline{\cQ}_h$. We obtain the covering number of $\overline{\cQ}_{h}$ satisfies
          \[{\mathcal{N}}_{h,\epsilon} \leqslant  \left(1+\frac{4H\sqrt{dK}}{\epsilon}\right)^{d}\left(1+\frac{8\beta^2\sqrt{d}}{\epsilon^2}\right)^{d^2}.\]
          The inequality follows from the standard bound of the covering number of Euclidean Balls (cf. Lemma 2 in \citet{vershynin2010introduction}).
        \end{proof}
        \begin{lemma}[Concentration for Self-normalized Processes \citep{abbasi2011improved}]\label{lemma:concentration}
          Suppose $\{\epsilon_t\}_{t \geqslant 1}$ is a scalar stochastic
          process generating the filtration $\{\mathcal{F}_t\}_{t \geqslant 1}$, and $\epsilon_t|\mathcal{F}_{t-1}$ is zero mean and $\sigma$-subGaussian. Let $\{\phi_t\}_{t\geqslant 1}$
          be an $\mathbb{R}^d$-valued stochastic process with $\phi_t \in \mathcal{F}_{t-1}$. Suppose $\Lambda_0 \in \mathbb{R}^{d\times d}$ is positive definite, and $\Lambda_t = \Lambda_0 + \sum_{s = 1}^t\phi_s\phi_s^\top$. Then for each $\delta \in (0,1)$, with probability at least $1-\delta$, we have
          \[ \Big\Vert\sum_{s=1}^t \phi_s\epsilon_s \Big\Vert_{\Lambda_t^{-1}}^2\leqslant 2\sigma^2 \log\bigg(\frac{\mathrm{det}(\Lambda_t)^\frac{1}{2}}{\delta\mathrm{det}(\Lambda_0)^\frac{1}{2}}\bigg), \quad \forall t \geqslant 0 . \]
        \end{lemma}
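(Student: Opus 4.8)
The statement is the standard self-normalized tail bound of \citet{abbasi2011improved}, and the plan is to reproduce its proof via the \emph{method of mixtures} (pseudo-maximization). First I would fix a deterministic vector $\lambda \in \mathbb{R}^d$ and introduce, with the convention that the empty sum is zero,
\[
  M_t^\lambda \;=\; \exp\!\left(\sum_{s=1}^t \Big[\, \frac{\lambda^\top \phi_s\, \epsilon_s}{\sigma} - \tfrac12 (\lambda^\top \phi_s)^2 \,\Big]\right),
\]
so that $M_0^\lambda = 1$. Since $\phi_s \in \mathcal{F}_{s-1}$, conditioning on $\mathcal{F}_{s-1}$ makes $(\lambda^\top \phi_s)^2$ deterministic, and applying the $\sigma$-subGaussian bound $\mathbb{E}[\exp(\eta \epsilon_s)\mid \mathcal{F}_{s-1}] \le \exp(\eta^2 \sigma^2/2)$ with $\eta = \lambda^\top\phi_s/\sigma$ gives $\mathbb{E}[M_t^\lambda \mid \mathcal{F}_{t-1}] \le M_{t-1}^\lambda$. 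Hence $(M_t^\lambda)_{t\ge 0}$ is a nonnegative supermartingale with $\mathbb{E}[M_0^\lambda]=1$.

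Next I would average over $\lambda$ against the Gaussian prior $\lambda \sim \mathcal{N}(0,\Lambda_0^{-1})$, which is well-defined because $\Lambda_0 \succ 0$, and set $\overline{M}_t = \mathbb{E}_\lambda[M_t^\lambda]$. By Tonelli's theorem one may interchange $\mathbb{E}_\lambda$ with the conditional expectation, so $(\overline{M}_t)_{t\ge 0}$ is again a nonnegative supermartingale with $\mathbb{E}[\overline{M}_0] \le 1$. The crucial computation is an explicit Gaussian integral: writing $S_t = \sum_{s=1}^t \phi_s \epsilon_s / \sigma$ and using $\sum_{s=1}^t \phi_s\phi_s^\top = \Lambda_t - \Lambda_0$, the exponent equals $\lambda^\top S_t - \tfrac12 \lambda^\top(\Lambda_t - \Lambda_0)\lambda$; completing the square in $\lambda$ against the prior density yields
\[
  \overline{M}_t \;=\; \left(\frac{\det \Lambda_0}{\det \Lambda_t}\right)^{1/2} \exp\!\left(\tfrac12\, S_t^\top \Lambda_t^{-1} S_t\right) \;=\; \left(\frac{\det \Lambda_0}{\det \Lambda_t}\right)^{1/2} \exp\!\left(\frac{1}{2\sigma^2}\, \Big\|\textstyle\sum_{s=1}^t \phi_s \epsilon_s\Big\|_{\Lambda_t^{-1}}^2\right).
\]

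Finally I would apply Ville's maximal inequality for nonnegative supermartingales: $\mathbb{P}\big(\sup_{t\ge 0} \overline{M}_t \ge 1/\delta\big) \le \delta\,\mathbb{E}[\overline{M}_0] \le \delta$. On the complementary event, $\overline{M}_t < 1/\delta$ for every $t$, so taking logarithms and rearranging gives
\[
  \Big\|\textstyle\sum_{s=1}^t \phi_s \epsilon_s\Big\|_{\Lambda_t^{-1}}^2 \;\le\; 2\sigma^2 \log\!\left(\frac{\det(\Lambda_t)^{1/2}}{\delta\, \det(\Lambda_0)^{1/2}}\right) \qquad \text{for all } t \ge 0,
\]
which is the claimed bound. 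The main obstacle is the measure-theoretic bookkeeping rather than any hard estimate: one must verify carefully that $M_t^\lambda$ is a supermartingale using the conditional subGaussian MGF bound (legitimate precisely because $\phi_s$ is $\mathcal{F}_{s-1}$-measurable), that integrating out $\lambda$ preserves both the supermartingale property and the bound $\mathbb{E}[\overline{M}_0]\le 1$, and that Ville's inequality applies uniformly over all $t$ (straightforward here since $t$ ranges over a countable set). The Gaussian integral itself is a routine complete-the-square computation.
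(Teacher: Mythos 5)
Your proof is correct: the supermartingale construction via the conditional subGaussian MGF (valid because $\phi_s$ is $\mathcal{F}_{s-1}$-measurable), the Gaussian mixture with prior $\mathcal{N}(0,\Lambda_0^{-1})$, the complete-the-square evaluation of $\overline{M}_t$, and Ville's maximal inequality together give exactly the stated bound. The paper itself offers no proof and simply defers to \citet{abbasi2011improved}, and your argument is precisely the method-of-mixtures proof given there, so there is nothing to reconcile.
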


        \begin{proof}
            See \citet{abbasi2011improved} for a detailed proof.
        \end{proof}

        \begin{lemma}[Matix Bernstein Inequality]\label{lemma:matrix_concertra}
           Supposed that $\{A_k\}_{k=1}^{n} $ are independent random matrix in $\mathbb{R}^{d_1\times d_2} $. They satisfy 
          $\expect[A_k] = 0$ and $\normop{A_k}\leqslant L .$
         Let $ Z = \sum_{k=1}^n A_k$ and
         \[v(Z) = \max\big\{\normop{\expect[ZZ^\top]},\normop{\expect[Z^\top Z]}\big\} = \max\bigg\{\normop{\sum_{k=1}^n\expect[A_kA_k^\top]},\normop{\sum_{k=1}^n\expect[A_k^\top A_k]}\bigg\}.\] 
         We have,
         \[\mbp(\normop{Z}\geqslant t)\leqslant (d_1+d_2)\cdot\exp\left(-\dfrac{t^2/2}{v(Z)+L/3\cdot t}\right), \quad \forall t>0. \]
        \end{lemma}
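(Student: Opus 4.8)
This is the classical matrix Bernstein inequality, so the plan is to run the matrix Laplace transform (Ahlswede--Winter / Tropp) method. The first step is a \emph{Hermitian dilation}: for $A\in\mathbb{R}^{d_1\times d_2}$ set
\[
\mathcal{H}(A)=\begin{pmatrix}0 & A\\ A^{\top} & 0\end{pmatrix}\in\mathbb{R}^{(d_1+d_2)\times(d_1+d_2)},
\]
a symmetric matrix with $\lambda_{\max}(\mathcal{H}(A))=\normop{A}$ and $\mathcal{H}(A)^2=\mathrm{blockdiag}(AA^{\top},A^{\top}A)$, and note $\mathcal{H}$ is linear. Putting $X_k=\mathcal{H}(A_k)$, the $X_k$ are independent, symmetric, mean-zero, satisfy $\normop{X_k}\le L$, and $X_k^2=\mathrm{blockdiag}(A_kA_k^{\top},A_k^{\top}A_k)$ so that $\normop{\sum_k\mathbb{E}[X_k^2]}=\max\{\normop{\sum_k\mathbb{E}[A_kA_k^{\top}]},\normop{\sum_k\mathbb{E}[A_k^{\top}A_k]}\}$. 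By independence and $\mathbb{E}[A_k]=0$ the cross terms in $\mathbb{E}[ZZ^{\top}]$ and $\mathbb{E}[Z^{\top}Z]$ vanish, so this last quantity equals $v(Z)$ (which also proves the second equality in the statement of $v(Z)$). Since $\normop{Z}=\lambda_{\max}(\mathcal{H}(Z))=\lambda_{\max}(\sum_kX_k)$, it suffices to bound $\lambda_{\max}$ of a sum of bounded, centered, symmetric, independent matrices.

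Second, apply the matrix Chernoff bound: for any $\theta>0$, combining $\lambda_{\max}(M)\le \mathrm{tr}\,e^{M}$ with Markov's inequality gives $\mathbb{P}(\lambda_{\max}(\sum_kX_k)\ge t)\le e^{-\theta t}\,\mathbb{E}[\mathrm{tr}\,\exp(\theta\sum_kX_k)]$. The crucial ingredient is the subadditivity of matrix cumulant generating functions — a consequence of Lieb's concavity theorem — which yields $\mathbb{E}[\mathrm{tr}\,\exp(\theta\sum_kX_k)]\le \mathrm{tr}\,\exp\bigl(\sum_k\log\mathbb{E}[e^{\theta X_k}]\bigr)$. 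This is the one genuinely nontrivial step; everything else is bookkeeping, and if one prefers one may simply invoke Theorem~6.1 of Tropp's ``User-Friendly Tail Bounds for Sums of Random Matrices.''

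Third, bound each matrix MGF. Using the scalar estimate $e^{x}-1-x\le \frac{x^2/2}{1-|x|/3}$ for $|x|<3$ and the transfer rule applied to $\theta X_k$ (whose eigenvalues lie in $[-\theta L,\theta L]$, requiring $0<\theta<3/L$), together with $\mathbb{E}[X_k]=0$, gives $\mathbb{E}[e^{\theta X_k}]\preceq I+g(\theta)\,\mathbb{E}[X_k^2]\preceq \exp\bigl(g(\theta)\,\mathbb{E}[X_k^2]\bigr)$ with $g(\theta)=\frac{\theta^2/2}{1-L\theta/3}$. By operator monotonicity of $\log$ and monotonicity of the trace exponential, $\mathrm{tr}\,\exp\bigl(\sum_k\log\mathbb{E}[e^{\theta X_k}]\bigr)\le \mathrm{tr}\,\exp\bigl(g(\theta)\sum_k\mathbb{E}[X_k^2]\bigr)\le (d_1+d_2)\exp\bigl(g(\theta)v(Z)\bigr)$, the last step bounding the trace by the dimension times the largest eigenvalue. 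Hence
\[
\mathbb{P}(\normop{Z}\ge t)\le (d_1+d_2)\exp\bigl(-\theta t+g(\theta)v(Z)\bigr),\qquad 0<\theta<3/L.
\]
Finally, choosing $\theta=\frac{t}{v(Z)+Lt/3}\in(0,3/L)$ gives $-\theta t+g(\theta)v(Z)=-\frac{t^2/2}{v(Z)+Lt/3}$, which is the claimed bound. Beyond invoking Lieb's theorem, the only point to watch is the dimension constant: the dilation inflates a $d_1\times d_2$ problem to dimension $d_1+d_2$, which is precisely why the prefactor is $(d_1+d_2)$.
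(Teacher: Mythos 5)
Your proposal is correct: the dilation step, the Laplace-transform bound with Lieb-based subadditivity of matrix cumulant generating functions, the Bernstein MGF estimate with $g(\theta)=\frac{\theta^2/2}{1-L\theta/3}$, and the choice $\theta=t/(v(Z)+Lt/3)$ all check out and yield exactly the stated tail bound with prefactor $d_1+d_2$. The paper itself offers no argument beyond citing Theorem 1.6.2 of Tropp's monograph, and your proof is precisely the standard argument given there, so the two approaches coincide.
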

        \begin{proof}
          See Theorem 1.6.2 of \citet{tropp2015introduction} for detailed proof.
        \end{proof}
     
\end{document}